\newcommand{\mylabel}[2]{#2\def\@currentlabel{#2}\label{#1}}
\definecolor{dgreen}{rgb}{0,0.5,0}
\newcommand{\multiline}[1]{\parbox[t]{\dimexpr\linewidth-\algorithmicindent}{#1}}
\setlist{nosep}
\newcommand{\nc}{\newcommand}
\nc{\DMO}{\DeclareMathOperator}
\nc\todo[1]{\textcolor{red}{[TODO: #1]}}
\nc\m[2]{m_{#1}(#2)}
\nc{\BR}{\mathbb{R}}
\nc{\BN}{\mathbb{N}}
\nc{\BZ}{\mathbb{Z}}
\nc{\ep}{\varepsilon}
\nc{\ra}{\rightarrow}
\DMO{\KL}{KL}
\DMO{\Unif}{Unif}
\nc{\tr}{\top}
\nc{\MD}{\mathcal{D}}
\nc{\MN}{\mathcal{N}}
\nc{\SD}{\mathscr{D}}
\nc{\la}{\lambda}
\nc{\MR}{\mathcal{R}}
\nc{\MZ}{\mathcal{Z}}
\nc{\MU}{\mathcal{U}}
\nc{\MP}{\mathcal{P}}
\nc{\poly}{\mathrm{poly}}
\DMO{\treesum}{TreeSum}
\DMO{\lapsum}{LapSum}
\DMO{\checksum}{CheckSum}
\nc{\MDts}{\MD_{\treesum}}
\nc{\MDls}{\MD_{\lapsum}}
\nc{\MDcs}{\MD_{\checksum}}
\nc{\MC}{\mathcal{C}}
\nc{\MI}{\mathcal{I}}
\nc{\MT}{\mathcal{T}}
\nc{\MS}{\mathcal{S}}
\nc{\MM}{\mathcal{M}}
\nc{\MX}{\mathcal{X}}
\nc{\MY}{\mathcal{Y}}
\nc{\MA}{\mathcal{A}}
\nc{\MB}{\mathcal{B}}
\nc{\MJ}{\mathcal{J}}
\nc{\MF}{\mathcal{F}}
\nc{\MQ}{\mathcal{Q}}
\nc{\ML}{\mathcal{L}}
\nc{\p}{p}
\nc{\E}{\mathbb{E}}
\nc{\tablesize}{s}
\DMO{\Hist}{hist}
\nc{\hist}{\mathrm{hist}}
\nc{\ba}{\mathbf{a}}
\nc{\bx}{\mathbf{x}}
\nc{\by}{\mathbf{y}}
\nc{\bz}{\mathbf{z}}
\DMO{\sr}{sr}
\DMO{\Med}{Med}
\DMO{\Ber}{Ber}
\DMO{\Bin}{Bin}
\DMO{\Had}{Had}
\nc{\ME}{\mathcal{E}}
\DMO{\View}{View}
\nc{\B}{B}
\nc{\M}{M}
\nc{\ha}{\alpha}
\nc{\hk}{k}
\DMO{\pre}{pre}
\nc{\MH}{\mathcal{H}}
\DMO{\FO}{FO}
\DMO{\CM}{CM}
\DMO{\hb}{\beta}
\nc{\MW}{\mathcal{W}}
\nc{\wnull}{\mathbf{w}^\circ}
\nc{\wshift}{\mathbf{w}}
\newtheorem*{rep@theorem}{\rep@title}
\newcommand{\newreptheorem}[2]{%
\newenvironment{rep#1}[1]{%
 \def\rep@title{#2 \ref{##1}}%
 \begin{rep@theorem}}%
 {\end{rep@theorem}}}
\definecolor{mygreen}{RGB}{0, 200, 0}
\definecolor{myred}{RGB}{100,200,0}
\newcommand{\noah}[1]{\textcolor{brown}{[ng: #1]}}
\newcommand{\dhruv}[1]{\textcolor{myred}{[dr: #1]}}
\newcommand{\js}[1]{\textcolor{purple}{[js: #1]}}
\nc{\One}{\mathbbm{1}}
\newcommand{\stoch}[1][d]{\mathbb{S}^{#1}}
\renewcommand{\^}[1]{^{[#1]}}
\newcommand{\GPC}{\texttt{GPC}\xspace}
\nc{\st}{\star}
\nc{\tvnorm}[2]{\left\| {#1} - {#2} \right\|_1}
\nc{\tnorm}[1]{\left\| {#1} \right\|_1}
\nc{\tmix}{t^{\mathsf{mix}}}
\nc{\norm}[1]{\left\|{#1}\right\|}
\nc{\infonenorm}[1]{\left\| {#1} \right\|_{1 \to 1}}
\nc{\oneonenorm}[1]{\left\|{#1}\right\|_{1\to 1}}
\nc{\twoonenorm}[1]{\left\| {#1} \right\|_{2, 1}}
\nc{\Xgpc}[1][d,H,a_0,\alphaub]{\MX_{#1}}
\nc{\Rgpc}[1][d,H]{R_{#1}}
\nc{\gpcnorm}[2][d,H]{\left\| {#2} \right\|_{#1}}
\nc{\MK}{\mathcal{K}}
\nc{\Ksim}{\MK^\triangle}
\DeclareMathOperator*{\argmin}{arg\,min}
\nc{\Pins}{\Pi^{\mathsf{ns}}}
\nc{\bP}{\mathbf{P}}
\nc{\PolReg}{\textbf{PolReg}}
\nc{\aw}{\bar{w}}
\nc{\ax}{\bar{x}}
\nc{\au}{\bar{u}}
\nc{\sdist}[2][d]{\Delta_{ {#2}}^{#1}}
\nc{\dist}[2][d]{\Delta_{{#2}}^{#1}}
\nc{\sstoch}[2][d]{\mathbb{S}^{#1}_{#2}}
\nc{\Cr}[1]{\mathbb{A}_{#1}}
\nc{\cmp}{\mathrm{c}}
\mathchardef\mhyphen="2D
\nc{\GPCS}{\mathtt{GPC\mhyphen Simplex}\xspace}
\nc{\GPCPOS}{\mathtt{GPC\mhyphen PO\mhyphen Simplex}\xspace}
\nc{\Alg}{\mathtt{Alg}}
\newcommand{\AlgMD}{\mathtt{LazyMD}\xspace}
\DeclareMathOperator{\Ent}{Ent}
\DeclareMathOperator{\TV}{TV}
\nc{\BS}{\mathbb{S}}
\nc{\OGD}{\mathtt{OGD}}
\DMO{\Proj}{Proj}
\nc{\alphalb}{\underline{\alpha}}
\nc{\alphaub}{\overline{\alpha}}
\nc{\alb}{a_0}
\nc{\aub}{a_1}
\nc{\grad}{\nabla}
\nc{\rng}{\rangle}
\nc{\lng}{\langle}
\nc{\vep}{\varepsilon}
\newcommand{\breg}{\mathbf{regret}}
\newcommand{\eh}[1]{\noindent{\textcolor{blue}{\{{\bf EH:} \em #1\}}}}
\newcommand{\zl}[1]{\noindent{\textcolor{red}{\{{\bf ZL:} \em #1\}}}}
\newcommand{\regret}[1][T]{\ensuremath{{\mathcal R}_{#1}}}
\theoremstyle{plain}
\newtheorem{theorem}{Theorem}
\newtheorem{lemma}[theorem]{Lemma}
\newtheorem{corollary}[theorem]{Corollary}
\newtheorem{proposition}[theorem]{Proposition}
\newtheorem{claim}[theorem]{Claim}
\newtheorem{fact}[theorem]{Fact}
\newtheorem{observation}[theorem]{Observation}
\newtheorem{assumption}{Assumption}
\newtheorem*{theorem*}{Theorem}
\newtheorem*{lemma*}{Lemma}
\newtheorem*{corollary*}{Corollary}
\newtheorem*{proposition*}{Proposition}
\newtheorem*{claim*}{Claim}
\newtheorem*{fact*}{Fact}
\newtheorem*{observation*}{Observation}
\newtheorem*{assumption*}{Assumption}
\theoremstyle{definition}
\newtheorem{definition}[theorem]{Definition}
\newtheorem{remark}[theorem]{Remark}
\newtheorem*{definition*}{Definition}
\newtheorem*{remark*}{Remark}
\newtheorem*{example*}{Example}
\newcommand{\ignore}[1]{}
\def\regret{\mbox{{Regret}}}
\def\E{{\mathbb{E}}}
\title{Online Control in Population Dynamics}
\author{
  Noah Golowich\thanks{MIT. \texttt{nzg@mit.edu}.}
  \and Elad Hazan\thanks{
  Google DeepMind \& Princeton University. 
  \texttt{ehazan@princeton.edu}.}
  \and Zhou Lu\thanks{
  Princeton University.
  \texttt{zhoul@princeton.edu}.}
  \and Dhruv Rohatgi\thanks{
  MIT. 
  \texttt{drohatgi@mit.edu}.}
  \and Y. Jennifer Sun\thanks{
  Princeton University. 
  \texttt{ys7849@princeton.edu}.}
}
\begin{document}

\maketitle

\begin{abstract}
The study of population dynamics originated with early sociological works but has since extended into many fields, including biology, epidemiology, evolutionary game theory, and economics. Most studies on population dynamics focus on the problem of \emph{prediction} rather than \emph{control}. Existing mathematical models for control in population dynamics are often restricted to specific, noise-free dynamics, while real-world population changes can be complex and adversarial. 

To address this gap, we propose a new framework based on the paradigm of online control. We first characterize a set of linear dynamical systems that can naturally model evolving populations. We then give an efficient gradient-based controller for these systems, with near-optimal regret bounds with respect to a broad class of linear policies. Our empirical evaluations demonstrate the effectiveness of the proposed algorithm for control in population dynamics even for non-linear models such as SIR and replicator dynamics.
\end{abstract}

\section{Introduction}
Dynamical systems involving populations are ubiquitous in describing processes that arise in natural environments. As one example, the SIR model \citep{kermack1927contribution} is a fundamental concept in epidemiology, used to describe the spread of infectious diseases within a population. It divides the population into three groups \--- Susceptible (S), Infected (I) and 
Removed (R). A \emph{susceptible} individual has not contracted the disease but has the chance to be infected if interacting with an \emph{infected} individual. A \emph{removed} individual either has recovered from the disease and gained immunity or is deceased. The population evolves over time according to three ordinary differential equations: 
\begin{equation} \frac{dS}{dt } = -\beta IS \ , \ \frac{dI}{dt} = \beta IS - \theta I \ , \ \frac{dR}{dt}  = \theta I , \label{eq:sir-intro} \end{equation}
for constants $\beta,\theta > 0$ representing the infection and recovery rate, respectively. Numerous extensions of this basic model have been proposed to better capture how epidemics evolve and spread \citep{britton2010stochastic}. 


Beyond epidemiology, population dynamics naturally arise in many other fields, notably evolutionary game theory \citep{hofbauer1998evolutionary}, biology \citep{freedman1980deterministic, brauer2012mathematical}, and the analysis of genetic algorithms \citep{srinivas1994genetic,katoch2021review}. For any dynamical system, it is natural to ask how it might best be \emph{controlled}. For instance, controlling the spread of an infectious disease while minimizing externalities is a problem of significant societal importance.

In many natural models, these dynamics tend to be nonlinear, so the control problem is often computationally intractable. Existing algorithms are designed on a case-by-case basis by positing a specific system of differential equations, and then numerically or analytically solving for the optimal controller. Unfortunately, this approach is not robust to adversarial shocks to the system and cannot adapt to time-varying cost functions.




\paragraph{A new approach for control in population dynamics.} 

In this paper we propose a generic and robust methodology for control in population dynamics, drawing on the framework and tools from online non-stochastic control theory to obtain a computationally efficient gradient-based method of control. 
In online non-stochastic control, at every time $t = 1,\dots,T$, the learner is faced with a state $x_t$ and must choose a control $u_t$. The learner then incurs cost according to some time-varying cost function $c_t(x_t,u_t)$ evaluated at the current state/control pair, and the state evolves as:
\begin{equation}
x_{t+1} := f(x_t, u_t) + w_t,\label{eq:oc-dynamics}
\end{equation} 
where $f$ describes the (known) discrete-time dynamics, $x_{t+1}$ is the next state, and $w_t$ is an adversarially-chosen perturbation. A \emph{policy} is a mapping from states to controls. The goal of the learner is to minimize \textit{regret} with respect to some rich policy class $\Pi$, formally defined by
\begin{align}
\breg_{\Pi}=\sum_{t=1}^T c_t(x_t,u_t)-\min_{\pi\in\Pi}\sum_{t=1}^T c_t(x_t^{\pi}, u_t^{\pi}),\label{eq:regret-intro}
\end{align}
where $(x_t^{\pi}, u_t^{\pi})$ is the state/control pair at time $t$ had policy $\pi$ been carried out since time $1$. 

As with prior work in online control \citep{agarwal2019online}, our method is theoretically grounded by regret guarantees for a broad class of Linear Dynamical Systems (LDSs).
The key algorithmic and technical challenge we overcome is that \textbf{prior methods only give regret bounds against comparator policies that \emph{strongly stabilize} the LDS} (\cref{def:stable-lds}). Such policies force the magnitude of the state to decrease exponentially fast in the absence of noise. Unfortunately, for applications to population dynamics, even the assumption that such policies \emph{exist}  \--- let alone perform well \--- is fundamentally unreasonable, since it essentially implies that the population can be made to exponentially shrink.

A priori, one might hope to generically overcome this issue, by broadening the comparator class to all policies that \emph{marginally stabilize} the LDS (informally, these are policies under which the magnitude of the state does not blow up). But we show that, in general, it is impossible to achieve sub-linear regret against that class \--- a result that may be of independent interest in online control:\footnote{\citep{hazan2017learning,hazan2018spectral} showed that the \emph{prediction} task in a marginally stable LDS can be solved with sublinear regret via spectral filtering if the state transition matrix is symmetric. The construction for \cref{thm:lds-lower-bound} has symmetric transition matrices, so the result in a sense separates analogous prediction and control tasks.}

\begin{theorem}[Informal statement of \cref{thm:lb-marginally-stable}]
\label{thm:lds-lower-bound}
There is a distribution $\MD$ over LDSs with state space and control space given by $\BR$, such that any online control algorithm on a system $\ML \sim \MD$ incurs expected regret $\Omega(T)$ against the class of time-invariant linear policies that marginally stabilize $\ML$.
\end{theorem}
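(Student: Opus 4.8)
The plan is to use a two-point, Le Cam-style prior. I would let $\MD$ be supported on two scalar systems $\ML_+$ and $\ML_-$ of the form $x_{t+1}=ax_t+b_{\pm}u_t+w_t$ with a fixed $a\ge 1$ and a hidden parameter (e.g. $b_+=-b_-$, a hidden sign, or more robustly a hidden magnitude), together with an adversarial rule for the perturbations $w_t$ and the time-varying costs $c_t$. The costs are chosen (say $c_t(x,u)=(\lvert x\rvert-1)^2$, or $c_t(x,u)=(x-r_t)^2$ for a slowly varying reference) so that for each realized system the unique near-optimal comparator is a linear policy $u_t=-K^\star x_t$ sitting exactly on the stability boundary, $\lvert a-b_{\pm}K^\star\rvert=1$ — not in the interior — whose trajectory hugs the target and hence achieves $\sum_t c_t(x_t^{\pi^\star},u_t^{\pi^\star})=o(T)$; the perturbation magnitude $W$ is taken to be a small inverse polynomial in $T$ precisely so that this boundary trajectory does not drift. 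Since $K^\star=(a-1)/b_{\pm}$ depends on the hidden parameter, the comparator is effectively handed the true system while the learner is not.

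Next I would argue that the learner must know the system almost exactly to be competitive. Because $\pi^\star$ is a boundary policy, any gain with $\lvert a-b_{\pm}K\rvert<1$ contracts the state to magnitude $O(W)$ and therefore pays $\Omega(1)$ per step once contracted, while any gain with $\lvert a-b_{\pm}K\rvert>1$ triggers exponential blow-up. Hence to keep per-step cost $o(1)$ the learner must place $\lvert a-b_{\pm}K\rvert$ within $O(1/T)$ of the boundary, i.e. identify $(a,b_{\pm})$ to precision $O(1/T)$; moreover, since it does not know which side of the boundary is safe, even a slight misestimate risks blow-up, so it cannot simply ``play it safe'' without incurring the contraction penalty.

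Finally, the core step: show that identification to this precision is impossible within a sublinear cost budget. I would make the perturbations adversarial and coupled to the learner's inputs — whenever $u_t$ is applied, $w_t$ is set to partially cancel the informative part of $b_{\pm}u_t$ — so that the information gained about the hidden parameter is bounded by the injected ``excitation,'' which is in turn lower bounded by accumulated state cost. Coupling the executions of the algorithm on $\ML_+$ and $\ML_-$ under the matched perturbations and bounding the total-variation distance of the resulting observation sequences, one gets that as long as the learner's excitation (hence cost) is $o(\mathrm{poly}(T))$, the two executions are $o(1)$-indistinguishable; the learner therefore plays essentially the same control law on both, which is (boundary-)stabilizing for at most one of them. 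On the other system the accumulated cost exceeds OPT by $\Omega(T)$, and this occurs with probability $\ge \tfrac12$ under $\MD$, giving $\E_{\ML\sim\MD}[\breg_{\Pi}]=\Omega(T)$ (the transition maps being scalars, they are symmetric, as the footnote requires). The step I expect to be hardest is exactly this last one: for scalar systems with bounded perturbations and unrestricted controls, a single large input can in principle reveal the dynamics, so the construction must make identification genuinely expensive and must balance the perturbation scale, the $1/T$ precision demand, and the cost of excitation so that no split of the learner's effort between exploration and exploitation yields $o(T)$ regret — while simultaneously keeping the comparator's boundary-policy cost $o(T)$. Closing that trade-off cleanly may well require a richer prior than two points (a continuum of hidden parameters, or perturbations drawn as part of the random instance) rather than the simple sketch above.
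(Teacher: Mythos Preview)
Your proposal has a fundamental mismatch with the problem setup. In the online nonstochastic control framework used here (and made explicit in the formal statement, \cref{thm:lb-marginally-stable}), the transition matrices $A,B$ are \emph{known to the algorithm} --- it receives them as input. The adversarial/random components of an LDS are the perturbations $(w_t)_t$ and costs $(c_t)_t$. So hiding the sign or magnitude of $b_{\pm}$ does not create uncertainty for the learner: once the instance is drawn, the learner sees $b_{\pm}$ and can immediately play the boundary policy you describe. Your identification argument never gets off the ground. You could instead try to hide information in the perturbations, but then the ``single large input reveals the dynamics'' problem you flag disappears (there is nothing to identify), and you would need a different obstruction entirely.

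The paper's construction is much simpler and places the randomness where it belongs: in the perturbation sequence. It fixes $A=1$, $B=-\beta/T$ (both known), $x_1=1$, and costs $c_t(x,u)=|x|+|u|$ for $t>T/2$ and $0$ before. The two instances differ only in whether a single perturbation $w_{T/2}=-1$ occurs. In $\ML^1$ (perturbation present) the best policy is $K=0$: do nothing, and the perturbation lands you at $x=0$ exactly when costs start. In $\ML^0$ (no perturbation) the best policy is $K=1$: contract toward $0$ so that by time $T/2$ the state is $e^{-\Theta(\beta)}$. Both are marginally stabilizing ($|A+BK|\le 1$). The learner cannot distinguish the systems until step $T/2$, so its state just before the perturbation is the same in both; after the perturbation, $x_{T/2+1}$ differs by $1$ across the two instances, and hence $\E|x_{T/2+1}|\ge 1/4$ regardless of what the learner did. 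The crucial trick is that $|B|=O(1/T)$ and the cost includes $|u|$, so after time $T/2$ the learner cannot move the state more than $O(1/T)$ per step without paying $\Omega(1)$ in control cost --- a simple triangle-inequality argument then shows $\sum_{t>T/2}(|x_t|+|u_t|)\gtrsim (T/\beta)|x_{T/2+1}|$. No information-theoretic coupling or excitation budget is needed; the obstruction is purely that the optimal action in the first half depends on a perturbation the learner has not yet seen, and the small $B$ prevents recovery afterward.
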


For general LDSs, it's not obvious if there is a natural ``intermediate'' comparator class that does not require strong stabilizability and does enable control with low regret. However, systems that model \emph{populations} possess rich additional structure, since they can be interpreted as controlled Markov chains.\footnote{Markov decision processes can also be thought of as controlled Markov chains. However, in that setting the controls/actions are at the individual level, whereas we are concerned with controls at the population level, as motivated by applications to epidemiology, evolutionary game theory, and other fields.} In this paper, leveraging that structure, we design an algorithm $\GPCS$ for online control that applies to LDSs constrained to the \emph{simplex} (\cref{def:simplex-lds}), and achieves strong regret bounds against a natural comparator class of policies with bounded \emph{mixing time} (\cref{def:stable-simplex-lds}).



\ignore{ 

For millennia, the challenge of population control has stood as a central issue to human society.
In the golden age of Greece, Plato and Aristotle have recognized population regulation as a key component for stability and balance of resources, via controlling immigration and birth rates.
Ever since, population control has been accepted as an essential part of governance across the world.

Despite the importance of population control, theory remained largely heuristic for a long time. 
In 1798, Thomas Malthus introduced a foundational model on population growth asserting that population growth is potentially exponential, which can lead to a catastrophic shortfall with the linear growth of resources -- a Malthusian trap, as he put it in $\textit{An Essay on the Principle of Population}$:

\begin{quoting}
    \emph{``Population, when unchecked, increases in a geometrical ratio. Subsistence only increases in an arithmetical ratio."}
\end{quoting}

Following Malthus, population dynamics became a focal point of theoretical exploration, leading to the development of models such as SIR and Lotka–Volterra.
However, these models focused on the intrinsic growth aspect of population, falling short of addressing the complex issue of control. Existing mathematical approaches to the population control problem are often limited to special cases, such as Rivest. In addition, they typically assume noise-less or stochastic dynamics, while the nature of population changing are usually non-stochastic.

For a more general theory of population control, we propose a novel approach by casting this problem under the online non-stochastic control framework, which captures general linear dynamics and non-stochastic perturbations. Technical summary tbd
}

\subsection{Our Results}

Throughout this work, we model a \emph{population} as a distribution over $d$ different categories, evolving over $T$ discrete timesteps. 
For simplicity, we assume that $u_t$ is a $d$-dimensional real vector.

\paragraph{Theoretical guarantees for online control in population dynamics.} We introduce the \emph{simplex LDS} model (\cref{def:simplex-lds}), which is a modification of the standard LDS model (\cref{def:lds-def}) that ensures the states $(x_t)_t$ always represent valid distributions, i.e. never leave the simplex $\Delta^d$. Informally, given state $x_t \in \Delta^d$ and control $u_t \in \BR_{\geq 0}^d$ with $\norm{u_t}_1 \leq 1$, the next state is
\[x_{t+1} = (1-\gamma_t) \cdot [(1 - \norm{u_t}_1) Ax_t + Bu_t] + \gamma_t \cdot w_t,\]
where $A,B$ are known stochastic matrices, $\gamma_t \in [0,1]$ is the observed \emph{perturbation strength},\footnote{See \cref{sec:obs-model} for discussion about this modelling assumption.} and $w_t \in \Delta^d$ is an unknown perturbation. The perturbation $w_t$ can be interpreted as representing an adversary that can add individuals from a population with distribution $w_t$ to the population under study. 
Intuitively, $u_t$ represents a distribution over $d$ possible interventions as well as a ``null intervention''. 

For any simplex LDS $\ML$ and mixing time parameter $\tau>0$, we define a class $\Ksim_\tau(\ML)$ (\cref{def:stable-simplex-lds}), which roughly consists of the linear time-invariant policies under which the state of the system would mix to stationarity in time $\tau$, in the absence of noise. Our main theoretical contribution is an algorithm $\GPCS$ that achieves low regret against this policy class:

\begin{theorem}[Informal version of \cref{thm:stoch-lds-regret}]\label{thm:main-informal}
Let $\ML$ be a simplex LDS on $\Delta^d$, and let $\tau>0$. For any adversarially-chosen perturbations $(w_t)_t$, perturbation strengths $(\gamma_t)_t$, and convex and Lipschitz cost functions $(c_t)_t$, the algorithm $\GPCS$ performs $T$ steps of online control on $\ML$ with regret $\tilde{O}(\tau^{7/2}\sqrt{dT})$ against $\Ksim_\tau(\ML)$.
\end{theorem}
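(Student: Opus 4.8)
The plan is to port the disturbance-action-controller methodology of online nonstochastic control \citep{agarwal2019online} to the simplex setting, with the mixing time $\tau$ taking over the role played by the stability margin in prior work. At each step $t$ I would expose the observed \emph{effective perturbation} $\hat w_t := x_{t+1} - (1-\tnorm{u_t})Ax_t - Bu_t$ (which satisfies $\tnorm{\hat w_t}\le 2\gamma_t\le 2$), and have $\GPCS$ play controls of disturbance-action form $u_t = b + \sum_{i=1}^{H} M^{(i)}\hat w_{t-i}$, where $(b,M^{(1)},\dots,M^{(H)})$ ranges over a convex parameter set $\MM$ with $H=\tilde O(\tau)$. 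The crux — and the reason the standard analysis does not apply out of the box — is that no policy strongly stabilizes $\ML$, so I would instead extract a contraction from the simplex structure itself: constrain $\GPCS$ to always keep $\tnorm{u_t}\ge\Omega(1/\tau)$ (e.g.\ by reserving a $\Theta(1/\tau)$ fraction of the control budget for a fixed null/uniform intervention). Since $A$ is column-stochastic, the homogeneous part of the update then obeys $\tnorm{(1-\gamma_t)(1-\tnorm{u_t})Ax}\le(1-\Omega(1/\tau))\tnorm x$, i.e.\ the closed loop contracts in $\ell_1$ by a factor $1-\Omega(1/\tau)$ each step; and by definition of $\Ksim_\tau(\ML)$ the same holds for the transient part (the deviation from the rank-one stationary part $\mathbf 1\pi_K^\top$) of any comparator's closed-loop operator.

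Next I would show $\MM$ contains an accurate surrogate for every $K\in\Ksim_\tau(\ML)$. Unrolling $K$ expresses its control at time $t$ as an affine function of $\hat w_{t-1},\hat w_{t-2},\dots$ with coefficients decaying geometrically at rate $1-\Omega(1/\tau)$, the stationary component contributing only a constant absorbed into $b$. Truncating at $H=\Theta(\tau\log(\tau T))$ produces parameters in $\MM$ with $\sum_i\oneonenorm{M^{(i)}}=O(\tau)$, and one checks that running this truncated controller shadows $K$'s trajectory to within $\poly(\tau)(1-\Omega(1/\tau))^H\le\poly(\tau)/T$ in $\ell_1$ at every step, hence to within $\tilde O(\poly(\tau))$ in total cost since the $c_t$ are $O(1)$-Lipschitz. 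This also pins down $\mathrm{diam}(\MM)=\tilde O(\tau\sqrt d)$ in the Euclidean metric.

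With those two ingredients, I would reduce to online convex optimization with memory and run $\OGD$. Define the idealized loss $f_t(M_{t-H},\dots,M_t)$ as $c_t(\tilde x_t,\tilde u_t)$ obtained by unrolling the disturbance-action controller forward $H$ steps from a fixed reset state. Two structural facts: (i) because the $\hat w$'s and the $M^{(i)}$'s are nonnegative, $\tnorm{u_s}=\mathbf 1^\top u_s$ is \emph{linear} in the parameters, so $\tilde x_t$ is affine in $(M_{t-H},\dots,M_t)$ and $f_t$ is convex; (ii) the $1-\Omega(1/\tau)$ contraction makes $|f_t(M_{t-H},\dots,M_t)-c_t(x_t,u_t)|\le(1-\Omega(1/\tau))^H\le 1/\poly(T)$ and makes $f_t$ Lipschitz in each slot with constant $L=O(\tau)$ (the derivative propagated through $j$ further contracting steps contributes a geometric series summing to $O(\tau)$). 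Running $\OGD$ over $\MM$ with step size $\eta$, the with-memory analysis yields
\begin{equation*}
\breg_{\Ksim_\tau(\ML)}\;\le\;\frac{\mathrm{diam}(\MM)^2}{2\eta}\;+\;O\!\big(\eta\,H^{3}L^{2}\,T\big)\;+\;\tilde O(\poly(\tau)),
\end{equation*}
the second term bounding both the OCO regret on the diagonal losses $M\mapsto f_t(M,\dots,M)$ and the iterates' movement cost $\sum_t\sum_{j\le H}\|M_t-M_{t-j}\|\le\sum_t O(\eta H^{3}L)$, and the last term collecting the approximation and reset errors. Optimizing $\eta=\Theta(\mathrm{diam}(\MM)/(LH^{3/2}\sqrt T))$ gives $\breg=\tilde O(\mathrm{diam}(\MM)\cdot L\cdot H^{3/2}\cdot\sqrt T)$, and substituting $\mathrm{diam}(\MM)=\tilde O(\tau\sqrt d)$, $L=O(\tau)$, $H=\tilde O(\tau)$ produces $\tilde O(\tau^{7/2}\sqrt{dT})$; the $\tau^{7/2}$ splits as one power from the size of the disturbance-action weights needed to realize a mixing-time-$\tau$ policy, one power from the Lipschitz constant of the unrolled cost, and $\tau^{3/2}$ from the memory length $H$ in the OGD-with-memory bound.

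The hard part will be the first two steps. Without strong stability there is no a priori reason for either the learner's or a comparator's closed loop to forget its past, and the argument that the simplex constraint together with an $\Omega(1/\tau)$ floor on the control mass buys a uniform $1-\Omega(1/\tau)$ contraction — while simultaneously keeping the disturbance-action class convex, feasible ($u_t\in\BR^d_{\ge 0}$ with $\tnorm{u_t}\le 1$), and expressive enough to approximate every $K\in\Ksim_\tau(\ML)$, including reconciling the $\sum_i\oneonenorm{M^{(i)}}=O(\tau)$ magnitude with the feasibility budget — is exactly where the structure of the simplex model must be used, and is the most delicate part of the proof.
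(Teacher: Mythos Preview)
Your high-level architecture --- disturbance-action controls, approximate the comparators, then online convex optimization plus a movement-cost term --- matches the paper, but the mechanism you propose for obtaining a contraction is not the one the paper uses and would fail in an important regime. You extract contraction by enforcing $\|u_t\|_1\ge\Omega(1/\tau)$ so that the homogeneous factor $(1-\|u_t\|_1)A$ shrinks $\ell_1$ mass by $1-\Omega(1/\tau)$ per step. But the theorem must hold for \emph{any} valid control set $\MI=\bigcup_{\alpha\in[\alphalb,\alphaub]}\Delta^d_\alpha$, including ones with $\alphaub\ll 1/\tau$, where reserving $\Omega(1/\tau)$ control mass is simply infeasible. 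The paper's key technical device is a dichotomy on $\tmix(A)$: if $A$ itself mixes within $4\tau$ steps, then powers $A^{i}$ converge to a rank-one matrix and the dependence on old disturbances washes out regardless of how small the control is; otherwise, a short perturbation argument (\cref{lem:kstar-casework}) shows that \emph{every} $K\in\Ksim_\tau(\ML)$ must satisfy $\oneonenorm{K}\ge 1/(96\tau)$, which forces $\alphaub\ge 1/(96\tau)$ whenever the comparator class is nonempty --- and only in this branch does the algorithm set a control-mass floor $a_0\ge\Omega(1/\tau)$ (\cref{line:choose-a0}). Both branches of this casework appear in the memory-mismatch bound (\cref{lem:mem-mismatch-gpc}) and in the Lipschitz bound (\cref{lem:lt-lip}); your plan has only the second branch.

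There is a second issue in your structural claim (i). Even granting that $\mathbf 1^\top u_s$ is linear in the parameters, the simplex dynamics contain the cross-term $(\mathbf 1^\top u_s)\cdot Ax_s$; unrolling produces products of the form $\prod_j(1-\alpha_{t-j})\cdot u_{t-i}$ that are multilinear in $(M_{t-H},\dots,M_t)$, hence polynomial --- not affine --- on the diagonal, so $f_t(M,\dots,M)$ need not be convex. The paper's parametrization is set up differently: the domain $\Xgpc$ ties all column sums to the single scalar $\|p\|_1$, so $\|u_t\|_1=\|p\|_1$ is independent of $M^{[1:H]}$ and of the disturbances, and the controls are genuine convex combinations $u_t=\lambda_{t,0}p+\sum_j\lambda_{t,j}M^{[j]}w_{t-j}$ of the exogenous $w_{t-j}\in\Delta^d$ (recoverable because $\gamma_t$ is observed, \cref{line:compute-wt}), which makes feasibility $u_t\in\Delta^d_{\|p\|_1}\subset\MI$ automatic. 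Your $\hat w_t$, by contrast, depends on the learner's own $x_t,u_t$ and hence differs from the sequence a comparator $K$ would generate, which complicates the step where you express $u_t^K$ as an affine function of the learner's past disturbances.
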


Finally, analogous to \cref{thm:lds-lower-bound}, we show that the mixing time assumption cannot be removed: it is impossible to achieve sub-linear regret (for online control of a simplex LDS) against the class of all linear time-invariant policies (\cref{thm:lb-simplex-main}).


\paragraph{Experimental evaluations.} To illustrate the practicality of our results, we apply (a generalization of) $\GPCS$ to controlled versions of (a) the SIR model for disease transmission (\cref{sec:experiments}), and (b) the replicator dynamics from evolutionary game theory (\cref{sec:replicator}). 
In the former, closed-form optimal controllers are known in the absence of perturbations \citep{ketcheson2021optimal}. We find that $\GPCS$ \emph{learns} characteristics of the optimal control (e.g. the ``turning point'' phase transition where interventions stop once herd immunity is reached). Moreover, our algorithm is robust even in the presence of adversarial perturbations, where previous theoretical results no longer apply. In the latter, we demonstrate that even when the control affects the population only indirectly, through the replicator dynamics \emph{payoff matrix}, $\GPCS$ can learn to control the population effectively, and is more robust to noisy cost functions than a one-step best response controller.

\subsection{Related work}
\paragraph{Online non-stochastic control.} In recent years, the machine learning community has witnessed an increasing interest in non-stochastic control problems (e.g. \citep{agarwal2019online, simchowitz2020improper, hazan2020nonstochastic}). Unlike the classical setting of stochastic control, in non-stochastic control the dynamics are subject to time-varying, adversarially chosen perturbations and cost functions. See \citep{hazan2022introduction} for a survey of prior results. 
Most relevant to our work is the Gradient Perturbation Controller (\texttt{GPC}) for controlling general LDSs \citep{agarwal2019online}. All existing controllers only provide provable regret guarantees against policies that strongly stabilize the system. 

\paragraph{Population growth models.} There is extensive research on modeling the evolution of populations in sociology, biology and economics. Besides the early work of \citep{malthus1872essay}, notable models include the SIR model from epidemiology \citep{kermack1927contribution}, the Lotka–Volterra model for predator-prey dynamics \citep{lotka2002contribution, volterra1926fluctuations} and the replicator dynamics from evolutionary game theory \citep{hofbauer1998evolutionary}. Recent years have seen intensive study of controlled versions of the SIR model \--- see e.g. empirical work \citep{cooper2020sir}, vaccination control models \citep{elhia2013optimal}, and many others \citep{gatto2021optimal, el2018optimal, ledzewicz2011optimal, grigorieva2016optimal}. Most relevant to our work is the \emph{quarantine control model}, where the control reduces the effective transmission rate. Some works consider optimal control in the noiseless setting \citep{ketcheson2021optimal, balderrama2022optimal}; follow-up work \citep{molina2022optimal} considers a budget constraint on the control. None of these prior works can handle the general case of adversarial noise and cost functions. 



\section{Definitions and setup}
\label{sec:def-and-setup}
\paragraph{Notation.} Denote $\stoch := \left\{ M \in [0,1]^{d \times d} \ : \ \sum_{i=1}^d M_{i,j} = 1 \ \forall j \in [d] \right\}$ as the set of $d \times d$ column-stochastic matrices. For $a > 0$, define $\sstoch{a} := \{ a \cdot M \ : \ M \in \stoch\}$ and $\sstoch{\leq a} := \bigcup_{0 \leq a' \leq a} \sstoch{a'}$. 
Let $\Delta^d$ denote the simplex in $\mathbb{R}^d$. Similarly, we define $\dist{\alpha} := \alpha \cdot \dist{}$ and $\dist{\le \alpha} := \bigcup_{0\leq \alpha' \leq \alpha} \dist{\alpha'}$. 
Given a square matrix $M\in \BR^{d \times d}$, let $M_{\cdot, j}$ denote the $j$th column of $M$. We consider the following matrix norms: $\| M \|$ denotes the spectral norm of $M$, 
$\| M \|_{2,1}^2 := \sum_{j=1}^d \| M_{\cdot, j} \|_1^2$  is the sum of the squares of the $\ell_1$ norms of the columns of $M$, and $\oneonenorm{M} := \sup_{x \in \BR^d: \norm{x}_1 = 1} \norm{Mx}_1$. 

\subsection{Dynamical systems}

The standard model in online control is the \emph{linear dynamical system (LDS)}. We define a \emph{simplex LDS} to be an LDS where the state of the system always lies in the simplex. This requires enforcing certain constraints on the transition matrices, the control, and the noise:

\begin{definition} [Simplex LDS]
\label{def:simplex-lds}
Let $d \in \BN$. A \emph{simplex LDS on $\Delta^d$} is a tuple 
$$\ML = (A, B, \MI, x_1, (\gamma_t)_{t \in \BN}, (w_t)_{t \in \BN}, (c_t)_{t\in\BN}),$$ 
where $A,B\in \mathbb{S}^d$ are the \emph{transition matrices}; $\MI\subseteq \Delta^d_{\leq 1}$ is the \emph{valid control set}; $x_1 \in \Delta^d$ is the \emph{initial state}; $\gamma_t \in [0,1]$, $w_t \in \Delta^d$ are the \emph{noise strength} and \emph{noise value} at time $t$; and $c_t: \Delta^d \times \MI \to \BR$ is the \emph{cost function} at time $t$. These parameters define a dynamical system where the state at time $t=1$ is $x_1$. 
For each $t \geq 1$, given state $x_t$ and control $u_t\in \mathcal{I}$ at time $t$, the state at time $t+1$ is
\begin{align}
x_{t+1}=(1-\gamma_t)\cdot [(1-\|u_t\|_1)Ax_t+ Bu_t]+\gamma_t\cdot w_t,
\label{eq:slds-update}
\end{align}
and the cost incurred at time $t$ is $c_t(x_t,u_t)$.
\end{definition}


Note that since the set of possible controls $\MI$ is contained in $\Delta_{\leq 1}^d$, the states $(x_t)_t$ are guaranteed to remain within the simplex for all $t$. In this paper, we will assume that $\MI = \bigcup_{\alpha \in [\alphalb, \alphaub]} \Delta_\alpha^d$, for some parameters $\alphalb, \alphaub \in [0,1]$, which represent lower and upper bounds on the strength of the control.

\paragraph{Online non-stochastic control.} Let $\ML = (A,B,x_1,(\gamma_t)_{t\in\BN},(w_t)_{t\in\BN},(c_t)_{t\in\BN}, \mathcal{I})$ be a simplex LDS and let $T \in \BN^+$. We assume that the transition matrices $A,B$ are known to the controller at the beginning of time, but the perturbations $(w_t)_{t=1}^T$ are unknown. At each step $1 \leq t \leq T$, the controller observes $x_t$ and $\gamma_t$, plays a control $u_t \in \MI$, and then observes the cost function $c_t$ and incurs cost $c_t(x_t,u_t)$. The system then evolves according to \cref{eq:slds-update}. Note that our assumption that the controller observes $\gamma_t$ contrasts with some of the existing work on nonstochastic control \citep{agarwal2019online,hazan2022introduction}, in which no information about the adversarial disturbances is known. In \cref{sec:obs-model}, we justify the learner's ability to observe $\gamma_t$ by observing that in many situations, the learner observes the \emph{counts} of individuals in a populations (in addition to their \emph{proportions}, represented by the state $x_t$), and that this additional information allows computation of $\gamma_t$. 

The goal of the controller is to minimize regret with respect to some class $\MK = \MK(\ML)$ of comparator policies. Formally, for any fixed dynamical system and any time-invariant and Markovian policy $K: \Delta^d \to \MI$, let $(x_t(K))_t$ and $(u_t(K))_t$ denote the counterfactual sequences of states and controls that would have been obtained by following policy $K$. Then the regret of the controller on observed sequences $(x_t)_t$ and $(u_t)_t$ with respect to $\MK$ is
\[
\breg_\MK := \sum_{t=1}^T c_t(x_t, u_t) - \inf_{K \in \MK} \sum_{t=1}^T c_t(x_t(K), u_t(K)).
\]

The following assumption on the cost functions of $\ML$ is standard in online control \citep{agarwal2019online, agarwal2019logarithmic, simchowitz2020improper, simchowitz2020making}:

\begin{assumption}
    \label{asm:convex-cost}
    The cost functions $c_t : \Delta^d \times \MI \to \BR$ are convex and $L$-Lipschitz, in the following sense: for all $x,x' \in \Delta^d$ and $u,u' \in \MI$, we have
 $| c_t(x,u) - c_t(x',u')| \leq L \cdot (\tnorm{x-x'} + \tnorm{u-u'}).$
  \end{assumption}




\subsection{Comparator class and spectral conditions}
\label{subsec:comparator-class}

In prior works on non-stochastic control for linear dynamical systems \citep{agarwal2019online}, the comparator class $\MK = \MK_{\kappa,\rho}(\ML)$ is defined to be the set of linear, time-invariant policies $x \mapsto Kx$ where $K \in \BR^{d\times d}$ $(\kappa,\rho)$-\emph{strongly stabilizes} $\ML$:

\begin{definition}
  \label{def:stable-lds}
  A matrix $M \in \BR^{d \times d}$ is $(\kappa, \rho)$-\emph{strongly stable} if there is a matrix $H \in \BR^{d \times d}$ so that $\| H^{-1} MH \| \leq 1-\rho$ and $\| M \|, \| H \|, \| H^{-1} \| \leq \kappa$. 
  A matrix $K \in \BR^{d\times d}$ is said to $(\kappa,\rho)$-\emph{strongly stabilize} an LDS with transition matrices $A, B \in \BR^{d\times d}$ if $A+BK$ is $(\kappa, \rho)$-strongly stable. 
\end{definition}

The regret bounds against $\MK_{\kappa,\rho}(\ML)$ scale with $\rho^{-1}$, and so are vacuous for $\rho = 0$ \citep{agarwal2019online}. Unfortunately, in the simplex LDS setting, no policies satisfy the analogous notion of strong stability (see discussion in \cref{sec:theory}) unless $\rho=0$.   
Intuitively, the reason is that a $(\kappa,\rho)$-strongly stable policy with $\rho>0$ makes the state converge to $0$ in the absence of noise.


What is a richer but still-tractable comparator class for a simplex LDS? We propose the class of linear, time-invariant policies under which (a) the state of the LDS \emph{mixes}, when viewed as a distribution, and (b) the level of control $\norm{u_t}_1$ is independent of the state $x_t$. Formally, we make the following definitions:

\begin{definition}
\label{def:mixing-time}
Given $t \in \BN$ and a matrix $X \in \stoch$ with unique stationary distribution $\pi \in \Delta^d$, we define $D_X(t) := \sup_{p\in \Delta^d} \tnorm{X^t p - \pi}$ and $\bar D_X(t) := \sup_{p,q \in \Delta^d} \tnorm{X^t \cdot (p-q)}$. 
Moreover we define $\tmix(X,\ep) := \min_{t \in \BN} \{ t \ : \ D_X(t) \leq \ep \}$ for each $\epsilon>0$, and we write $\tmix(X) := \tmix(X,1/4)$.\footnote{If $X$ does not have a unique stationary distribution, we say that all of these quantities are infinite.}
\end{definition}

\begin{definition} [Mixing a simplex LDS] 
\label{def:stable-simplex-lds}
Let $\ML$ be a simplex LDS with transition matrices $A,B \in \stoch$ and control set $\MI = \bigcup_{\alpha\in[\alphalb,\alphaub]} \Delta^d_\alpha$. A matrix $K \in \sstoch{[\alphalb,\alphaub]}$ is said to \emph{$\tau$-mix} $\ML$ if $\tmix(\Cr{K}) \leq \tau,$ where
\begin{equation} \Cr{K} := (1 - \oneonenorm{K})\cdot A + BK \in \stoch.\label{eq:crk-def}\end{equation}
We define the comparator class $\Ksim_\tau = \Ksim_\tau(\ML)$ as the set of linear, time-invariant policies $x\mapsto Kx$ where $K \in \sstoch{[\alphalb,\alphaub]}$ $\tau$-mixes $\ML$. 
\end{definition}

Notice that for any $K \in \sstoch{[\alphalb,\alphaub]}$, the linear policy $u_t := Kx_t$ always plays controls in the control set $\MI$, and the dynamics \cref{eq:slds-update} under this policy can be written as $x_{t+1} = (1-\gamma_t) \cdot \Cr{K} x_t + \gamma_t \cdot w_t$.


\section{Online Algorithm and Theoretical Guarantee}\label{sec:theory}

In this section, we describe our main upper bound and accompanying algorithm for the setting of online control in a simplex LDS $\ML$. As discussed above, we assume that the set of valid controls is given by $\MI = \bigcup_{\alpha \in [\alphalb, \alphaub]} \Delta_\alpha^d$, for some constants $0 \leq \alphalb \leq \alphaub  \leq 1$, representing lower and upper bounds on the strength of the control.\footnote{While our techniques allow some more general choices for $\MI$, we leave a full investigation of general $\MI$ for future work.}


For convenience, we write $\alpha_t := \tnorm{u_t}$ and $u_t' =  u_t/\alpha_t \in \Delta^d$ (if $\alpha_t=0$, we set $u_t' : =  0$). 
The dynamical system \cref{eq:slds-update} can then be expressed as follows: 
\begin{align}
  x_{t+1} = (1-\gamma_t) \cdot ((1-{\alpha_t}) \cdot Ax_t + \alpha_t \cdot Bu'_t) + \gamma_t \cdot w_t\label{eq:pop-control}.
\end{align}

We aim to obtain a regret guarantee as in \cref{eq:regret-intro} with respect to some rich class of \emph{comparator policies} $\Ksim$. As is typical in existing work on linear nonstochastic control, we take $\Ksim$ to be a class of time-invariant \emph{linear} policies, i.e. policies that choose control $u_t :=  Kx_t$ at time $t$ for some matrix $K \in \BR^{d \times d}$. 
In the standard setting of nonstochastic control, it is typically further assumed that all policies in the comparator class strongly stabilize the LDS (\cref{def:stable-lds}).\footnote{Such an assumption cannot be dropped in light of \cref{thm:lds-lower-bound}.} The naive generalization of such a requirement in our setting would be that $\Cr{K}$ is strongly stable; however, this is impossible, since no stochastic matrix can be strongly stable. Instead, we aim to compete against the class $\Ksim = \Ksim_\tau(\ML)$ of time-invariant linear policies that (a) have fixed level of control in $[\alphalb,\alphaub]$, and (b) $\tau$-mix $\ML$ (\cref{def:stable-simplex-lds}). We view the second condition as a natural distributional analogue of strong stabilizability; the first condition is needed for $\tau$-mixing to even be well-defined.

\paragraph{Algorithm description.} Our main algorithm, $\GPCS$ (\cref{alg:gpc}), is a modification of the \GPC algorithm \citep{agarwal2019online,hazan2022introduction}. As a refresher, \GPC chooses the controls $u_t$ by learning a \emph{disturbance-action policy}: a policy $u_t := \bar{K} x_t + \sum_{i=1}^H M^{[i]} w_{t-i}$, where $\bar{K}$ is a known, fixed matrix that strongly stabilizes the LDS; $w_{t-1},\dots,w_{t-H}$ are the recent noise terms; and $M^{[1]},\dots,M^{[H]}$ are learnable, matrix-valued parameters which we abbreviate as $M^{[1:H]}$. The key advantage of this parametrization of policies (as opposed to a simpler parametrization such as $u_t = Kx_t$ for a parameter $K$) is that the entire trajectory is \emph{linear} in the parameters, and not a high-degree polynomial. Thus, optimizing the cost of a trajectory over the class of disturbance-action policies is a convex problem in $M^{[1:H]}$. 

But why is the class of disturbance-action policies expressive enough to compete against the comparator class? This is where \GPC crucially uses strong stabilizability. Notice that in the absence of noise, every disturbance-action policy is identical to the fixed policy $u_t := \bar{K} x_t$. This is fine when $\bar{K}$ and the comparator class are strongly stabilizing, since in the absence of noise, \emph{all} strongly stabilizing policies rapidly force the state to $0$, and thus incur very similar costs in the long run. But in the simplex LDS setting, strong stabilizability is impossible. While all policies in $\Ksim$ mix the LDS, they may mix to different states, which may incur different costs. There is no reason to expect that an arbitrary $\bar K \in \Ksim$, chosen before observing the cost functions, will have low regret against all policies in $\Ksim$.

We fix this issue by enriching the class of disturbance-action policies with an additional parameter $p \in \Delta^d$ which, roughly speaking, represents the desired stationary distribution to which $x_t$ would converge, in the absence of noise, as $t \to \infty$. It is unreasonable to expect prior knowledge of the optimal choice of $p$, which depends on the not-yet-observed cost functions. Thus, $\GPCS$ instead \emph{learns} $p$ together with $M\^{1:H}$. We retain the property that the requisite online learning problem is convex in the parameters, and therefore can be efficiently solved via an online convex optimization algorithm (as discussed in \cref{sec:md-prelims}, we use lazy mirror descent, $\AlgMD$). One advantage of $\GPCS$ over $\GPC$ is that the former requires no knowledge of the fixed ``reference'' policy $\bar K$ (which, in the context of $\GPC$, had to be strongly stabilizing). While such $\bar K$ is needed in the context of $\GPC$ to bound a certain approximation error involving the cost functions, in the context of $\GPCS$ this approximation error may be bounded by some simple casework involving properties of stochastic matrices (see \cref{sec:memory-mismatch}). 

Formally, for parameters $a_0 \in [\alphalb, \alphaub]$ and $ H \in \BN$, 
  $\GPCS$ considers a class of policies parametrized by the set $\Xgpc := \bigcup_{a \in [a_0,\alphaub]} \sdist{a} \times (\sstoch{a})^H$. We abbreviate elements $(p, (M\^1, \ldots, M\^H)) \in \Xgpc$ by $(p, M\^{1:H})$. The high level idea of $\GPCS$, like that of \GPC, is to perform online convex optimization on the domain $\Xgpc$ (\cref{line:md-step}). At each time $t$, the current iterate $(p_t, M_t^{[1:H]})$, which defines a policy $\pi^{p_t, M_t^{[1:H]}}$, is used to choose the control $u_t$. The optimization subroutine then receives a new loss function $\ell_t: \Xgpc \to \BR$ based on the newly observed cost function $c_t$. 
  As with \GPC, showing that this algorithm works requires showing that the policy class is sufficiently expressive. Unlike for \GPC, our comparator policies are not strongly stabilizing, so new ideas are required for the proof. 

  We next formally define the policy $\pi^{p,M^{[1:H]}}$ associated with parameters $(p,M^{[1:H]})$, and the loss function $\ell_t$ used to update the optimization algorithm at time $t$.

  
\paragraph{Parametrization of policies.} 
  First, for $t \in [T]$ and $i \in \BN^+$, we define the weights
\begin{align}
\lambda_{t,i} := \gamma_{t-i} \cdot \prod_{j=1}^{i-1} (1-\gamma_{t-j}), \qquad \bar \lambda_{t,i} := \prod_{j=1}^{i} (1-\gamma_{t-j})\label{eq:define-lambdas}, \qquad \lambda_{t,0} := 1 - \sum_{i=1}^H \lambda_{t,i}.
\end{align}
We write $w_0 := x_1$, $\gamma_0 = 1$, and $w_t = 0$ for $t < 0$ as a matter of convention.\footnote{As a result of this convention, we have $\sum_{i=1}^t \lambda_{t,i}=1$ for all $t \in [T]$, and $\lambda_{t,i} = 0$ for all $i > t$.} $\lambda_{t,i}$ can be interpreted as the ``influence of perturbation $w_{t-i}$ on the state $x_t$'', and $\bar \lambda_{t,i}$ can be interpreted as the ``influence of perturbations prior to time step $t-i$ on the state $x_t$''. 
An element $(p, M\^{1:H}) \in \Xgpc$ induces a policy\footnote{Technically, we are slightly abusing terminology here, since $\pi_t^{p, M\^{1:H}}$ takes as input a set of the previous $H$ disturbances, $\delta_{t-1:t-H}$, as opposed to the current state $x_t$.} at time $t$, denoted $\pi_t^{p, M\^{1:H}}$, via the following variant of the \emph{disturbance-action control} \citep{agarwal2019online}:
\begin{align}
\pi_t^{p, M\^{1:H}}(\delta_{t-1:t-H}) := \lambda_{t,0} \cdot p + \sum_{j=1}^H \lambda_{t,j} \cdot M\^j   \delta_{t-j} \label{eq:dac}.
\end{align}
In \cref{line:choose-ut} of $\GPCS$, the control $u_t$ is chosen to be $\pi_t^{p_t, M_t\^{1:H}}(w_{t-1:t-H})$, which belongs to $\sdist{\| p_t \|_1}$ (using $\sum_{i=0}^H \lambda_{t,i} = 1$) and hence to the constraint set $\MI$ (since $\| p_t \|_1 \in [a_0, \alphaub] \subset [\alphalb, \alphaub]$). 
\paragraph{Loss functions.} For $(p,M^{[1:H]}) \in \Xgpc$, we let $x_t(p, M\^{1:H})$ and $u_t(p, M\^{1:H})$ denote the state and control at step $t$ obtained by following the policy $\pi_s^{p, M\^{1:H}}$ {at all time steps} $s$ prior to $t$ (see \cref{eq:xtpm,eq:utpm} in the appendix for precise definitions). We then define $\ell_t(p, M\^{1:H})$ to be the evaluation of the adversary's cost function $c_t$ on the state-action pair $(x_t(p, M\^{1:H}), u_t(p, M\^{1:H}))$ (\cref{line:choose-lt}).

\paragraph{Main guarantee and proof overview.}
\cref{thm:stoch-lds-regret} gives our regret upper bound for $\GPCS$:
  \begin{theorem}
    \label{thm:stoch-lds-regret}
    Let $d, T \in \BN$ and $\tau > 0$. Let $\ML=(A, B, \MI, x_1, (\gamma_t)_{t \in \BN}, (w_t)_{t \in \BN}, (c_t)_{t\in\BN})$ be a simplex LDS with cost functions $(c_t)_t$ satisfying \cref{asm:convex-cost} for some $L > 0$. Set $H := \tau \lceil \log(2LT^3) \rceil$. Then the iterates  $(x_t, u_t)_{t=1}^T$ of $\GPCS$ (\cref{alg:gpc}) with input $(A, B, \tau, H,\MI,T)$ satisfy:
    \begin{align}
\breg_{\Ksim_\tau(\ML)} := \sum_{t=1}^T c_t(x_t, u_t) - \inf_{K \in \Ksim_\tau(\ML)} \sum_{t=1}^T c_t(x_t(K), u_t(K)) \leq \tilde O(L\tau^{7/2}d^{1/2}\sqrt{T})\nonumber,
    \end{align}
    where $\tilde{O}(\cdot)$ hides only universal constants and poly-logarithmic dependence in $T$. Moreover, the time complexity of $\GPCS$ is $\poly(d,T)$.
  \end{theorem}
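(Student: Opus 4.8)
The plan is to adapt the analysis of \GPC \citep{agarwal2019online,hazan2022introduction} to the simplex setting. Fix $T$, set $H := \tau\lceil\log(2LT^3)\rceil$, and run $\GPCS$. I would decompose $\breg_{\Ksim_\tau(\ML)}$ into three terms: (I) the \emph{memory mismatch} $\sum_t c_t(x_t,u_t) - \sum_t \ell_t(p_t,M_t\^{1:H})$ between the incurred cost and the idealized loss fed to $\AlgMD$; (II) the \emph{online convex optimization regret} $\sum_t \ell_t(p_t,M_t\^{1:H}) - \inf_{(p,M\^{1:H})\in\Xgpc}\sum_t\ell_t(p,M\^{1:H})$ of $\AlgMD$ on $(\ell_t)_t$ over $\Xgpc$; and (III) the \emph{expressivity gap} $\inf_{(p,M\^{1:H})\in\Xgpc}\sum_t\ell_t(p,M\^{1:H}) - \inf_{K\in\Ksim_\tau(\ML)}\sum_t c_t(x_t(K),u_t(K))$. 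Throughout I would use that products of column-stochastic matrices are column-stochastic, that $\oneonenorm{X}\le1$ for $X\in\stoch$, and that $\bar D_X$ is sub-multiplicative, so $\tmix(\Cr{K})\le\tau$ forces $\bar D_{\Cr{K}}(H)\le 2^{-H/\tau}\le 1/(2LT^3)$.

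For term (III): fix $K\in\Ksim_\tau(\ML)$, write $K=aN$ with $N\in\stoch$ and $a=\oneonenorm{K}\in[\alphalb,\alphaub]$, and let $\pi_K$ be the stationary distribution of $\Cr{K}$. Since the disturbances are exogenous, unrolling \cref{eq:pop-control} under the policy $x\mapsto Kx$ gives $x_t(K)=\sum_{i\ge1}\lambda_{t,i}\Cr{K}^{\,i-1}w_{t-i}$ and $u_t(K)=\sum_{i\ge1}\lambda_{t,i}K\Cr{K}^{\,i-1}w_{t-i}$. I would then take $p^\st:=K\pi_K\in\sdist{a}$ and, for $j\in[H]$, $M^{\st[j]}:=K\Cr{K}^{\,j-1}\in\sstoch{a}$ (using $N\Cr{K}^{\,j-1}\in\stoch$), so that $(p^\st,M^{\st[1:H]})\in\Xgpc$ (with $a_0=\alphalb$ as determined by $\MI$). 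Because $\lambda_{t,0}=\sum_{i>H}\lambda_{t,i}$, the control played under these parameters differs from $u_t(K)$ only by the tail $\sum_{i>H}\lambda_{t,i}K(\Cr{K}^{\,i-1}w_{t-i}-\pi_K)$, which has $\ell_1$-norm at most $\bar D_{\Cr{K}}(H)$; since both control sequences have $\ell_1$-norm exactly $a$ at every step, the difference of the two state trajectories satisfies $\Delta x_{t+1}=(1-\gamma_t)(1-a)A\,\Delta x_t+(1-\gamma_t)B(\Delta u_t)$ with $\Delta x_1=0$, and since the homogeneous map is sub-stochastic a telescoping bound gives $\norm{\Delta x_t}_1\le T\cdot\bar D_{\Cr{K}}(H)$. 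By \cref{asm:convex-cost} the total cost gap is then $O(LT^2\cdot\bar D_{\Cr{K}}(H))=o(1)$, so (III) is negligible.

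For term (II): the disturbance-action parametrization is chosen precisely so that $(x_t(p,M\^{1:H}),u_t(p,M\^{1:H}))$ is affine in the parameters, hence each $\ell_t$ is convex on $\Xgpc$ (essentially a product of $O(Hd)$ scaled simplices). Running $\AlgMD$ with the entropic regularizer gives regret $\tilde O(G\sqrt{HdT})$, where $G$ bounds the dual norm of $\nabla\ell_t$. The gradient bound is where the weights $\lambda_{t,i},\bar\lambda_{t,i}$ of \cref{eq:define-lambdas} enter: $\partial x_t/\partial M\^{j}$ aggregates, over $s<t$, the influence of $w_{s-j}$ propagated from step $s$ to step $t$, weighted by $\lambda_{s,j}$ times a propagation operator of $\ell_1\to\ell_1$-norm at most $\bar\lambda_{t,t-s}$; viewing the $\gamma_t$ as ``reset'' probabilities, $\lambda_{s,j}\bar\lambda_{t,t-s}$ is exactly the probability that the last reset before time $t$ occurred at time $s-j$, so these weights sum to at most $1$ over $s$, which keeps $G$ polynomial in $L$ and $\tau$.

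The main obstacle is term (I). For \GPC one uses the standard online-learning-with-memory reduction: because strong stabilizability makes the state forget the distant past geometrically, the incurred cost is close to the idealized loss, with overhead governed by the per-step movement $O(\eta G)$ of the iterates. Here that step fails, since when the control level $a$ and the noise strengths $\gamma_t$ are small the relevant state-propagation operator is $(1-a)A$, which need not be a contraction, so the state can retain information for $\omega(H)$ steps. I would instead handle (I) by the casework on stochastic matrices of \cref{sec:memory-mismatch}: the point is that a long reset-free window forces the corresponding weights $\lambda_{s,j}$ to vanish (there is no recent disturbance for an $M\^{j}$ to act on), so the true state's dependence on parameters more than $H$ steps old is still controlled, at the cost of extra $\poly(\tau)$ factors. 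Assembling (I)--(III), substituting $H=\tilde O(\tau)$, and optimizing the step size of $\AlgMD$ yields $\breg_{\Ksim_\tau(\ML)}\le\tilde O(L\tau^{7/2}d^{1/2}\sqrt T)$. Finally, each round of $\GPCS$ performs one $\AlgMD$ update, i.e.\ a Bregman projection onto the polytope $\Xgpc\subseteq\BR^{O(Hd^2)}$, computable in $\poly(d,H)=\poly(d,T)$ time, so the total running time is $\poly(d,T)$.
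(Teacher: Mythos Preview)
Your three-term decomposition is exactly the paper's, and your treatment of term~(III) is essentially \cref{lem:approx-gpc}: the choice $p^\st=K\pi_K$, $M^{\st[j]}=K\Cr{K}^{\,j-1}$ and the tail bound via $\bar D_{\Cr{K}}(H)\le 2^{-H/\tau}$ are right. The genuine gap is in terms~(I) and~(II), and it traces back to a misreading of the algorithm: $a_0$ is \emph{not} $\alphalb$. In \cref{line:choose-a0} the algorithm sets $a_0=\max\{\alphalb,\min\{\alphaub,\One\{\tmix(A)>4\tau\}/(96\tau)\}\}$, and this choice is the linchpin of both the Lipschitz bound on $\ell_t$ (\cref{lem:lt-lip}) and the memory-mismatch bound (\cref{lem:mem-mismatch-gpc}).

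Your proposed mechanism for~(I) --- ``a long reset-free window forces the weights $\lambda_{s,j}$ to vanish, so the true state's dependence on old parameters is controlled'' --- does not do the job. Take $\gamma_t\equiv 0$: then indeed $u_t=p_t$ for $t>H$, but $x_t$ still depends on \emph{every} past $p_s$ through products of $(1-\|p_r\|_1)A$, and with $\|p_r\|_1$ possibly near~$0$ and $A\in\stoch$ this gives no decay whatsoever; the discrepancy $\tnorm{x_t-x_t(p_t,M_t^{[1:H]})}$ can be of order $\sum_{s<t}\tnorm{p_s-p_t}$, which is not $\poly(\tau)\cdot\beta$. The paper instead argues by a dichotomy on $\tau_A:=\tmix(A)$. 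If $\tau_A\le 4\tau$, then $A^{i}$ collapses to a rank-one stochastic matrix in time $O(\tau)$, and because both $x_t$ and $x_t(p_t,M_t^{[1:H]})$ lie on the simplex their rank-one parts have equal mass and cancel; this is where the simplex constraint replaces strong stability, not anything about the $\lambda$'s. If $\tau_A>4\tau$, a perturbation argument (\cref{lem:kstar-casework}) shows that every $K\in\Ksim_\tau(\ML)$ must satisfy $\oneonenorm{K}>1/(96\tau)$, so restricting the learner to $\|p\|_1\ge a_0\ge 1/(96\tau)$ costs nothing against $\Ksim_\tau(\ML)$ while forcing the propagation prefactors $(1-\|p\|_1)^{i-1}$ to decay geometrically in $i/\tau$. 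Exactly the same dichotomy is what yields the $O(L\tau^2)$ Lipschitz constant of $\ell_t$ needed in~(II); your ``weights sum to $1$'' argument handles $u_t(\cdot)$ but not $x_t(\cdot)$, and in fact $x_t(p,M^{[1:H]})$ is \emph{not} affine over $\Xgpc$ since the prefactors $\alpha_{t,i}^{p,M^{[1:H]}}=(1-\|p\|_1)^{i-1}$ depend on $\|p\|_1$. Without the $\tau_A$-casework and the boosted $a_0$, neither the memory-mismatch nor the Lipschitz bound is $\poly(\tau)$.
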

While for simplicity we have stated our results for obliviously chosen $(\gamma_t)_t, (w_t)_t, (c_t)_t$, since $\GPCS$ is deterministic the result also holds when these parameters are chosen adaptively by an adversary. See \cref{app:main-proof} for the formal proof of \cref{thm:stoch-lds-regret}.

\paragraph{Lower bound.} We also show that the mixing assumption on the comparator class $\Ksim_\tau(\ML)$ (\cref{def:stable-simplex-lds}) cannot be removed. In particular, without that assumption, if the valid control set $\MI$ is restricted to controls $u_t$ of norm at mostly roughly $O(1/T)$, then linear regret is unavoidable.\footnote{We remark that, with the mixing assumption, \cref{thm:stoch-lds-regret} does achieve $\tilde O(\sqrt T)$ regret when $\MI := \bigcup_{\alpha \in [0,O(1/T)]} \Delta^2_\alpha$. In particular, there is no hidden dependence on $\MI$ in the regret bound.} 

\begin{algorithm}[t]
    \caption{$\GPCS$: \GPC for Simplex LDS}
    \label{alg:gpc}
    \begin{algorithmic}[1]
      \Require Linear system $A,B$, mixing time $\tau > 0$ for comparator class, horizon parameter $H \in \BN$, set of valid controls $\MI = \bigcup_{\alpha \in [\alphalb, \alphaub]} \alpha \cdot \Delta^d$, total number of time steps $T$. 
      \State Write $\tau_A := \tmix(A)$, and define $\alb :=\max\{ \alphalb,\min \{ \alphaub, \One\{\tau_A > 4\tau\}/(96\tau)\}\}$.   \label{line:choose-a0}  
      \State Initialize an instance $\AlgMD$ of mirror descent (\cref{alg:lmd}) for the domain $\Xgpc$ with the regularizer $\Rgpc$ (defined in \cref{sec:md-prelims}) and step size $\eta = c\sqrt{dH\ln(d)}/(L\tau^2 \log^2(T)\sqrt{T})$, for a sufficiently small constant $c$. 
      \State Initialize $(p_1, M_1\^{1:H}) \gets \argmin_{ (p, M\^{1:H}) \in \Xgpc} \Rgpc(p, M\^{1:H})$. 
      \State Observe initial state $x_1 \in \Delta^d$.
      \For{$1 \leq t \leq T$}
      \State Choose control $u_t := \lambda_{t,0} \cdot p_t + \sum_{i=1}^H M_t\^i \cdot\lambda_{t,i}\cdot w_{t-i}$.\label{line:choose-ut}
      \State Receive cost $c_t(x_t, u_t)$.
      \State \multiline{Observe $x_{t+1}, \gamma_t$ and compute $w_t = \gamma_t^{-1}(x_{t+1} - (1-\gamma_t)[(1-\| u_t \|_1) Ax_t + Bu_t])$. \label{line:compute-wt} \\ \emph{(If $\gamma_t = 0$, then set $w_t = 0$.)}} 
      \State Define loss function $\ell_t(p, M\^{1:H}) := c_t(x_t(p, M\^{1:H}), u_t(p, M\^{1:H}))$.\label{line:choose-lt}
      \State Update $(p_{t+1}, M_{t+1}\^{1:H}) \gets \AlgMD_t(\ell_t; (p_t, M_t\^{1:H}))$. \label{line:md-step}
      \EndFor
    \end{algorithmic}

  \end{algorithm}

\begin{theorem} [Informal statement of \cref{thm:lb-simplex}]
  \label{thm:lb-simplex-main}
  Let $\beta > 0$ be a sufficiently large constant. For any $T \in \BN$, there is a distribution $\MD$ over simplex LDSs with state space $\Delta^2$ and control space $\bigcup_{\alpha \in [0, \beta/T]} \Delta^2_\alpha$, such that any online control algorithm on a system $\ML \sim \MD$ incurs expected regret $\Omega(T)$ against the class of all time-invariant linear policies $x \mapsto Kx$ where $K \in \bigcup_{\alpha \in [0, \beta/T]} \sstoch{\alpha}$.
\end{theorem}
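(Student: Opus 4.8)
The plan is to construct a one-parameter family of simplex LDSs on $\Delta^2$, indexed by a uniformly random hidden bit $\sigma \in \{0,1\}$ that picks which vertex of $\Delta^2$ is ``good.'' The family is arranged so that the first half of the horizon is cost-free — hence leaks no information about $\sigma$ — while the $O(1/T)$ control budget is simultaneously (a) too weak for the learner to steer the state to the good vertex during the second half, and (b) just strong enough that a comparator that knew $\sigma$ from the start would already have driven the state essentially onto the good vertex by the midpoint. The lower bound then follows from Yao's principle together with two elementary estimates: a geometric lower bound on the learner's cost and a geometric upper bound on the comparator's cost.

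\textbf{The instance.} Fix $\beta$ a large absolute constant; we may assume $T \ge 2\beta$, the statement being vacuous otherwise. Take $A = B = I$, $\gamma_t = 0$ for all $t$, initial state $x_1 = (\tfrac12,\tfrac12)$, and control set $\MI = \bigcup_{\alpha \in [0,\beta/T]}\Delta^2_\alpha$. Writing $s_t := \langle e_1, x_t\rangle$, the update \cref{eq:slds-update} becomes $s_{t+1} = (1-\tnorm{u_t})\,s_t + \langle e_1, u_t\rangle$, where $\langle e_1,u_t\rangle \in [0,\tnorm{u_t}] \subseteq [0,\beta/T]$. Let $c_t \equiv 0$ for $t \le \lfloor T/2\rfloor$, and $c_t(x,u) := |\langle e_1,x\rangle - \sigma|$ for $t > \lfloor T/2\rfloor$; these are affine, hence convex, and $1$-Lipschitz, so \cref{asm:convex-cost} holds with $L=1$. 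Let $\MD$ be this instance with $\sigma \sim \Unif(\{0,1\})$ (all other data fixed).

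\textbf{Key steps.} By Yao's principle it suffices to lower bound the $\sigma$-averaged regret of an arbitrary deterministic learner. \emph{Step 1 (no information in phase 1).} The instances $\sigma=0$ and $\sigma=1$ coincide on steps $1,\dots,\lfloor T/2\rfloor$ (identical dynamics and all costs zero), so the learner plays identical controls there and reaches the same state $x_{\lfloor T/2\rfloor+1}$; set $s^* := s_{\lfloor T/2\rfloor+1}$, which is independent of $\sigma$. \emph{Step 2 (weak control $\Rightarrow$ slow recovery).} Because $\langle e_1,u_t\rangle \in [0,\beta/T]$, for \emph{any} choice of controls one has $1 - s_{t+1} \ge (1-\beta/T)(1-s_t)$ and $s_{t+1} \ge (1-\beta/T)\,s_t$; iterating from step $\lfloor T/2\rfloor+1$ and summing the resulting geometric series gives that the learner's total cost is at least $(1-s^*)\,C_T$ when $\sigma=1$ and at least $s^*\,C_T$ when $\sigma=0$, where $C_T := \sum_{k=0}^{\lceil T/2\rceil-1}(1-\beta/T)^k \ge \tfrac{T}{\beta}(1-e^{-\beta/2}) \ge \tfrac{T}{2\beta}$. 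Averaging over $\sigma$ and using that $s^*$ does not depend on $\sigma$, $\E_\sigma[\text{learner's cost}] \ge \tfrac12 C_T \ge \tfrac{T}{4\beta}$. \emph{Step 3 (a cheap comparator).} Let $v_\sigma \in \Delta^2$ be the vertex with $\langle e_1,v_\sigma\rangle = \sigma$ and set $K_\sigma := \tfrac{\beta}{T}\,v_\sigma \mathbf{1}^\top \in \sstoch{\beta/T}$. The induced controls are the constant $\tfrac{\beta}{T}v_\sigma \in \MI$, and the induced state satisfies $s_t(K_\sigma) - \sigma = (1-\beta/T)^{t-1}(\tfrac12-\sigma)$, so $|s_t(K_\sigma)-\sigma| = \tfrac12(1-\beta/T)^{t-1}$. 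Since costs vanish in phase 1, the comparator's total cost is $\tfrac12\sum_{t>\lfloor T/2\rfloor}(1-\beta/T)^{t-1} \le \tfrac{T e^{-\beta/2}}{\beta}$, so $\inf_{K}\sum_t c_t(x_t(K),u_t(K)) \le \tfrac{Te^{-\beta/2}}{\beta}$ for every $\sigma$.

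\textbf{Conclusion and main obstacle.} Combining Steps 2 and 3, $\E_\sigma[\breg] \ge \tfrac{T}{4\beta} - \tfrac{Te^{-\beta/2}}{\beta} = \tfrac{T}{4\beta}(1 - 4e^{-\beta/2}) = \Omega(T)$ once $\beta$ is a sufficiently large constant (for instance any $\beta \ge 3$). The main obstacle is calibrating the two phases against the tiny control budget so that Steps 2 and 3 both go through: phase 1 must be long enough that $K_\sigma$ contracts its error to $O(e^{-\beta/2})$ (keeping its phase-2 cost $O(Te^{-\beta/2}/\beta)$), yet phase 2 must still have length $\Omega(T)$ so the learner's \emph{unavoidable} geometric decay at rate $1-\beta/T$ accumulates cost $\Omega(T/\beta)$; an even split works precisely because $\beta$ is a large constant, which is the role of the ``sufficiently large'' hypothesis. (Note that $\Cr{K_\sigma} = (1-\tfrac{\beta}{T})I + \tfrac{\beta}{T}v_\sigma\mathbf{1}^\top$ has second eigenvalue $1-\beta/T$, hence mixing time $\Theta(T)$, so $K_\sigma \notin \Ksim_\tau(\ML)$ for constant $\tau$ — consistent with \cref{thm:stoch-lds-regret}, and illustrating why the mixing assumption there cannot be dropped.) The remaining work — the geometric-sum estimates, verifying the construction is a bona fide simplex LDS, and checking the costs are convex and Lipschitz — is routine.
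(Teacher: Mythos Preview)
Your argument is correct, and it reaches the same $\Omega(T)$ bound by a route that differs from the paper's in its instantiation, though the skeleton is the same (two instances indistinguishable until mid-horizon, a comparator that exploits foreknowledge of the hidden bit, and a control budget too small for the learner to recover).

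The paper encodes the hidden bit in the \emph{perturbation}: it sets $\gamma_{T/2}=1/2$, fixes a single cost target at $1/2$, and lets the bit determine $w_{T/2}$. One comparator is the zero policy (which relies on the perturbation to land on target), and the learner-cost argument goes through a quadratic lower bound in $|x_{T/2+1}(2)-1/2|$. You instead encode the bit in the \emph{cost target} $\sigma$, use no perturbations at all, and let both comparators actively contract toward their respective vertices; your learner-cost argument is a direct geometric sum. Your version is slightly more elementary (no noise, and the two contraction inequalities $1-s_{t+1}\ge(1-\beta/T)(1-s_t)$ and $s_{t+1}\ge(1-\beta/T)s_t$ do all the work), while the paper's version has the appealing feature that the cost functions are identical across instances, so the hardness is attributable purely to adversarial disturbances. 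Both are valid proofs of the theorem as stated.

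One cosmetic nit: the cost $|\langle e_1,x\rangle-\sigma|$ is not affine but the absolute value of an affine function, hence convex; this does not affect the argument.
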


\section{Experimental Evaluation}
\label{sec:experiments}

The previous sections focused on linear systems, but in fact $\GPCS$ can be easily modified to control non-linear systems, for similar reasons as in prior work \citep{agarwal2021regret}. It suffices for the dynamics to have the form
\begin{equation} x_{t+1} := (1-\gamma_t) f(x_t,u_t) + \gamma_t w_t \label{eq:general-dynamics} \end{equation}
for known $f$, observed $\gamma_t$, and unknown $w_t$. See \cref{sec:experimental-details} for discussion of the needed modifications and other implementation details. Relevant code is open-sourced in \citep{golowich24population}. 

As a case study, in this section we apply $\GPCS$ (\cref{alg:gpc}) to a disease transmission model \--- specifically, a controlled generalization of the SIR model introduced earlier. In \cref{sec:replicator} we apply $\GPCS$ to a controlled version of the \emph{replicator dynamics} from evolutionary game theory. 

\paragraph{A controlled disease transmission model.} The Susceptible-Infectious-Recovered (SIR) model is a basic model for the spread of an epidemic \citep{kermack1927contribution}. The SIR model has been extensively studied since last century~\citep{shulgin1998pulse,zaman2008stability,allen1994some, ji2012behavior, bjornstad2002dynamics} and attracted renewed interest during the COVID-19 pandemic~\citep{cooper2020sir,kudryashov2021analytical,chen2020time}. As discussed previously, this model posits that a population consists of susceptible (\textbf{S}), infected (\textbf{I}), and recovered (\textbf{R}) individuals. When a susceptible individual comes into contact with an infected individual, the susceptible individual becomes infected at some ``transmission rate'' $\beta$. Infected patients become uninfected and gain immunity at some ``recovery rate'' $\theta$. We consider a natural generalization of the standard dynamics \cref{eq:sir-intro} where recovered individuals may also lose immunity at a rate of $\xi$. Formally, in the absence of control, the population evolves over time according to the following system of differential equations:
\begin{align}
\label{eq:sir-continuous-diff-eq}
\frac{dS}{dt} = -\beta I S +\xi R, \ \ \ \frac{dI}{dt}=\beta I S-\theta I, \ \ \ \frac{dR}{dt}=\theta I-\xi R,
\end{align}
Typically, $\beta>\theta>\xi$. We normalize the total population to be $1$, and thus $x=[S, I, R]\in\Delta^3$. Next, we introduce a variable called the \emph{preventative control} $u_t \in \Delta^2$, which has the effect of decreasing the transmission rate $\beta$, and adversarial perturbations $w_t$, which allow for model misspecification. Incorporating these changes to the forward discretization of \cref{eq:sir-continuous-diff-eq} gives the following dynamics:
\begin{align}
\label{eq:sir-discrete}
\begin{bmatrix}
S_{t+1}\\
I_{t+1}\\
R_{t+1}
\end{bmatrix}
=(1-\gamma_t)\left(
\begin{bmatrix}
1-\beta I_t & 0 & \xi \\
0 & 1-\theta & 0 \\
0 & \theta & 1-\xi
\end{bmatrix}
\begin{bmatrix}
S_{t}\\
I_{t}\\
R_{t}
\end{bmatrix}+\begin{bmatrix}
\beta I_t S_t & 0 \\
0 & \beta I_t S_t \\
0 & 0 
\end{bmatrix}u_t\right)+ \gamma_tw_t.
\end{align}
The control $u_t \in \Delta^2$ represents a distribution over transmission prevention protocols: $u_t=[1,0]$ represents full-scale prevention, whereas $u_t=[0,1]$ represents that no prevention measure is imposed. Concretely, the effective transmission rate under control $u_t$ is $\beta \cdot u_t(2)$. 

\paragraph{Parameters and cost function.} To model a highly infectious pandemic, we consider \cref{eq:sir-discrete} with parameters $\beta=0.5$, $\theta=0.03$, and $\xi=0.005$. Suppose we want to control the number of infected individuals by modulating a (potentially expensive) prevention protocol $u_t$. To model this setting, the cost function includes (1) a quadratic cost for infected individuals $I_t$, and (2) a cost that is bilinear in the magnitude of prevention and the susceptible individuals:
\begin{equation} c_t(x_t,u_t)=c_3 \cdot x_t(2)^2 + c_2 \cdot x_t(1) \cdot u_t(1),\label{eq:quad-cost}\end{equation}
where $x_t = [S_t, I_t, R_t]$. Typically $c_3\ge c_2>0$ to model the high cost of infection.

In \cref{fig:sir}, we compare $\GPCS$ against two baselines \--- (a) always executing $u_t=[1,0]$ (i.e. full prevention), and (b) always executing $u_t=[0,1]$ (i.e. no prevention) \--- for $T=200$ steps in the above model with no perturbations. We observe that $\GPCS$ suppresses the transmission rate via high prevention at the initial stage of the disease outbreak, then relaxes as the outbreak is effectively controlled. Moreover, $\GPCS$ outperforms both baselines in terms of cumulative cost. See \cref{sec:sir-appendix} for additional experiments exhibiting the robustness of $\GPCS$ to perturbations (i.e. non-zero $\gamma_t$'s) and different model parameters. 

\begin{figure}[t]
\centering
\includegraphics[width=1.0\linewidth]{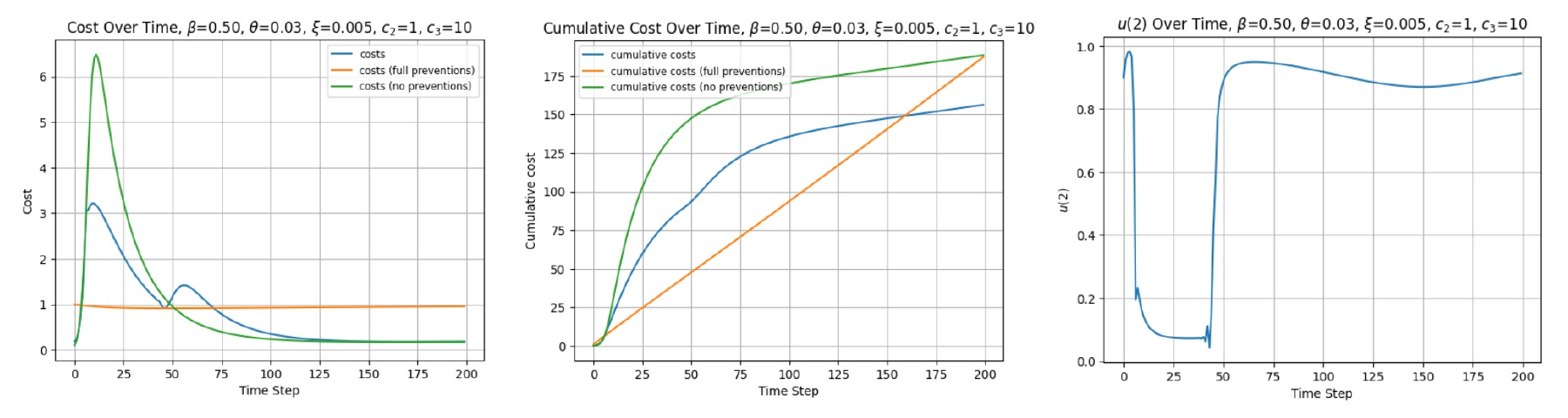}
\caption{Control with cost function \eqref{eq:quad-cost} for $T=200$ steps: initial distribution $x_1=[0.9, 0.1, 0.0]$; parameters $c_3=10$, $c_2=1$; no noise. \textbf{Left/Middle:} Cost and cumulative cost over time of $\GPCS$ versus baselines. \textbf{Right:} control $u_t(2)$ (proportional to effective transmission rate) played by $\GPCS$ over time.}
\label{fig:sir}
\end{figure}


\subsection{Controlling hospital flows: reproducing a study by~\citep{ketcheson2021optimal}}
\label{sec:hospital}
We now turn to the recent work \citep{ketcheson2021optimal}, which also studies a controlled SIR model. Similar to above, they considered a control that temporarily reduces the rate of contact within a population. 
In one scenario (inspired by the COVID-19 pandemic), they considered a cost function that penalizes medical surges, i.e. when the number of infected exceeds a threshold $y_{\max}$ determined by hospital capacities. 
Formally, they define the cost of a trajectory $(x_t,u_t)_{t=1}^T$ as
\begin{align}
\label{eq:hospital-cost}
\frac{W_0(-3x_T(1)e^{-3(x_T(1)+x_T(2))})}{3} + \int_{0}^{T} \left[c_2\cdot u_t(1)^2 + \frac{c_3(x_t(2)-y_{\max})}{1+e^{-100(x_t(2)-y_{\max})}}\right] dt,
\end{align}
where $W_0$ is the principal branch of Lambert's $W$-function, and $c_2, c_3$ are hyperparameters. 
The system parameters used by \citep{ketcheson2021optimal} are $\beta=0.3, \theta=0.1, \xi=0$. In the absence of noise and with a known cost function, \citep{ketcheson2021optimal} is able to compute the approximate solutions of the associated Hamilton-Jacobi-Bellman equations for various choices of $c_2, c_3$.

In \cref{fig:hospital}, we show that $\GPCS$ (with a slightly modified instantaneous version of \cref{eq:hospital-cost}) in fact \emph{matches} the optimal solution analytically computed by \citep{ketcheson2021sir}. See \cref{sec:hospital-appendix} for further experimental details, including the exact model parameters and cost function.

\begin{figure}[h]
\centering
\includegraphics[width=0.8\linewidth]{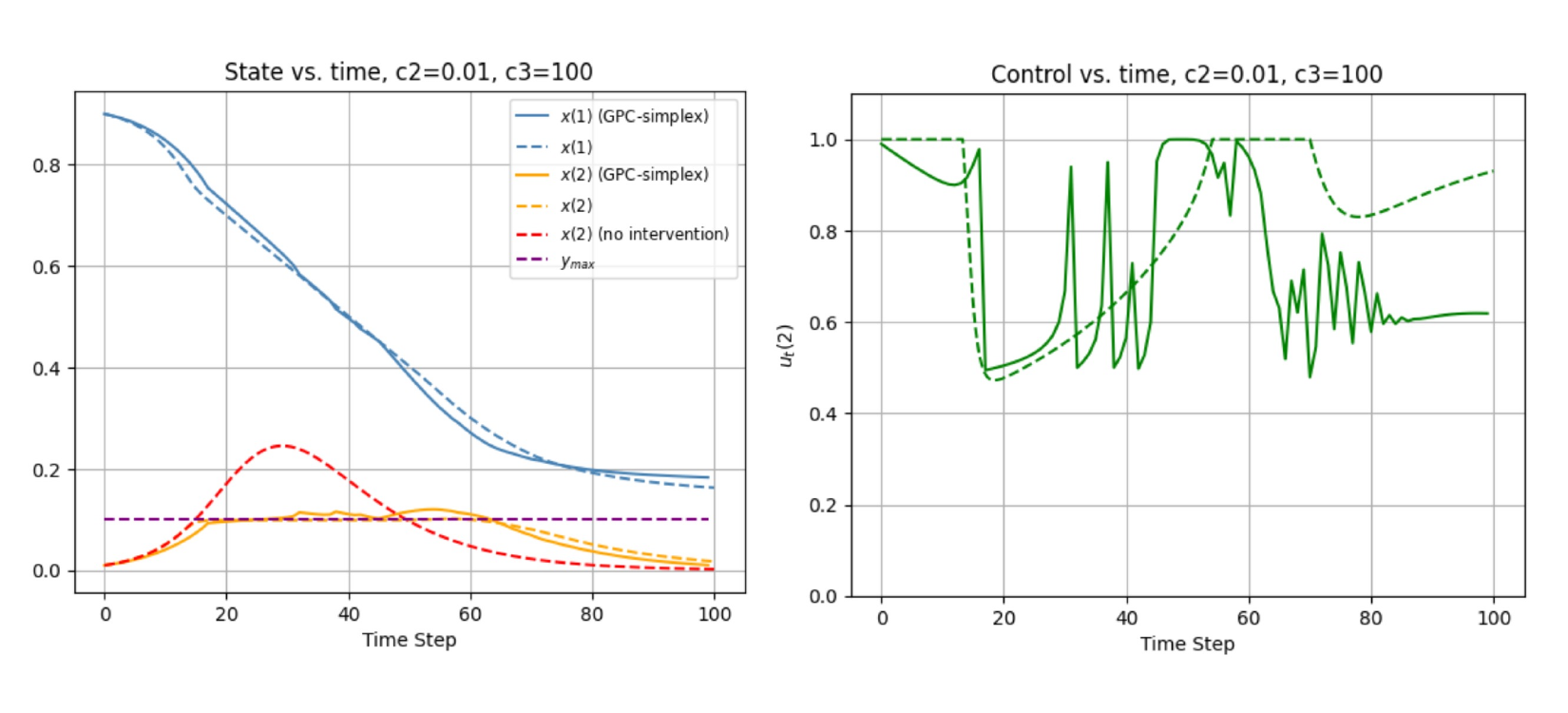}
\caption{Controlling hospital flows for $T=100$ steps: initial distribution $[0.9, 0.01, 0.09]$; parameters $y_{max}=0.1$, $c_2=0.01$, $c_3=100$. \textbf{Left:} The dashed red line shows the number of infected over time under no control; note that $y_{\max}$ (shown in dashed purple line) is significantly exceeded. The solid yellow and blue lines show the number of infected and susceptible under $\GPCS$, which closely match the optimal solutions computed by \citep{ketcheson2021sir} (dashed yellow and blue). \textbf{Right:} $\GPCS$ control (solid) vs. optimal control (dashed).} 
\label{fig:hospital}
\end{figure}

\newpage


\section*{Acknowledgements}
NG is supported by a Fannie \& John Hertz Foundation Fellowship and an NSF Graduate Fellowship. EH, ZL and JS gratefully acknowledge funding from the National Science Foundation, the Office of Naval Research, and Open Philanthropy. DR is supported by a U.S. DoD NDSEG Fellowship.

\bibliographystyle{apalike}
\bibliography{main}

\newpage
\appendix
\tableofcontents
\newpage 

\section{Additional preliminaries}

For completeness, we recall the definition of a standard LDS \citep{hazan2022introduction}.

\begin{definition}[LDS]
\label{def:lds-def}
Let $d_x, d_u\in\mathbb{N}$. A \emph{linear dynamical system} (LDS) is described by a tuple $\ML=(A, B, x_1, (w_t)_{t\in\mathbb{N}}, (c_t)_{t\in\mathbb{N}})$ where $A \in\mathbb{R}^{d_x\times d_x}$, $B\in\mathbb{R}^{d_x\times d_u}$ are the transition matrices; $x_1\in\mathbb{R}^{d_x}$ is the initial state; $w_t\in\mathbb{R}^{d_x}$ is the noise value at time $t$; and $c_t:\mathbb{R}^{d_x}\times \mathbb{R}^{d_u}\rightarrow\mathbb{R}$ is the cost function at time $t$. For each $t\ge 1$, given state $x_t \in \BR^{d_x}$ and control $u_t \in \BR^{d_u}$ at time $t$, the state at time $t+1$ is given by
\begin{align*}
x_{t+1} := Ax_{t}+Bu_{t}+w_{t},
\end{align*}
and the instantaneous cost incurred at time $t$ is given by $c_t(x_t,u_t)$. 
\end{definition}

\section{Discussion on the observation model}
\label{sec:obs-model}

Our main algorithm $\GPCS$ for online control of simplex LDSs assumes that for each $t$, the perturbation strength $\gamma_t$ is observed by the controller at the same time as it observes $x_{t+1}$ (the algorithm does not require the entire sequence $(\gamma_t)_t$ to be known in advance). In this appendix we discuss (a) why this is a crucial technical assumption for the algorithm, and (b) why it is a \emph{reasonable} assumption in many natural population models.

First we explain why is it technically important for $\GPCS$ that the controller observes $\gamma_t$. Recall that like the algorithm $\GPC$ from \citep{agarwal2019online}, $\GPCS$ is a \emph{disturbance-action controller}, meaning that the control at time $t$ is computed based on previous disturbances $w_{t-i}$. In the standard LDS model (\cref{def:lds-def}) studied by \citep{agarwal2019online}, it's clear that $w_{t-1}$ can be computed from $x_{t-1}, u_{t-1}, x_t$, using the fact that $A,B$ are known. However, in the simplex LDS model, if $\gamma_{t-1}$ is not directly observed, then in fact $w_{t-1}$ may not be uniquely identifiable given $x_{t-1}, u_{t-1}, x_t$. This is why $\GPCS$ requires observing the parameters $\gamma_t$. It is an interesting open problem whether this assumption can be removed.

Second, we argue that in many practical applications, it is reasonable for $\gamma_{t-1}$ to be observed along with the population state $x_t$. The reason is that often the controller can observe not just the proportions of individuals of different categories in a population but also the total population size. 

Formally, consider a population which has $N_t$ individuals at time $t$. Thus, if the distribution of the population across $d$ categories is described by $x_t \in \Delta^d$, then for each $i \in [d]$ there are $N_t (x_t)_i$ individuals in category $i$. Suppose that under control $u_t \in \MI$, this population evolves to a new distribution $(1-\norm{u_t}_1)Ax_t + Bu_t$, but then the adversary \emph{adds} $n_t$ new individuals to the population, whose distribution over categories is given by $w_t \in \Delta^d$. Then if we write $\bar x_t \in \BR^d_{\geq 0}$ to denote the vector of counts of individuals in each category at time $t$, it holds that
\begin{align*}
\bar x_{t+1} 
&= N_t((1-\norm{u_t}_1)Ax_t + Bu_t) + n_t w_t  \\ 
&= N_{t+1}\left((1-\gamma_t)((1-\norm{u_t}_1)Ax_t + Bu_t) + \gamma_t w_t\right)
\end{align*}
where $N_{t+1} = N_t + n_t$ is the total number of individuals at time $t+1$, and we write $\gamma_t := n_t/(N_t + n_t)$. Thus, the distribution of the population across the $d$ categories at time $t+1$ is
\[x_{t+1} = \frac{\bar x_{t+1}}{N_{t+1}} = (1-\gamma_t)((1-\norm{u_t}_1)Ax_t + Bu_t) + \gamma_t w_t\]
which is exactly the update rule from \cref{def:simplex-lds}. Moreover, if the controller observes the total population counts $N_t, N_{t+1}$ in addition to $x_t, u_t, x_{t+1}$, then it may compute $\gamma_t = (N_{t+1}-N_t)/N_{t+1}$ as well as $w_t$ (using knowledge of $A,B$), which is what we wanted to show.

\section{Proof of \cref{thm:stoch-lds-regret}}
\label{app:main-proof}

In this section, we prove \cref{thm:stoch-lds-regret}. We begin with an overview of this section that outlines the structure and the main idea behind the proof of \cref{thm:stoch-lds-regret}. 

\paragraph{Overview.} $\GPCS$ (\cref{alg:gpc}) essentially runs mirror descent on the loss functions $\ell_t(p,M^{[1:H]})$ constructed in \cref{line:choose-lt}. In particular, the loss at time $t$ measures the counterfactual cost of following the policy $\pi^{p,M^{[1:H]}}$ for the first $t$ timesteps. Thus, the regret of $\GPCS$ against the comparator class $\Ksim_{\tau}(\ML)$ (\cref{def:stable-simplex-lds}) can be bounded by the following decomposition:
\begin{align*}
\boxed{\text{Approximation error of comparator class}} + 
\boxed{\text{Mismatch error of costs}}+ \boxed{\AlgMD\text{ regret}}
\end{align*}
In more detail:
\begin{itemize}
\item \textbf{Approximation error of comparator class.} Since $\GPCS$ is only optimizing over policies of the form $\pi^{p,M^{[1:H]}}$ for $(p,M^{[1:H]}) \in \Xgpc$, we must show that every policy in the comparator class can be approximated by some policy $\pi^{p,M^{[1:H]}}$. This is accomplished by \cref{lem:approx-gpc}.

\item \textbf{Mismatch error of costs.} The cost incurred by the mirror descent algorithm $\AlgMD$ at time $t$ is $\ell_t(p_t,M_t^{[1:H]})$, which is the counterfactual cost at time $t$ had the current policy $\pi^{p_t,M_t^{[1:H]}}$ been carried out from the beginning of the time. However, the cost actually incurred by the controller at time $t$ is $c_t(x_t,u_t)$, which is the cost incurred by following policy $\pi^{p_s,M_s^{[1:H]}}$ at time $t$, for each $s \leq t$. Thus, there is a mismatch between the loss that $\GPCS$ is optimizing and the loss that $\GPCS$ needs to optimize. This mismatch can be bounded using the stability of mirror descent along with a mixing argument; see \cref{lem:mem-mismatch-gpc}.

\item \textbf{$\AlgMD$ regret.} $\GPCS$ uses $\AlgMD$ as its subroutine for mirror descent. The regret of $\AlgMD$ can be bounded by standard guarantees; see \cref{lem:md-gpc}.

\end{itemize}


\subsection{Preliminaries on mirror descent}
\label{sec:md-prelims}
We begin with some preliminaries regarding mirror descent. 
  Let $\MX\subset \BR^d$ be a convex compact set, and let $R : \MX \to \BR$  be a convex function. We consider the \emph{Lazy Mirror Descent} algorithm $\AlgMD$ (also known as \emph{Following the Regularized Leader}) for online convex optimization on $\MX$.
  Given an offline optimization oracle over $\MX$, the function $R$, and a parameter $\eta > 0$, $\AlgMD$ chooses each iterate $z_t$ based on the historical loss functions $\ell_s: \MX \to \BR$ (for $s \in [t-1]$) as described in \cref{alg:lmd}.

\begin{algorithm}[H]
\caption{$\AlgMD$: Lazy Mirror Descent~\citep{nemirovskij1983problem}}
\label{alg:lmd}
\begin{algorithmic}[1]
\Require Offline convex optimization oracle over set $\MX \subset \BR^d$; convex regularization function $R: \MX \to \BR$; step size $\eta > 0$; loss functions $\ell_1,\dots,\ell_T$ where $\ell_t$ is revealed after iteration $t$.
\For{$t \geq 1$}
\State Compute and output the solution to the following convex optimization problem:
\begin{align}
z_t := \argmin_{z \in \MX}  \sum_{s=1}^{t-1} \langle z, \grad \ell_s(z_s) \rangle +\frac 1\eta R(z)\label{eq:algmd},
  \end{align}
\State Receive loss function $\ell_t: \MX \to \BR$.
\EndFor
\end{algorithmic}
\end{algorithm}

The following lemma bounds the regret of $\AlgMD$ against the single best $z \in \MX$ (in hindsight), for an appropriately chosen step size $\eta$. 
 
  \begin{lemma}[Mirror descent]
    \label{lem:md-gpc-gen}
Suppose that $\MX \subset \BR^d$ is convex and compact, and let $\norm{\cdot}$ be a norm on $\BR^d$. Let $R: \MX \to \BR$ be a $1$-strongly convex function with respect to $\norm{\cdot}$. Let $L > 0$, and let $\rho := \max_{z \in \MX} R(z) - \min_{z \in \MX} R(z)$.

Fix an arbitrary sequence of loss functions $\ell_t : \MX \to \BR$ which are each convex and $L$-Lipschitz with respect to $\| \cdot \|$. Then the iterates $z_t$ of $\AlgMD$ (\cref{eq:algmd}) with an optimization oracle over $\MX$, regularizer $R$, step size $\eta = \sqrt{\rho}/(L\sqrt{2T})$, and loss functions $\ell_1,\dots,\ell_T$ satisfy:
\begin{align}
\sum_{t=1}^T \ell_t(z_t) - \min_{z \in \MX} \sum_{t=1}^T \ell_t(z) &\leq L \sqrt{8\rho T}\label{eq:lmd-guarantee}
\end{align} 
Moreover, for each $t \in [T-1]$, it holds that
\begin{align}
\label{eq:pt-change-gen}
\| z_t - z_{t+1} \| &\leq \sqrt{\frac{\rho}{2T}}.
\end{align}
  \end{lemma}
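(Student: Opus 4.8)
This is the textbook regret analysis of Follow-the-Regularized-Leader (which is what $\AlgMD$ implements), combined with a short iterate-stability estimate, and I would carry it out in three moves.

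First I would set up the bookkeeping. Write $g_t := \grad \ell_t(z_t)$; since $\ell_t$ is $L$-Lipschitz with respect to $\norm\cdot$ we have $\norm{g_t}_* \le L$ for the dual norm $\norm\cdot_*$. Define the cumulative regularized objectives $\Phi_t(z) := \frac1\eta R(z) + \sum_{s=1}^t \lng z, g_s\rng$, so that \eqref{eq:algmd} reads $z_{t+1} = \argmin_{z\in\MX}\Phi_t(z)$, with $z_1 = \argmin_{z\in\MX}\frac1\eta R(z)$; compactness of $\MX$ together with continuity and strong convexity of $R$ makes each of these minimizers exist and be unique. Applying the ``be-the-leader'' lemma to the functions $f_0 := \frac1\eta R$ and $f_t := \lng\cdot, g_t\rng$ ($t\ge1$) gives, for every $u\in\MX$,
\[
\sum_{t=1}^T \lng z_{t+1}, g_t\rng \;\le\; \sum_{t=1}^T \lng u, g_t\rng + \tfrac1\eta\big(R(u)-R(z_1)\big) \;\le\; \sum_{t=1}^T \lng u, g_t\rng + \frac\rho\eta,
\]
using $R(z_1) = \min_\MX R$. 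Combining this with convexity of $\ell_t$ (which gives $\ell_t(z_t) - \ell_t(u) \le \lng z_t - u, g_t\rng$) and the telescoping split $\lng z_t - u, g_t\rng = \lng z_t - z_{t+1}, g_t\rng + \lng z_{t+1}-u, g_t\rng$ yields
\[
\sum_{t=1}^T\big(\ell_t(z_t)-\ell_t(u)\big) \;\le\; \sum_{t=1}^T \norm{z_t - z_{t+1}}\,\norm{g_t}_* + \frac\rho\eta \;\le\; L\sum_{t=1}^T \norm{z_t-z_{t+1}} + \frac\rho\eta.
\]

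Second --- the one step requiring an actual idea rather than bookkeeping --- I would bound the iterate movement. Since $R$ is $1$-strongly convex in $\norm\cdot$, each $\Phi_t$ is $\tfrac1\eta$-strongly convex, so using that $z_t$ minimizes $\Phi_{t-1}$ and $z_{t+1}$ minimizes $\Phi_t$,
\[
\Phi_{t-1}(z_{t+1}) \ge \Phi_{t-1}(z_t) + \tfrac1{2\eta}\norm{z_t-z_{t+1}}^2, \qquad \Phi_t(z_t) \ge \Phi_t(z_{t+1}) + \tfrac1{2\eta}\norm{z_t-z_{t+1}}^2.
\]
Adding these and using that $\Phi_t - \Phi_{t-1}$ is exactly the linear map $\lng\cdot, g_t\rng$ collapses all the $\Phi$-terms and leaves $\lng z_t - z_{t+1}, g_t\rng \ge \tfrac1\eta\norm{z_t-z_{t+1}}^2$; Cauchy--Schwarz together with $\norm{g_t}_*\le L$ then gives $\norm{z_t - z_{t+1}} \le \eta L$. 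For the prescribed $\eta = \sqrt\rho/(L\sqrt{2T})$ this is exactly $\norm{z_t - z_{t+1}} \le \sqrt{\rho/(2T)}$, which is \eqref{eq:pt-change-gen}.

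Finally I would plug $\norm{z_t - z_{t+1}} \le \eta L$ into the second display of the first move to get $\sum_t(\ell_t(z_t)-\ell_t(u)) \le \eta L^2 T + \rho/\eta$ for every $u\in\MX$, hence also against $\min_{z\in\MX}\sum_t\ell_t(z)$; substituting $\eta = \sqrt\rho/(L\sqrt{2T})$ gives $\eta L^2 T + \rho/\eta = L\sqrt{\rho T/2} + L\sqrt{2\rho T} = \tfrac{3}{\sqrt2}L\sqrt{\rho T} \le L\sqrt{8\rho T}$, establishing \eqref{eq:lmd-guarantee}. The main (and essentially only) obstacle is the strong-convexity argument for iterate stability in the third paragraph; the rest is telescoping and arithmetic.
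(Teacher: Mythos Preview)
Your proposal is correct and follows essentially the same route as the paper. The only cosmetic difference is that the paper black-boxes the regret inequality by citing \cite[Theorem~5.2]{hazan2019introduction} (obtaining $2\eta T L^2 + \rho/\eta$) and then proves the movement bound via an argument equivalent to yours, whereas you derive the regret bound self-containedly via be-the-leader plus the movement estimate (obtaining the slightly sharper $\eta T L^2 + \rho/\eta$); both fit comfortably under the stated $L\sqrt{8\rho T}$, and your strong-convexity-at-the-minimizers argument for $\norm{z_t-z_{t+1}}\le \eta L$ is the same idea the paper uses.
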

    \cref{lem:md-gpc-gen} is essentially standard but we provide a proof for completeness.
  \begin{proof}[Proof of \cref{lem:md-gpc-gen}]
    By  \cite[Theorem 5.2]{hazan2019introduction}, it holds that
\begin{align}
\sum_{t=1}^T \ell_t(z_t) - \min_{z \in \MX} \ell_t(z) &\leq 2\eta\sum_{t=1}^T \norm{\grad \ell_t(z_t)}_\st^2 + \frac{\rho}{\eta}\nonumber
\end{align}
where $\| \cdot \|_\star: \BR^d \to \BR$ is the \emph{dual norm} of $\norm{\cdot}$, defined by $\| y \|_\star := \max_{\| z \| \leq 1} \lng y,z \rng$. Recall that a convex $L$-Lipschitz loss function $\ell_t$ satisfies $\| \grad \ell_t(z) \|_\star \leq L$ for all $z \in \MX$. Thus, the above regret bound simplifies to   
    \begin{align}
\sum_{t=1}^T \ell_t(z_t) - \min_{z \in \MX} \ell_t(z) &\leq 2\eta T L^2 + \frac{\rho}{\eta}\nonumber.
    \end{align}
    Substituting in $\eta =\frac{\sqrt{\rho}}{L\sqrt{2T}}$ yields \cref{eq:lmd-guarantee}. To establish the movement bound \cref{eq:pt-change-gen}, we argue as follows. Consider any $y_1, y_2 \in \BR^d$ and define, for $i \in \{1,2\}$, 
    \begin{align}
w_i := \argmin_{z \in \MX} \lng y_i, z \rng + R(z)\nonumber.
    \end{align}
    The definition of $w_2$ implies that 
    \[R(w_1) - \lng y_2, w_2 \rng + \lng  y_2, w_1 \rng \geq R(w_2) \geq R(w_1) + \langle \grad R(w_1), w_2 - w_1\rangle + \frac{1}{2}\norm{w_2 - w_1}^2\]
    where the second inequality is by $1$-strong convexity of $R$. Simplifying, we get
    \begin{align}
\frac 12 \| w_1 - w_2 \|^2 \leq \lng y_2, w_1 - w_2 \rng + \lng \grad R(w_1), w_2 - w_1 \rng\nonumber.
    \end{align}
    Symmetrically, the definition of $w_1$ together with strong convexity implies that
    \begin{align}
\frac 12 \| w_1 - w_2 \|^2 \leq \lng y_1, w_2 - w_1 \rng + \lng \grad R(w_2), w_1 - w_2 \rng\nonumber.
    \end{align}
    Adding the two above displays gives
    \begin{align}
      \| w_1 - w_2 \|^2 & \leq \lng y_2 - y_1, w_1 - w_2 \rng + \lng \grad R(w_1) - \grad R(w_2), w_2 - w_1 \rng \nonumber\\
      & \leq \| y_2 - y_1 \|_\star \cdot \| w_1 - w_2 \|\nonumber,
    \end{align}
    where the second inequality uses convexity of $R$ (which gives $\lng \grad R(w_1) - \grad R(w_2), w_1 - w_2 \rng \geq 0$). It follows that $\| w_1 - w_2 \| \leq \| y_1 - y_2 \|_\star$. Setting $y_1 := \eta \sum_{s=1}^{t-1} \grad\ell_s(z_s)$ and $y_2 := \eta \sum_{s=1}^t \grad\ell_s(z_s)$, and recalling the definitions of $z_t, z_{t+1}$ from \cref{eq:algmd}, we get \[\| z_t - z_{t+1} \| \leq \| \eta \grad \ell_t(z_t) \|_\star \leq \eta L \leq \sqrt{2\rho/T},\] as desired. 
  \end{proof}

We next apply \cref{lem:md-gpc-gen} to the domain used in $\GPCS$. Recall that, given $d,H \in \BN$ and real numbers $0 \leq a \leq b \leq 1$, we have defined $\Xgpc[d,H,a,b] := \bigcup_{a' \in [a,b]} \Delta_{a'}^d \times  (\sstoch{a'})^H$.

\begin{definition}[Entropy of a sub-distribution]
Let $d \in \BN$. We define the function $\Ent: \Delta^d_{\leq 1} \to \BR_{\geq 0}$ by
\[\Ent(v) := v^\cmp \ln\frac{1}{v^\cmp} + \sum_{j=1}^d v_j \ln\frac{1}{v_j}\]
where for any vector $v \in \BR^d$ we write $v^\cmp := 1 - \sum_{j=1}^d v_j \in \BR$.
\end{definition}

\begin{lemma}\label{lemma:ent-sc}
Let $d \in \BN$ and $u,v \in \Delta^d_{\leq 1}$. Then
\[\langle \grad_u \Ent(u) - \grad_v \Ent(v), u - v \rangle \leq -\norm{u-v}_1^2.\]
That is, $v \mapsto -\Ent(v)$ is $1$-strongly convex on $\Delta^d_{\leq 1}$ with respect to $\norm{\cdot}_1$.
\end{lemma}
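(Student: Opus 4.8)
The goal is to show that $v\mapsto -\Ent(v)$ is $1$-strongly convex on $\Delta^d_{\leq 1}$ with respect to $\norm{\cdot}_1$, in the gradient form stated. The natural approach is to reduce to the standard fact that negative entropy on the \emph{full} simplex $\Delta^{d+1}$ is $1$-strongly convex with respect to $\norm{\cdot}_1$ (which itself follows from Pinsker's inequality). To do this, I would first observe that $\Ent$ is, up to the change of variables, exactly the entropy of the ``completed'' distribution: given $v\in\Delta^d_{\leq 1}$, define $\widehat v := (v_1,\dots,v_d,v^\cmp)\in\Delta^{d+1}$, so that $\Ent(v) = \sum_{j=1}^{d+1}\widehat v_j\ln(1/\widehat v_j)$ is precisely the Shannon entropy $h(\widehat v)$. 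The map $v\mapsto\widehat v$ is affine and injective, and crucially $\norm{\widehat u - \widehat v}_1 = \norm{u-v}_1 + |u^\cmp - v^\cmp| \geq \norm{u-v}_1$ (in fact one can even note $|u^\cmp - v^\cmp| = |\sum_j (v_j - u_j)| \le \norm{u-v}_1$, so the two $\ell_1$ distances agree up to a factor of $2$, but only the lower bound is needed).

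Next I would compute the gradients explicitly. We have $\partial_{v_j}\Ent(v) = -\ln v_j - 1 + \ln v^\cmp + 1 = \ln(v^\cmp/v_j)$, using the chain rule on the $v^\cmp = 1-\sum_k v_k$ term. Hence
\[
\langle \grad_u\Ent(u) - \grad_v\Ent(v), u-v\rangle = \sum_{j=1}^d (u_j - v_j)\left(\ln\frac{u^\cmp}{u_j} - \ln\frac{v^\cmp}{v_j}\right).
\]
The plan is to recognize the right-hand side as the corresponding inner product for the entropy on $\Delta^{d+1}$ evaluated at $\widehat u, \widehat v$. Indeed $\partial_{w_j} h(w) = -\ln w_j - 1$, so
\[
\langle \grad h(\widehat u) - \grad h(\widehat v), \widehat u - \widehat v\rangle = -\sum_{j=1}^{d+1}(\widehat u_j - \widehat v_j)(\ln\widehat u_j - \ln\widehat v_j);
\]
expanding the $j=d+1$ term (which involves $u^\cmp - v^\cmp = -\sum_{j\le d}(u_j-v_j)$) and regrouping shows this equals exactly $\langle \grad_u\Ent(u) - \grad_v\Ent(v), u-v\rangle$ above. (Alternatively, one can bypass gradients of $h$ and directly manipulate the displayed sum.)

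Finally, I would invoke $1$-strong convexity of $-h$ on $\Delta^{d+1}$ with respect to $\norm{\cdot}_1$: this gives $\langle \grad h(\widehat u) - \grad h(\widehat v), \widehat u - \widehat v\rangle \le -\norm{\widehat u - \widehat v}_1^2 \le -\norm{u-v}_1^2$, where the last step uses the $\ell_1$ comparison above. Chaining the identities yields $\langle \grad_u\Ent(u) - \grad_v\Ent(v), u-v\rangle \le -\norm{u-v}_1^2$, as claimed. The only mild subtlety — and the place I would be most careful — is the bookkeeping of the gradient of the ``complement'' coordinate $v^\cmp$ via the chain rule, and making sure the affine embedding $v\mapsto\widehat v$ is set up so that the $\ell_1$ distance does not shrink; neither is hard, but a sign error there would break the argument. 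A fully self-contained alternative, avoiding the embedding entirely, is to write the difference of inner products as $\sum_{j=1}^d(u_j-v_j)\ln(u_j/v_j)\cdot(-1) + (u^\cmp - v^\cmp)\ln(u^\cmp/v^\cmp)\cdot(-1)$ after regrouping, recognize this as $-\big(\mathrm{KL}(\widehat u\,\|\,\widehat v) + \mathrm{KL}(\widehat v\,\|\,\widehat u)\big)$, and bound each KL term below by Pinsker's inequality $\mathrm{KL}(\widehat u\|\widehat v)\ge \tfrac12\norm{\widehat u-\widehat v}_1^2 \ge \tfrac12\norm{u-v}_1^2$.
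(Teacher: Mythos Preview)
Your proposal is correct and essentially matches the paper's proof: the paper also embeds $u,v$ into $\Delta^{d+1}$ via $\widehat u = (u,u^\cmp)$, $\widehat v = (v,v^\cmp)$, verifies that $\langle \grad_u\Ent(u) - \grad_v\Ent(v), v-u\rangle = \KL(\widehat u\|\widehat v) + \KL(\widehat v\|\widehat u)$, and then applies Pinsker together with $\|\widehat u-\widehat v\|_1 \ge \|u-v\|_1$. This is exactly the ``self-contained alternative'' you sketch at the end, so there is no meaningful difference in approach.
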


\begin{proof}
Let $p$ be the probability mass function on $[d+1]$ with $p_i = u_i$ for all $i \in [d]$, and let $q$ be the probability mass function on $[d+1]$ with $p_i = v_i$ for all $i \in [d]$. Then it can be checked that
\begin{align*}
\langle \grad_u \Ent(u) - \grad_v \Ent(v), v - u \rangle
&= \KL(p||q) + \KL(q||p) \\ 
&\geq \TV(p,q)^2 \\ 
&\geq \norm{u - v}_1^2
\end{align*}
where the first inequality is by Pinsker's inequality.
\end{proof}
  
\begin{definition}[Regularizer for mirror descent in $\GPCS$]
Let $d,H \in \BN$ and $0 \leq a \leq b \leq 1$. We define $\Rgpc: \Xgpc[d,H,a,b] \to \BR_{\leq 0}$ as follows (omitting the domain's dependence on $a,b$ for notational simplicity):
  \begin{align}
\Rgpc(p, M\^{1:H}) := -\Ent(p) -   \sum_{h=1}^H \sum_{j=1}^d \Ent(M^{[h]}_{\cdot,j}).\label{eq:def-rdh}, 
  \end{align}
\end{definition}

\begin{definition}[Norm for analysis of mirror descent in $\GPCS$]
Let $d,H \in \BN$, and identify $\BR^{d+Hd^2}$ with $\BR^d \times (\BR^{d\times d})^H$. We define a norm $\gpcnorm{\cdot}$ on $\BR^{d+Hd^2}$ as follows: for $p \in \BR^d, M\^{1:H} \in (\BR^{d \times d})^H$, 
  \begin{align}
\gpcnorm{ (p, M\^{1:H}) }^2 := \tnorm{p}^2 + \sum_{h=1}^H \sum_{j=1}^d \norm{M\^h_{\cdot,j}}_1^2\nonumber.
  \end{align}     
\end{definition}  

  \begin{corollary}
    \label{lem:md-gpc}
Let $d, H \in \BN$ and $0 \leq a \leq b \leq 1$. Consider an arbitrary sequence of cost functions $\ell_t : \Xgpc[d,H,a,b] \to \BR$ which are convex and $L$-Lipschitz with respect to $\gpcnorm{\cdot}$. Then the iterates $z_t$ of $\AlgMD$ with $\eta = \sqrt{2dH\ln(d)}/(L\sqrt{T})$, regularizer $R$, and loss functions $\ell_1,\dots,\ell_T$ satisfy the following regret guarantee:
    \begin{align}
      \sum_{t=1}^T \ell_t((p_t, M_t\^{1:H})) - \min_{(p,M\^{1:H}) \in \Xgpc[d,H,a,b]} \sum_{t=1}^T \ell_t((p, M\^{1:H})) &\leq  L \sqrt{32 dH \ln(d) \cdot T}\label{eq:gpc-md-loss-bound}
    \end{align}
    Moreover, for $\beta := \frac{\sqrt{2dH \ln(d)}}{\sqrt{T}}$, for all $t \in [T-1]$, we have
    \begin{align}
\| p_t - p_{t+1} \|_1 \leq \beta, \qquad \max_{h \in [H]} \oneonenorm{ M_t\^h - M_{t+1}\^h} \leq \beta\label{eq:pt-change}.
    \end{align}
  \end{corollary}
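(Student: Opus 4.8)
The plan is to obtain \cref{lem:md-gpc} as a direct instantiation of \cref{lem:md-gpc-gen}, taking $\MX := \Xgpc[d,H,a,b]$, regularizer $R := \Rgpc$, and norm $\norm{\cdot} := \gpcnorm{\cdot}$. First I would record a small strengthening of \cref{lem:md-gpc-gen}: its proof in fact shows, for \emph{any} step size $\eta$, that the $\AlgMD$ iterates satisfy $\sum_t \ell_t(z_t) - \min_z \sum_t \ell_t(z) \le 2\eta T L^2 + \rho/\eta$ and $\gpcnorm{z_t - z_{t+1}} \le \eta L$, and these remain valid with $\rho$ replaced by any upper bound on the range of $R$. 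So it suffices to check: (i) $\Xgpc[d,H,a,b]$ is convex and compact; (ii) $\Rgpc$ is $1$-strongly convex with respect to $\gpcnorm{\cdot}$; and (iii) the range of $\Rgpc$ is at most $4dH\ln d$ (for $d\ge 2$; $d=1$ is trivial). Plugging $\rho \le 4dH\ln d$ and $\eta = \sqrt{2dH\ln d}/(L\sqrt T)$ into the two displays then yields exactly the regret bound $L\sqrt{32 dH\ln d\cdot T}$ and the movement bound $\gpcnorm{z_t - z_{t+1}} \le \beta$ of the corollary, since $\sqrt{2dH\ln d}/(L\sqrt T)$ equals $\sqrt{\rho}/(L\sqrt{2T})$ when $\rho = 4dH\ln d$.

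For (i): a generic element of $\Xgpc[d,H,a,b]$ is $(a'p, a'M\^1, \dots, a'M\^H)$ with $a' \in [a,b]$, $p \in \Delta^d$, and each $M\^h \in \stoch$; since the $\ell_1$-norm of the $p$-block and of every column of each $M\^h$ is $a'$, a convex combination of two such elements is again of this form with $a'$ replaced by a convex combination of scalars in $[a,b]$, so the set is convex (it is plainly closed and bounded). For (ii) I would use that $\gpcnorm{(p,M\^{1:H})}^2$ is the sum of the squared $\ell_1$-norms of the $1 + Hd$ ``blocks'' (the vector $p$ and each column $M\^h_{\cdot,j}$), while $\Rgpc$ is the corresponding separable sum $-\Ent(p) - \sum_{h,j}\Ent(M\^h_{\cdot,j})$; \cref{lemma:ent-sc} makes each summand $1$-strongly convex with respect to $\norm{\cdot}_1$ on $\Delta^d_{\le 1}$, and since the gradient-difference inner product splits across blocks, a separable sum of per-block $1$-strongly-convex functions is $1$-strongly convex with respect to the $\ell_2$-aggregate of the block norms, which is exactly $\gpcnorm{\cdot}$. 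For (iii), each $\Ent(v)$ is the Shannon entropy of a distribution on $d+1$ outcomes, hence in $[0,\ln(d+1)]$, so $\Rgpc$ takes values in $[-(1+Hd)\ln(d+1), 0]$ and its range is at most $(1+Hd)\ln(d+1) \le 4dH\ln d$ (using $d+1\le d^2$ and $1 + Hd \le 2Hd$).

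Finally, the coordinatewise movement bounds \cref{eq:pt-change} fall out of $\gpcnorm{(p_t - p_{t+1}, M_t\^{1:H} - M_{t+1}\^{1:H})} \le \beta$: this directly gives $\tnorm{p_t - p_{t+1}} \le \beta$ and, for each $h$, $\sum_j \norm{(M_t\^h - M_{t+1}\^h)_{\cdot,j}}_1^2 \le \beta^2$; since the $1\to 1$ operator norm of a matrix equals the largest $\ell_1$-norm of its columns (the supremum over the $\ell_1$-ball is attained at a signed standard basis vector), we get $\oneonenorm{M_t\^h - M_{t+1}\^h} = \max_j \norm{(M_t\^h - M_{t+1}\^h)_{\cdot,j}}_1 \le \beta$.

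I do not anticipate a genuine obstacle — this is essentially bookkeeping on top of \cref{lem:md-gpc-gen}. The two points deserving attention are identifying the \emph{right} aggregate norm under which the separable entropy regularizer is strongly convex (the $\ell_2$-combination of the per-block $\ell_1$-norms, which is precisely how $\gpcnorm{\cdot}$ is defined, not the $\ell_1$- or $\ell_\infty$-combination), and making the crude range bound $\rho \le 4dH\ln d$ consistent with both the prescribed step size and the claimed regret constant in the statement.
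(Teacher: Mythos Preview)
Your proposal is correct and matches the paper's proof essentially step for step: both instantiate \cref{lem:md-gpc-gen} with $\MX=\Xgpc[d,H,a,b]$, $R=\Rgpc$, and $\gpcnorm{\cdot}$, verify convexity of the domain, invoke \cref{lemma:ent-sc} for $1$-strong convexity, bound the regularizer range by $(1+dH)\ln(d+1)\le 4dH\ln d$, and derive the column-wise movement bounds via $\oneonenorm{C}=\max_j\norm{C_{\cdot,j}}_1$. Your explicit remark that the proof of \cref{lem:md-gpc-gen} works with any upper bound $\rho$ on the regularizer range (and the corresponding step size) is a useful clarification that the paper leaves implicit.
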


\begin{proof}
Note that the set of $(p,M^{[1:H]})$ where $p \in \BR^d$ and $M^{[1:H]} \in (\BR^{d\times d})^H$ can be identified with $\BR^{d+Hd^2}$.
  We apply \cref{lem:md-gpc-gen} with $\MX := \Xgpc[d,H,a,b]$, $R = \Rgpc$, and the norm $\norm{\cdot}_{d,H}$. It is straightforward to check that $\MX$ is convex and compact in $\BR^{d + Hd^2}$. By \cref{lemma:ent-sc}, we have that $\Rgpc$ is $1$-strongly convex with respect to the norm $\gpcnorm{\cdot}$. 
  Moreover, note that 
  \begin{align*}
  &\max_{(p,M\^{1:H}) \in \Xgpc[d,H,a,b]} \Rgpc((p, M\^{1:H})) - \min_{(p, M\^{1:H}) \in \Xgpc[d,H,a,b]} \Rgpc((p, M\^{1:H})) \\
  &\hspace{5em}\leq (1+dH) \ln(d+1) \\ 
  &\leq 4dH \ln(d)
  \end{align*}
  since $0 \leq \Ent(v) \leq \ln(d+1)$ for all $v \in \Delta^d_{\leq 1}$. Thus, \cref{lem:md-gpc-gen} implies the claimed bounds \cref{eq:gpc-md-loss-bound,eq:pt-change}, where to prove \cref{eq:pt-change} we are using the fact that $\oneonenorm{C} = \max_{j \in [d]} \norm{C_{\cdot,j}}_1 \leq \sqrt{\sum_{j \in [d]} \norm{C_{\cdot,j}}_1^2}$ for all $C \in \BR^{d\times d}$.
\end{proof}

  \subsection{Approximation of linear policies}
  \label{sec:pol-approx}
  Henceforth fix a simplex LDS $\ML = (A,B,\MI,x_1,(\gamma_t)_t,(w_t)_t,(c_t)_t)$ on $\Delta^d$, where $\MI = \bigcup_{\alpha \in [\alphalb,\alphaub]} \Delta^d_\alpha$ for some constants $0 \leq \alphalb \leq \alphaub \leq 1$. 
  
  Recall that any choice of parameters $(p, M\^{1:H}) \in \Xgpc[d,H,a,b]$ (for some hyperparameters $H \in \BN$ and $\alphalb \leq a \leq b \leq \alphaub$) induces, via \cref{eq:dac}, a set of policies $(\pi_s^{p, M\^{1:H}})_{s \in [T]}$. The policy $\pi_s^{p, M\^{1:H}}$ takes as input the disturbances $w_{s-1}, \ldots, w_{s-H}$ observed at the $H$ time steps before step $s$, and outputs a control for step $s$. Recall that, in \cref{alg:gpc}, we used $x_t(p, M\^{1:H}), u_t(p, M\^{1:H})$ to denote the state and control at time step $t$ one would observe by playing the control $\pi_s^{p, M\^{1:H}}(w_{s-1:s-H})$ at step $s$, for each $1 \leq s \leq t$.
  
Formally, we have the following expressions for $x_t(p, M\^{1:H}), u_t(p, M\^{1:H})$:

\begin{fact}
For any $(p,M^{[1:H]}) \in \Xgpc[d,H,a,b]$ and $t \in [T]$, it holds that 
\begin{align}
  x_{t}(p, M\^{1:H}) &= \sum_{i=1}^t \alpha_{t,i}^{p, M\^{1:H}}\cdot  A^{i-1} \cdot \left(\lambda_{t-i,0} \bar \lambda_{t,i} \cdot B\cdot  p+  B \sum_{j=1}^H \lambda_{t, i+j}\cdot M\^j  \cdot w_{t-i-j} + \lambda_{t,i} \cdot  w_{t-i} \right)\label{eq:xtpm}\\
  u_{t}(p, M\^{1:H}) &= \pi_{t}^{p, M\^{1:H}}(w_{t-1:t-H}) =  \lambda_{t, 0} \cdot p + \sum_{j=1}^H  \lambda_{t,j} \cdot M\^j  \cdot w_{t-j}\label{eq:utpm},
\end{align}
where we have written, for $t \in [T], i \in \BN$, 
$ 
\alpha_{t,i}^{p, M\^{1:H}} := \prod_{j=1}^{i-1} \left(1-\tnorm{\pi_{t-j}^{p, M\^{1:H}}}\right) = (1- \| p \|_1)^{i-1}
$.
\end{fact}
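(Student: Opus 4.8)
The plan is to read the control identity directly off the definition of the disturbance-action policy, and then establish the state identity by induction on $t$.

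The control formula needs no computation: following the policy family $(\pi_s^{p, M\^{1:H}})_s$ from the start means, by definition, playing $u_t(p,M\^{1:H}) = \pi_t^{p,M\^{1:H}}(w_{t-1:t-H})$ at step $t$, and plugging $\delta_{t-j} := w_{t-j}$ into \cref{eq:dac} gives precisely $\lambda_{t,0}\cdot p + \sum_{j=1}^H \lambda_{t,j}M\^j w_{t-j}$. I would then record the key observation that this control has constant $\ell_1$-mass: if $(p,M\^{1:H}) \in \Xgpc[d,H,a,b]$, then there is a common scalar, namely $\|p\|_1$, with $p \in \sdist{\|p\|_1}$ and $M\^j \in \sstoch{\|p\|_1}$ for all $j$; since $p$ and each $M\^j w_{t-j}$ have nonnegative entries, each column of $M\^j$ sums to $\|p\|_1$, $w_{t-j} \in \Delta^d$, and all $\lambda_{t,i}\ge 0$, the vector $\pi_t^{p,M\^{1:H}}(w_{t-1:t-H})$ is nonnegative with $\ell_1$-norm $\bigl(\sum_{i=0}^H\lambda_{t,i}\bigr)\|p\|_1 = \|p\|_1$. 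Consequently $\alpha_{t,i}^{p,M\^{1:H}} = \prod_{j=1}^{i-1}(1-\|p\|_1) = (1-\|p\|_1)^{i-1}$, which is the last claim of the Fact.

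For the state formula I would induct on $t$. By definition of $x_t(p,M\^{1:H})$ and \cref{eq:slds-update}, using $\|u_t(p,M\^{1:H})\|_1 = \|p\|_1$, the counterfactual trajectory obeys
\[x_{t+1}(p,M\^{1:H}) = (1-\gamma_t)\bigl[(1-\|p\|_1)\,A\,x_t(p,M\^{1:H}) + B\,u_t(p,M\^{1:H})\bigr] + \gamma_t w_t.\]
The base case $t = 1$ holds because $x_1(p,M\^{1:H}) = x_1 = w_0$, while on the right side only the $i=1$ term survives and equals $\lambda_{1,1}w_0 = \gamma_0 w_0 = x_1$ — the $Bp$ term carries the factor $\bar\lambda_{1,1} = 1-\gamma_0 = 0$ and the $M\^j$-sum involves $w_{-j} = 0$. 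For the inductive step I would substitute the induction hypothesis for $x_t(p,M\^{1:H})$ together with the control formula for $u_t(p,M\^{1:H})$ into the display above. The term $(1-\gamma_t)(1-\|p\|_1)A\cdot x_t(p,M\^{1:H})$ becomes, after the index shift $i \mapsto i+1$, exactly the $i = 2,\dots,t+1$ terms of the target at time $t+1$: the factor $1-\|p\|_1$ turns $\alpha_{t,i} = (1-\|p\|_1)^{i-1}$ into $\alpha_{t+1,i+1}$, the extra $A$ turns $A^{i-1}$ into $A^{(i+1)-1}$, and the scalar weights transform via $\lambda_{t+1,k+1} = (1-\gamma_t)\lambda_{t,k}$, $\bar\lambda_{t+1,k+1} = (1-\gamma_t)\bar\lambda_{t,k}$, and $\lambda_{(t+1)-(i+1),0} = \lambda_{t-i,0}$, all of which follow directly from \cref{eq:define-lambdas}. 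The leftover $(1-\gamma_t)B\,u_t(p,M\^{1:H}) + \gamma_t w_t$ then reproduces the $i=1$ term: here $\alpha_{t+1,1} = 1$, $A^0 = I$, $\bar\lambda_{t+1,1} = 1-\gamma_t$ (matching the $Bp$ part), $\lambda_{t+1,1+j} = (1-\gamma_t)\lambda_{t,j}$ (matching the $M\^j w_{t-j}$ parts), and $\lambda_{t+1,1} = \gamma_t$ (matching the $w_t$ part). Combining the two pieces gives the stated formula at $t+1$.

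The one place I would be careful is the boundary conventions $w_0 = x_1$, $\gamma_0 = 1$, and $w_t = 0$ for $t < 0$: these are arranged exactly so that whenever an out-of-range weight (such as $\lambda_{0,0}$, which arises at $i = t$) would appear, it is multiplied by $\bar\lambda_{t,t} = \prod_{j=1}^{t}(1-\gamma_{t-j}) = 0$ and by $w_{-j} = 0$, so the corresponding terms vanish and never need to be assigned a value. Apart from this bookkeeping, the argument is purely mechanical — the only thing to manage is keeping the three families of weights, the powers of $A$, and the index shift in sync — so I do not anticipate a substantive obstacle.
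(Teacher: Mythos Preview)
Your proposal is correct and is essentially the same approach as the paper, which simply says the formulas follow by ``unrolling \cref{eq:pop-control} with the controls $u_s := \pi_s^{p, M\^{1:H}}(w_{s-1:s-H})$'' together with the conventions on $w_0,\gamma_0,w_{t<0}$. Your induction is exactly that unrolling carried out in detail, and your verification of the weight recursions $\lambda_{t+1,k+1}=(1-\gamma_t)\lambda_{t,k}$, $\bar\lambda_{t+1,k+1}=(1-\gamma_t)\bar\lambda_{t,k}$, and the handling of boundary terms via $\bar\lambda_{t,t}=0$ and $\lambda_{t,i}=0$ for $i>t$ are all correct.
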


\begin{proof}
This follows unrolling \cref{eq:pop-control} with the controls $u_s := \pi_s^{p, M\^{1:H}}(w_{s-1:s-H})$, and recalling the definitions in \cref{eq:define-lambdas} and the conventions $w_0 := x_1$, $\gamma_0 = 1$, and $w_t = 0$ for $t < 0$.
\end{proof}

  In a sense, \cref{alg:gpc} performs online convex optimization over the set of such policies. Even if we can manage to show that doing so yields a good regret guarantee with respect to the class of policies $\{ \pi_t^{p, M\^{1:H}} \ : \ (p, M\^{1:H}) \in \Xgpc[d,H,a,b] \}$ for some choices of $H,a,b$, why should this imply a good regret guarantee with respect to the class $\Ksim_\tau(\ML)$ of linear policies (see \cref{def:stable-simplex-lds})? \cref{lem:approx-gpc} bridges this gap, showing that any policy in $\Ksim_\tau(\ML)$ can be approximated by a policy of the form $(\pi_s^{p, M\^{1:H}})_{s \in [T]}$. 
  \begin{lemma}[Approximation]
    \label{lem:approx-gpc}
    Suppose that the cost functions $c_1,\dots,c_T$ of $\ML$ satisfy \cref{asm:convex-cost} with Lipschitz parameter $L$. Fix $\tau > 0$, $\ep \in (0,1)$, and  any $K^\st \in \sstoch{\leq 1}$ such that $\tmix(\Cr{K^\st}) \leq \tau$. Write $\alpha^\st := \infonenorm{K^\st}$. If $H \geq \tau \lceil \log_2 (2LT^2/\ep) \rceil$, then there is some $(p,M\^{1:H}) \in \sdist{\alpha^\st} \times (\sstoch{\alpha^\st})^H$ such that 
    \begin{align}
\sum_{t=1}^T c_t(x_t(p, M\^{1:H}), u_t(p, M\^{1:H})) - \sum_{t=1}^T c_t(x_t(K^\st), u_t(K^\st)) \leq \ep\label{eq:ct-m-kstar},
    \end{align}
where $x_t(K^\st),u_t(K^\st)$ denote the state and control that one would observe at time step $t$ if one were to play according to the policy $x \mapsto K^\st x$ at all time steps $1 \leq s \leq t$.
  \end{lemma}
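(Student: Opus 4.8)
Since $\tmix(\Cr{K^\st})\le\tau<\infty$, the column‑stochastic matrix $\Cr{K^\st}=(1-\alpha^\st)A+BK^\st$ has a unique stationary distribution $\pi^\st\in\Delta^d$. I propose to take
\[
p:=K^\st\pi^\st,\qquad M\^j:=K^\st(\Cr{K^\st})^{j-1}\quad(j\in[H]).
\]
Writing $K^\st=\alpha^\st M_0$ with $M_0\in\stoch$ (possible since $\infonenorm{K^\st}=\alpha^\st$), each $M\^j=\alpha^\st M_0(\Cr{K^\st})^{j-1}$ with $M_0(\Cr{K^\st})^{j-1}\in\stoch$, so $M\^j\in\sstoch{\alpha^\st}$; likewise $p=\alpha^\st M_0\pi^\st\in\sdist{\alpha^\st}$, so $(p,M\^{1:H})\in\sdist{\alpha^\st}\times(\sstoch{\alpha^\st})^H$. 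I will prove the two pointwise estimates
\[
\tnorm{u_t(p,M\^{1:H})-u_t(K^\st)}\le\frac{\ep}{2LT^2},\qquad \tnorm{x_t(p,M\^{1:H})-x_t(K^\st)}\le\frac{\ep}{2LT}\qquad(t\in[T]).
\]
Granting these, \cref{asm:convex-cost} bounds the per‑step cost gap by $L(\ep/(2LT)+\ep/(2LT^2))\le\ep/T$, and summing over $t\le T$ gives \cref{eq:ct-m-kstar}.

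\textbf{Control bound.} Under the policy $x\mapsto K^\st x$ the control has fixed $\ell_1$‑mass $\alpha^\st$, so the state recursion collapses to $x_{t+1}(K^\st)=(1-\gamma_t)\Cr{K^\st}x_t(K^\st)+\gamma_t w_t$; unrolling with the conventions $w_0=x_1,\gamma_0=1$ yields $x_t(K^\st)=\sum_{i=1}^t\lambda_{t,i}(\Cr{K^\st})^{i-1}w_{t-i}$ and hence $u_t(K^\st)=\sum_{i=1}^t\lambda_{t,i}K^\st(\Cr{K^\st})^{i-1}w_{t-i}$. Comparing with $u_t(p,M\^{1:H})=\lambda_{t,0}p+\sum_{j=1}^H\lambda_{t,j}M\^j w_{t-j}$, and using $\sum_{i=1}^t\lambda_{t,i}=1$ together with $\lambda_{t,0}=1-\sum_{i=1}^H\lambda_{t,i}=\sum_{i>H}\lambda_{t,i}$, the two controls coincide for $t\le H$, while for $t>H$
\[
u_t(p,M\^{1:H})-u_t(K^\st)=\sum_{i=H+1}^t\lambda_{t,i}\,K^\st\bigl(\pi^\st-(\Cr{K^\st})^{i-1}w_{t-i}\bigr).
\]
For $i\ge H+1$ we have $i-1\ge H\ge\tau\lceil\log_2(2LT^2/\ep)\rceil\ge\tmix(\Cr{K^\st})\lceil\log_2(2LT^2/\ep)\rceil\ge\tmix(\Cr{K^\st},\ep/(2LT^2))$, where the last inequality is the standard mixing bound $\tmix(X,\delta)\le\lceil\log_2(1/\delta)\rceil\tmix(X)$ (which follows from submultiplicativity $\bar D_X(a+b)\le\bar D_X(a)\bar D_X(b)$ and $\bar D_X(\tmix(X))\le 2D_X(\tmix(X))\le\tfrac12$). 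Since $w_{t-i}\in\Delta^d$, this gives $\tnorm{\pi^\st-(\Cr{K^\st})^{i-1}w_{t-i}}\le D_{\Cr{K^\st}}(i-1)\le\ep/(2LT^2)$; combined with $\infonenorm{K^\st}=\alpha^\st\le1$ and $\sum_i\lambda_{t,i}\le1$, the control bound follows.

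\textbf{State bound.} The crucial observation is that $\|u_t(p,M\^{1:H})\|_1=\alpha^\st$ as well: every $\lambda_{t,i}\ge0$, $p\ge0$ and $M\^j w_{t-j}\ge0$, with $\|p\|_1=\alpha^\st$ and $\|M\^j w_{t-j}\|_1=\alpha^\st$ whenever $w_{t-j}\in\Delta^d$ (and the remaining terms vanish, as $\lambda_{t,j}=0$ for $j>t$), while $\lambda_{t,0}+\sum_{j=1}^H\lambda_{t,j}=1$. Hence both trajectories obey \cref{eq:pop-control} with the same control magnitude $\alpha^\st$, so subtracting the two update rules, the error $e_t:=x_t(p,M\^{1:H})-x_t(K^\st)$ satisfies $e_1=0$ and $e_{t+1}=(1-\gamma_t)\bigl[(1-\alpha^\st)Ae_t+B(u_t(p,M\^{1:H})-u_t(K^\st))\bigr]$; using $\oneonenorm{A},\oneonenorm{B}\le1$ and $\gamma_t,\alpha^\st\in[0,1]$ gives $\tnorm{e_{t+1}}\le\tnorm{e_t}+\tnorm{u_t(p,M\^{1:H})-u_t(K^\st)}$, and telescoping with the control bound yields $\tnorm{e_t}\le(t-1)\cdot\ep/(2LT^2)\le\ep/(2LT)$.

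\textbf{Main obstacle.} The conceptual crux is exactly the step that makes the error recursion \emph{linear}: the factor $(1-\|u_t\|_1)$ in \cref{eq:pop-control} couples the control‑magnitude discrepancy with the state, so without first verifying that the disturbance‑action policy $\pi^{p,M\^{1:H}}$ (like the comparator) always plays controls of $\ell_1$‑mass exactly $\alpha^\st$, the clean telescoping bound on $\tnorm{e_t}$ would fail. The remaining work is bookkeeping with the time‑varying weights $\lambda_{t,i}$ near the initial boundary (the conventions $w_0=x_1,\gamma_0=1$, and $\lambda_{t,i}=0$ for $i>t$), and a routine invocation of submultiplicativity of mixing.
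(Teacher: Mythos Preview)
Your proof is correct and follows essentially the same approach as the paper: you use the identical construction $p=K^\st\pi^\st$, $M\^j=K^\st(\Cr{K^\st})^{j-1}$, and derive the same control bound $\tnorm{u_t(p,M\^{1:H})-u_t(K^\st)}\le\ep/(2LT^2)$ via the same mixing argument. The only difference is in the state bound: the paper unrolls the dynamics to write $x_t=\sum_{i=1}^t(1-\alpha^\st)^{i-1}A^{i-1}(\bar\lambda_{t,i}Bu_{t-i}+\lambda_{t,i}w_{t-i})$ and sums the control discrepancies against the weights $\bar\lambda_{t,i}$, whereas you run a direct recursion $\tnorm{e_{t+1}}\le\tnorm{e_t}+\tnorm{u_t(p,M\^{1:H})-u_t(K^\st)}$ and telescope; both yield $\ep/(2LT)$ and rely on the same key observation that both trajectories have control magnitude exactly $\alpha^\st$, so the difference is purely presentational.
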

  \begin{proof}
For each $t$, if the controls $u_t$ are chosen to satisfy $u_t := K^\st \cdot x_t$, then we have $\alpha_t := \infonenorm{K^\st}$. Moreover, for $1 \leq t \leq T$, we can write
    \begin{align}
      x_{t}(K^\st) &= \sum_{i=1}^{t} (\Cr{K^\st})^{i-1} \cdot \left( \prod_{j=1}^{i-1} (1-\gamma_{t-j}) \right) \cdot \gamma_{t-i} w_{t-i}= \sum_{i=1}^{t} \Cr{K^\st}^{i-1} \cdot \lambda_{t,i} \cdot w_{t-i},\label{eq:xtkst}\\
      u_t(K^\st) &= K^\st \cdot x_t(K^\st) = \sum_{i=1}^t K^\st \Cr{K^\st}^{i-1} \cdot \lambda_{t,i} \cdot w_{t-i}\label{eq:utkst}
    \end{align}
    where $\Cr{K^\st}$ was defined in \cref{eq:crk-def}. By the assumption that $\tmix(\Cr{K^\st}) \leq \tau$, there is some unique $p' \in \Delta^d$ such that that $\Cr{K^\st} \cdot p' = p'$ (see \cref{def:mixing-time}). Moreover, by our bound on $H$ and \cref{lemma:d-dbar}, for any $i > H$ and $q \in \Delta^d$ we have $\tvnorm{\Cr{K^\st}^{i-1} q}{p'} \leq (1/2)^{H/\tau} \leq \ep/(2LT^2)$. Using that $\lambda_{t,0} = \sum_{i=H+1}^t \lambda_{t,i}$ by the definition in \cref{eq:define-lambdas}, 
    \begin{align}
\norm{\sum_{i=H+1}^t K^\st \Cr{K^\st}^{i-1} \lambda_{t,i} w_{t-i}  -\lambda_{t,0} \cdot K^\st  p'}_1
&= \norm{K^\st \sum_{i=H+1}^t \lambda_{t,i}(\Cr{K^\st}^{i-1} w_{t-i} - p')}_1 \nonumber \\ 
&\leq \sum_{i=H+1}^t \lambda_{t,i} \cdot \tvnorm{\Cr{K^\st}^{i-1} w_{t-i}}{p'} \leq \ep/(2LT^2)\label{eq:ih1t-ep}.
    \end{align}
    For $1 \leq i \leq H$, let us define $M\^i := K^\st \Cr{K^\st}^{i-1} \in \sstoch{\alpha^\st}$ and $p := K^\st \cdot p' \in \sdist{\alpha^\st}$. Using \cref{eq:utpm,eq:utkst}, we have that
    \begin{align}
      \tvnorm{u_t(K^\st)}{u_t(p, M\^{1:H})} &= 
      \norm{\sum_{i=1}^t K^\st \Cr{K^\st}^{i-1} \lambda_{t,i} w_{t-i} - \lambda_{t,0} p - \sum_{j=1}^H \lambda_{t,j} M^{[j]} w_{t-j} }_1 \nonumber \\
      &= \norm{\sum_{i=H+1}^t K^\st \Cr{K^\st}^{i-1} \lambda_{t,i} w_{t-i} - \lambda_{t,0} \cdot K^\st p'}_1 \nonumber \\
      &\leq \ep/(2LT^2)\label{eq:ut-perturb},
    \end{align}
    where the final inequality uses \cref{eq:ih1t-ep}.

    Next, we may bound the difference in state vectors using \cref{eq:ut-perturb}, as follows: for any sequence of $(u_i)_{i=1}^t$ with $\norm{u_i}_1 = \alpha^\st$ for all $i$, we can expand \cref{eq:pop-control} to get
    \begin{align}
x_t = \sum_{i=1}^{t} (1-\alpha^\st)^{i-1} A^{i-1} (\bar \lambda_{t,i} Bu_{t-i} + \lambda_{t,i} w_{t-i})\nonumber.
    \end{align}
    Thus, for any $t \in [T]$, we have
    \begin{align}
\tvnorm{x_t(K^\st)}{x_t(p, M\^{1:H})} &\leq  \sum_{i=1}^t  (1-\alpha^\st)^{i-1} \bar \lambda_{t,i}  \cdot \tnorm{A^{i-1} B \cdot \left(u_{t-i}(K^\st) - u_{t-i}(p, M\^{1:H})\right)}\nonumber \\
&\leq \frac{\ep}{2LT^2} \cdot \sum_{i=1}^t \bar \lambda_{t,i} \nonumber \\
&\leq \frac{\ep}{2LT}\label{eq:xt-perturb}.
    \end{align}
    By \cref{eq:ut-perturb,eq:xt-perturb,asm:convex-cost}, it follows that, for each $t \in [T]$,
    \begin{align}
\left| c_t(x_t(p, M\^{1:H}), u_t(p, M\^{1:H})) - c_t(x_t(K^\st), u_t(K^\st)) \right| &\leq  \ep/T\nonumber,
    \end{align}
    which yields the claimed bound \cref{eq:ct-m-kstar}. 
  \end{proof}

The following facts about distance to stationarity are well-known (see e.g. \cite[Section 4.4]{levin2006markov}):

\begin{lemma}\label{lemma:d-dbar}
Let $X \in \stoch$ have a unique stationary distribution $\pi$. Then the following inequalities hold for any $c,t \in \BN$:
\begin{enumerate}
    \item $D_X(t) \leq \bar D_X(t) \leq 2D_X(t)$.
    \item $\bar D_X(ct) \leq \bar D_X(t)^c$.
\end{enumerate}
\end{lemma}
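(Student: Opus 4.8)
The plan is to treat both inequalities as standard statements about total-variation mixing, specialized to the column-stochastic (left-action) convention used here. Two structural facts are used repeatedly: first, $X$ fixes its stationary distribution, i.e. $X^t\pi = \pi$; second, because $X \in \stoch$ is column-stochastic, $X$ maps the subspace $V_0 := \{v \in \BR^d : \sum_i v_i = 0\}$ of mean-zero vectors into itself, since for $v \in V_0$ we have $\sum_i (Xv)_i = \sum_j v_j \sum_i X_{ij} = \sum_j v_j = 0$.

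For Part 1, the left inequality $D_X(t) \le \bar D_X(t)$ follows by specializing $q = \pi$ in the definition of $\bar D_X$: for any $p \in \Delta^d$, $\tvnorm{X^t p}{\pi} = \tnorm{X^t(p-\pi)} \le \bar D_X(t)$ using $X^t\pi = \pi$, and taking the supremum over $p$ gives the claim. The right inequality $\bar D_X(t) \le 2 D_X(t)$ follows from a triangle inequality centered at $\pi$: for any $p,q \in \Delta^d$, $\tnorm{X^t(p-q)} \le \tvnorm{X^t p}{\pi} + \tvnorm{X^t q}{\pi} \le 2D_X(t)$, and taking the supremum over $p,q$ concludes.

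For Part 2, the key step is to realize $\bar D_X(t)$ as an operator norm on $V_0$, where submultiplicativity is automatic. First I would establish the set identity $\{p-q : p,q \in \Delta^d\} = \{v \in V_0 : \tnorm{v} \le 2\}$. The forward inclusion is immediate since $p-q \in V_0$ and $\tnorm{p-q} \le \tnorm{p} + \tnorm{q} = 2$. For the reverse inclusion, given $v \in V_0$ with $\tnorm{v} \le 2$, write $v = v^+ - v^-$ for the positive and negative parts; since $\sum_i v_i = 0$ we have $\sum_i v_i^+ = \sum_i v_i^- = \tnorm{v}/2 \le 1$, so fixing any $r \ge 0$ with $\sum_i r_i = 1 - \tnorm{v}/2$ and setting $p := v^+ + r$, $q := v^- + r$ yields $p,q \in \Delta^d$ with $p - q = v$. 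Consequently $\bar D_X(t) = \sup_{v \in V_0,\, \tnorm{v}\le 2} \tnorm{X^t v} = 2M(t)$, where $M(t) := \sup_{v \in V_0,\, \tnorm{v}\le 1}\tnorm{X^t v}$ is the $\ell_1 \to \ell_1$ operator norm of the restriction $X^t|_{V_0}: V_0 \to V_0$ (well-defined by the invariance noted above). Since operator norms are submultiplicative and $X^{ct}|_{V_0} = (X^t|_{V_0})^c$, we get $M(ct) \le M(t)^c$. Translating back, $\bar D_X(ct) = 2M(ct) \le 2M(t)^c = 2^{1-c}\bar D_X(t)^c \le \bar D_X(t)^c$, where the final inequality uses $c \ge 1$.

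Each inequality is short, and the only step demanding genuine care is the reverse inclusion in the set identity $\{p-q\} = \{v \in V_0 : \tnorm{v} \le 2\}$, which underpins the operator-norm reformulation: one must check that an arbitrary mean-zero vector of $\ell_1$-norm at most $2$ is genuinely a difference of two probability vectors, and it is precisely in the construction of the "padding" vector $r$ that the bound $\tnorm{v} \le 2$ is consumed. Everything else reduces to the triangle inequality, the stationarity relation $X^t\pi = \pi$, and submultiplicativity of operator norms, all of which are routine.
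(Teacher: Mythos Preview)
Your proof is correct. The paper does not give its own proof of this lemma, instead citing \cite[Section~4.4]{levin2006markov} where these facts are standard; your argument is essentially the linear-algebraic version of that standard proof, with the operator-norm identification $\bar D_X(t) = 2\,\|X^t|_{V_0}\|_{1\to 1}$ cleanly replacing the coupling argument used in Levin et al.\ for submultiplicativity. As you note, Part~2 requires $c\ge 1$ (the bound $2^{1-c}\le 1$ fails at $c=0$), but this is an imprecision in the statement rather than your argument, and all uses of the lemma in the paper have $c\ge 1$.
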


  \subsection{Bounding the memory mismatch error}
  \label{sec:memory-mismatch}
In this section, we prove \cref{lem:mem-mismatch-gpc}, which allows us to show that an algorithm with bounded aggregate loss with respect to the loss functions $\ell_t$ defined on \cref{line:choose-lt} of \cref{alg:gpc} in fact has bounded aggregate cost with respect to the cost functions $c_t$ chosen by the adversary. 

First,  we introduce two useful lemmas on the mixing time of matrices (\cref{def:mixing-time}).
  \begin{lemma}
    \label{lem:mixing-perturb}
    Let $X \in \stoch$ have a unique stationary distribution. Let $Y \in \stoch$ satisfy $\oneonenorm{X-Y} \leq \delta$. Then for any $t \in \BN$,
    \begin{align}
      D_Y(t) \leq 2t\delta + 2 D_X(t)\nonumber.
    \end{align}
  \end{lemma}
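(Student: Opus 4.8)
The plan is to bound $D_Y(t) = \sup_{p \in \Delta^d} \tnorm{Y^t p - \pi_Y}$, where $\pi_Y$ is the stationary distribution of $Y$, by routing through $X^t p$ and the stationary distribution $\pi_X$ of $X$ via two triangle inequalities. The two ingredients are (i) a perturbation estimate showing that $Y^t$ and $X^t$ act similarly on the simplex, and (ii) the definition of $D_X(t)$, which directly controls $\tnorm{X^t p - \pi_X}$ for $p \in \Delta^d$.

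First I would establish the perturbation estimate. Using the telescoping identity $X^t - Y^t = \sum_{i=0}^{t-1} X^i (X - Y) Y^{t-1-i}$, submultiplicativity of the operator norm $\oneonenorm{\cdot}$, and the fact that any product of column-stochastic matrices is again column-stochastic and hence has $\oneonenorm{\cdot} = 1$, one gets
\[ \oneonenorm{X^t - Y^t} \;\le\; \sum_{i=0}^{t-1} \oneonenorm{X^i}\,\oneonenorm{X - Y}\,\oneonenorm{Y^{t-1-i}} \;\le\; t\delta. \]
Consequently, for every $p \in \Delta^d$ we have $\tnorm{Y^t p - X^t p} \le \oneonenorm{X^t - Y^t}\cdot\tnorm{p} \le t\delta$, and so
\[ \tnorm{Y^t p - \pi_X} \;\le\; \tnorm{Y^t p - X^t p} + \tnorm{X^t p - \pi_X} \;\le\; t\delta + D_X(t). \]

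To finish, apply the last inequality with the choice $p := \pi_Y$: since $Y^t \pi_Y = \pi_Y$, this yields $\tnorm{\pi_Y - \pi_X} \le t\delta + D_X(t)$. Then for an arbitrary $p \in \Delta^d$,
\[ \tnorm{Y^t p - \pi_Y} \;\le\; \tnorm{Y^t p - \pi_X} + \tnorm{\pi_X - \pi_Y} \;\le\; 2t\delta + 2 D_X(t), \]
and taking the supremum over $p$ gives the claim. The argument is mostly bookkeeping with the triangle inequality; the step that requires genuine care is the perturbation bound $\oneonenorm{X^t - Y^t} \le t\delta$, which uses both submultiplicativity of the $\ell_1\to\ell_1$ operator norm and the exact value $\oneonenorm{M}=1$ for column-stochastic $M$. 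The one implicit point is that $Y$, like $X$, must have a unique stationary distribution $\pi_Y$ for $D_Y(t)$ to be finite; in the intended application $Y$ is a small perturbation of $X$ and so inherits this property, which is the regime in which the lemma is used.
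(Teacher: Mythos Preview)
Your proof is correct and shares the paper's key ingredient: the telescoping (hybrid) bound $\tnorm{X^t v - Y^t v} \le t\delta$ for $v \in \Delta^d$. The only difference is in the wrap-up. The paper avoids referencing $\pi_Y$ altogether by working with the coupling quantity $\bar D$: it bounds $D_Y(t) \le \bar D_Y(t) = \sup_{p,q}\tnorm{Y^t(p-q)} \le 2t\delta + \bar D_X(t) \le 2t\delta + 2D_X(t)$, using $D \le \bar D \le 2D$ from \cref{lemma:d-dbar} at the two ends. Your route instead goes through $\pi_X$ and $\pi_Y$ via two triangle inequalities. Both are equally short; the paper's version has the mild advantage that its computation never needs $\pi_Y$ to exist, so the implicit assumption you flag only enters at the very last step $D_Y(t) \le \bar D_Y(t)$ rather than throughout.
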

  \begin{proof}
    For any $v \in \Delta^d$, we have $\tnorm{Xv - Yv} \leq \delta$. A hybrid argument then yields that for any $t \geq 1$, $\tnorm{X^tv - Y^t v} \leq t \delta$. Then
    \begin{align}
\bar D_Y(t) \leq \sup_{p,q \in \Delta^d} \tnorm{Y^t (p-q)} \leq 2t\delta + \sup_{p,q \in \Delta^d} \tnorm{X^t(p-q)} \leq 2t\delta + \bar D_X(t) \leq 2t\delta + 2 D_X(t)\nonumber,
    \end{align}
    where the first and last inequalities apply the first item of \cref{lemma:d-dbar}.  
  \end{proof}

    \begin{lemma}
    \label{lem:kstar-casework}
    Suppose that $A,B \in \stoch$, $K^\st \in \sstoch{\leq 1}$ satisfy $\tmix(A) > 4 \cdot \tmix(\Cr{K^\st})$. Then $\infonenorm{K^\st} > 1/(96 \cdot \tmix(\Cr{K^\st}))$.
  \end{lemma}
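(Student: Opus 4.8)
`````
\textbf{Plan.} We want to show that if the matrix $A$ mixes much more slowly than $\Cr{K^\st}$, then $K^\st$ cannot be ``small'': it must have $\infonenorm{K^\st} > 1/(96\tmix(\Cr{K^\st}))$. The intuition is that $\Cr{K^\st} = (1-\oneonenorm{K^\st})A + BK^\st$, so if $\oneonenorm{K^\st} =: \alpha^\st$ is tiny, then $\Cr{K^\st}$ is a small perturbation of $A$ (of size $O(\alpha^\st)$ in $\oneonenorm{\cdot}$), hence it must mix at roughly the same rate as $A$ --- contradicting the hypothesis that $\tmix(A)$ is much larger than $\tmix(\Cr{K^\st})$. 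The proof is therefore a contrapositive/contradiction argument built on \cref{lem:mixing-perturb}.

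\textbf{Key steps.} Write $\tau^\st := \tmix(\Cr{K^\st})$ and $\alpha^\st := \oneonenorm{K^\st}$, and suppose toward a contradiction that $\alpha^\st \leq 1/(96\tau^\st)$. First bound the perturbation: since $BK^\st \in \sstoch{\alpha^\st}$ has columns with $\ell_1$-norm $\alpha^\st$, and $(1-\alpha^\st)A$ differs from $A$ by a matrix whose columns have $\ell_1$-norm $\alpha^\st$, we get
\[
\oneonenorm{A - \Cr{K^\st}} = \oneonenorm{A - (1-\alpha^\st)A - BK^\st} \leq \oneonenorm{\alpha^\st A} + \oneonenorm{BK^\st} \leq 2\alpha^\st.
\]
Next, apply \cref{lem:mixing-perturb} with $X = \Cr{K^\st}$ (which has a unique stationary distribution, since $\tmix(\Cr{K^\st}) < \infty$) and $Y = A$, with $\delta = 2\alpha^\st$, at time $t = 4\tau^\st$. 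Since $4\tau^\st \geq \tau^\st = \tmix(\Cr{K^\st})$, by \cref{lemma:d-dbar}(2) we have $D_{\Cr{K^\st}}(4\tau^\st) \leq \bar D_{\Cr{K^\st}}(4\tau^\st) \leq \bar D_{\Cr{K^\st}}(\tau^\st)^4 \leq (2 D_{\Cr{K^\st}}(\tau^\st))^4 \leq (2 \cdot 1/4)^4 = 1/16$. Then \cref{lem:mixing-perturb} gives
\[
D_A(4\tau^\st) \leq 2 \cdot 4\tau^\st \cdot 2\alpha^\st + 2 \cdot \frac{1}{16} = 16\tau^\st \alpha^\st + \frac 18 \leq \frac{16}{96} + \frac 18 = \frac 16 + \frac 18 < \frac 14.
\]
Hence $\tmix(A) \leq 4\tau^\st = 4\tmix(\Cr{K^\st})$, contradicting the hypothesis $\tmix(A) > 4\tmix(\Cr{K^\st})$. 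Therefore $\alpha^\st > 1/(96\tau^\st)$, as claimed.

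\textbf{Anticipated obstacle.} The argument is short, so the main points to be careful about are the constant-chasing and the applicability of the lemmas. Specifically: (i) confirming that $\Cr{K^\st}$ genuinely has a unique stationary distribution so that \cref{lem:mixing-perturb} and \cref{lemma:d-dbar} apply --- this follows since $\tmix(\Cr{K^\st}) \leq \tau < \infty$ forces finiteness of $D_{\Cr{K^\st}}(\cdot)$, which by the conventions in \cref{def:mixing-time} requires a unique stationary distribution; (ii) getting the $\oneonenorm{\cdot}$ perturbation bound right, noting $\oneonenorm{M} = \max_j \norm{M_{\cdot,j}}_1$ for the relevant matrices and that stochastic (or sub-stochastic) matrices have $\oneonenorm{\cdot} \leq 1$; and (iii) verifying the chosen time $t = 4\tau^\st$ balances the two error terms so their sum lands below $1/4$ --- which is exactly why the threshold $1/(96\tau^\st)$ appears. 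No single step is hard; the only ``obstacle'' is bookkeeping the constants so that the final sum is strictly below $1/4$.
`````
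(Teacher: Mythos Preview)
Your approach matches the paper's exactly: contrapositive via \cref{lem:mixing-perturb} at time $t=4\tau^\st$, after bounding $\oneonenorm{A-\Cr{K^\st}}\le 2\alpha^\st$. However, there is an arithmetic slip in the last display. You write
\[
16\tau^\st\alpha^\st+\tfrac18 \;\le\; \tfrac{16}{96}+\tfrac18 \;=\; \tfrac16+\tfrac18 \;<\; \tfrac14,
\]
but $\tfrac16+\tfrac18=\tfrac{7}{24}>\tfrac14$, so the inequality fails and you do not obtain $D_A(4\tau^\st)\le\tfrac14$.

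The fix is small. The proof of \cref{lem:mixing-perturb} actually establishes the sharper bound $D_Y(t)\le\bar D_Y(t)\le 2t\delta+\bar D_X(t)$ (the final step $\bar D_X(t)\le 2D_X(t)$ is only applied to reach the stated form). Using this directly with your computation $\bar D_{\Cr{K^\st}}(4\tau^\st)\le(1/2)^4=1/16$ gives
\[
D_A(4\tau^\st)\;\le\;16\tau^\st\alpha^\st+\tfrac{1}{16}\;\le\;\tfrac{16}{96}+\tfrac{1}{16}\;=\;\tfrac16+\tfrac{1}{16}\;=\;\tfrac{11}{48}\;<\;\tfrac14,
\]
and the contradiction goes through. (The paper's own display has $12\tau\alpha^\st$ where one would expect $16\tau\alpha^\st$; either way, the sharper $\bar D$ bound is what makes the constants close.)
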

  \begin{proof}
    Let us write $\tau := \tmix(\Cr{K^\st})$ and $\alpha^\st := \infonenorm{K^\st}$, so that $\Cr{K^\st} = (1-\alpha^\st) \cdot A + BK^\st$. Suppose for the purpose of contradiction that $\alpha^\st \leq 1/(96\tau)$. 
    We have that $\oneonenorm{A - \Cr{K^\st}} \leq 2\alpha^\st$. By \cref{lemma:d-dbar} and \cref{def:mixing-time}, we have $\bar D_{\Cr{K^\st}}(\tau) \leq 2 D_{\Cr{K^\st}}(\tau) \leq 1/2$, so $D_{\Cr{K^\st}}(4\tau) \leq \bar D_{\Cr{K^\st}}(4\tau) \leq 1/16$. Using \cref{lem:mixing-perturb} and the assumption on $\alpha^\st$, 
    \begin{align}
D_{A}(4\tau) \leq 12\tau\alpha^\st + 2 D_{\Cr{K^\st}}(4\tau) \leq 12\tau\alpha^\st + 1/8 \leq 1/4\nonumber,
    \end{align}
    meaning that $\tmix(A) \leq 4\tau$. 
  \end{proof}
The last step is to bound the memory mismatch error.
\begin{lemma}[Memory mismatch error]
    \label{lem:mem-mismatch-gpc}
Suppose that $(c_t)_t$ satisfy \cref{asm:convex-cost} with Lipschitz parameter $L$. Let $\tau,\beta > 0$, and suppose that $\Ksim_\tau(\ML)$ is nonempty. Consider the execution of $\GPCS$ (\cref{alg:gpc}) on $\ML$ with input $\tau$. If the iterates $(p_t, M_t\^{1:H})_{t \in [T]}$ satisfy
 \begin{align}
   \tnorm{p_t - p_{t+1}} \leq \beta, \qquad \max_{i \in [H]} \infonenorm{M_t\^i - M_{t+1}\^i} \leq \beta,
   \label{eq:ptmt-change}
 \end{align}
then for each $t \in [T]$, the loss function $\ell_t$ computed at time step $t$ satisfies
    \begin{align}
| \ell_t(p_t, M_t\^{1:H}) - c_t(x_t, u_t)| &\leq O\left(L \tau^3 \beta \log^3(1/\beta)\right)\nonumber.
    \end{align}
  \end{lemma}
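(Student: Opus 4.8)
For each $t$ I would bound the discrepancy, in state and control, between the actual run of $\GPCS$ and the counterfactual run that freezes the parameters at $(p_t,M_t\^{1:H})$ for \emph{all} of the first $t$ steps, and then invoke Lipschitzness of $c_t$. By \cref{asm:convex-cost},
\[ |\ell_t(p_t, M_t\^{1:H}) - c_t(x_t, u_t)| \leq L\big( \tnorm{x_t - x_t(p_t, M_t\^{1:H})} + \tnorm{u_t - u_t(p_t, M_t\^{1:H})} \big). \]
Comparing \cref{line:choose-ut} of \cref{alg:gpc} with \cref{eq:utpm}, both equal $\lambda_{t,0}p_t + \sum_{j=1}^H\lambda_{t,j}M_t\^j w_{t-j}$, so the control term is exactly zero and the whole task reduces to bounding $\tnorm{x_t - x_t(p_t,M_t\^{1:H})}$. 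I would do this via a \emph{windowed hybrid}: write $a_s := \tnorm{p_s} = \tnorm{u_s}$, fix a window length $W := \lceil 96\tau\ln(2/\beta)\rceil = O(\tau\log(1/\beta))$, set $s_0 := \max\{1,t-W\}$, and define $\hat x_{s_0} := x_{s_0}$ and $\hat x_{s+1} := (1-\gamma_s)[(1-a_t)A\hat x_s + B\tilde u_s] + \gamma_s w_s$ for $s_0 \leq s < t$, where $\tilde u_s := \pi_s^{p_t,M_t\^{1:H}}(w_{s-1:s-H})$ is the frozen-parameter disturbance-action control applied to the actual recent disturbances. By the triangle inequality it then suffices to bound the \emph{switching error} $\tnorm{x_t - \hat x_t}$ and the \emph{forgetting error} $\tnorm{\hat x_t - x_t(p_t,M_t\^{1:H})}$ separately.

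\textbf{Switching error.} The hypotheses \cref{eq:ptmt-change} telescope to $\tnorm{p_{t-i}-p_t} \leq i\beta$ and $\max_h\infonenorm{M_{t-i}\^h - M_t\^h} \leq i\beta$; hence $|a_t - a_{t-i}| \leq i\beta$, and, using $\sum_{j\geq0}\lambda_{t-i,j} = 1$ together with $\tnorm{w_{t-i-h}} \leq 1$, also $\tnorm{u_{t-i} - \tilde u_{t-i}} \leq i\beta$. Subtracting the recursions for $x_{s+1}$ and $\hat x_{s+1}$, adding and subtracting $(1-a_t)Ax_s$, and using column-stochasticity of $A$ and $B$ (so $\tnorm{Ax_s} = \tnorm{x_s}=1$ and $\tnorm{Bv} \leq \tnorm{v}$) gives $\tnorm{x_{s+1} - \hat x_{s+1}} \leq (1-a_t)\tnorm{x_s - \hat x_s} + |a_t - a_s| + \tnorm{u_s - \tilde u_s}$. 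Unrolling from $s_0$, where the two trajectories agree, yields $\tnorm{x_t - \hat x_t} \leq \sum_{i=1}^W (1-a_t)^{i-1}\cdot 2i\beta = O(W^2\beta) = O(\tau^2\beta\log^2(1/\beta))$.

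\textbf{Forgetting error and conclusion.} On $\{s_0,\dots,t\}$ the trajectories $\hat x$ and $x(p_t,M_t\^{1:H})$ obey the \emph{same} affine recursion — control power $a_t$, controls $\tilde u_s$ (which depend only on the actual, hence common, disturbances), and the actual $w_s$ — so their difference at time $t$ equals $(1-a_t)^{t-s_0}\big(\prod_{s=s_0}^{t-1}(1-\gamma_s)\big)A^{t-s_0}$ applied to $x_{s_0} - x_{s_0}(p_t,M_t\^{1:H})$; since both of these states lie in $\Delta^d$, its $\ell_1$-norm is $0$ if $t \leq W$ and otherwise at most $(1-a_t)^W\min\{2,\bar D_A(W)\}$. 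I would finish by a case split on the branch in \cref{line:choose-a0}: if $\tmix(A) \leq 4\tau$ then \cref{lemma:d-dbar} gives $\bar D_A(W) \leq 2^{1-W/(4\tau)} \leq \beta$ and $(1-a_t)^W\le 1$; if $\tmix(A) > 4\tau$ then, since $\Ksim_\tau(\ML)\neq\emptyset$ provides some $K^\st$ with $\tmix(\Cr{K^\st}) \leq \tau$, \cref{lem:kstar-casework} forces $\infonenorm{K^\st} > 1/(96\tau)$, so $\alphaub > 1/(96\tau)$ and $\alb = \max\{\alphalb, 1/(96\tau)\} \geq 1/(96\tau)$, whence $a_t \geq \alb$ and $(1-a_t)^W \leq e^{-W/(96\tau)} \leq \beta/2$. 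Either way the forgetting error is $\leq \beta$. Combining the two bounds, $\tnorm{x_t - x_t(p_t,M_t\^{1:H})} = O(\tau^2\beta\log^2(1/\beta))$, hence $|\ell_t(p_t, M_t\^{1:H}) - c_t(x_t, u_t)| = O(L\tau^2\beta\log^2(1/\beta))$, which is even stronger than the stated bound (since $\tau \geq 1$; in the degenerate regime where $\beta$ is not small, the trivial estimate $|\ell_t - c_t|\le 4L$ from Lipschitzness and boundedness of $\Delta^d\times\MI$ applies).

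\textbf{Expected main obstacle.} The hard part will be the forgetting step: I need a single window length $W = O(\tau\log(1/\beta))$ that works whether it is $A$ itself that mixes quickly or only the closed-loop matrices $\Cr{K}$ do (the branch in \cref{line:choose-a0} being exactly what distinguishes these regimes, via \cref{lem:kstar-casework} and nonemptiness of $\Ksim_\tau(\ML)$), and I need the hybrid to degenerate gracefully when $t \leq W$ — there the actual and frozen-parameter trajectories share the entire initial segment, so the forgetting term genuinely vanishes and only the $O(W^2\beta)$ switching term survives. A secondary subtlety, which is what makes the forgetting term purely an ``initial-condition'' term, is that the disturbance-action parametrization reads off only the recent disturbances and not any earlier states, so once the two trajectories reach time $s_0$ they can differ only through their state at $s_0$.
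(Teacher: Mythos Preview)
Your proposal is correct and in fact yields a sharper bound than the paper's own proof, saving a factor of $\tau\log(1/\beta)$. Both arguments share the same scaffolding: reduce to bounding $\tnorm{x_t - x_t(p_t,M_t\^{1:H})}$ (since $u_t = u_t(p_t,M_t\^{1:H})$ exactly), and then case-split on whether $\tmix(A)\le 4\tau$ (exploit mixing of $A$ via \cref{lemma:d-dbar}) or $\tmix(A)>4\tau$ (use \cref{lem:kstar-casework} and nonemptiness of $\Ksim_\tau(\ML)$ to force $a_0\ge 1/(96\tau)$, hence geometric decay in $(1-a_t)$). The difference is in how the state gap is controlled. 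The paper unrolls both $x_t$ and $x_t(p_t,M_t\^{1:H})$ to their explicit closed forms (\cref{eq:xt-alg} and \cref{eq:xtpm}) and compares term by term; this forces it to bound the discrepancy $|\alpha_{t,i}-\alpha_{t,i}^{p,M\^{1:H}}|$ between products of $i{-}1$ drifting factors $(1-\alpha_{t-j})$ versus $(1-a_t)$, which is only $\le i^2\beta$, and summing over $i\le t_0$ gives the $O(t_0^3\beta)=O(\tau^3\log^3(1/\beta)\,\beta)$ in the statement. Your recursive hybrid avoids ever forming these products: the one-step contraction $e_{s+1}\le (1-a_t)e_s + 2(t-s)\beta$ absorbs the control-power drift as a \emph{linear} forcing term, so the window sum is only $O(W^2\beta)=O(\tau^2\log^2(1/\beta)\,\beta)$. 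The forgetting step then cleanly isolates the only remaining discrepancy --- the initial state at $s_0$ --- and kills it via $\bar D_A(W)$ in the first case or $(1-a_t)^W$ in the second, exactly as you describe. This is a genuinely cleaner decomposition than the paper's, and could be used to improve the $\tau$-dependence in \cref{thm:stoch-lds-regret}.
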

  \begin{proof}
Recall that $u_t \in \sdist{}$ denotes the control chosen in step $t$ of \cref{alg:gpc}. We write $\alpha_t := \tnorm{u_t}$ and, for $i \in [t]$, $\alpha_{t,i} := \prod_{j=1}^{i-1} (1-\alpha_{t-j})$. Note that  $\alpha_t = \tnorm{p_t} = \infonenorm{M_t\^h}$ for each $h \in [H]$, by definition of $\Xgpc$. 
    
 Let us fix $t \in [T]$, and write $p := p_t, M\^{1:H} := M_t\^{1:H}$.    By \cref{eq:pop-control}, the state $x_t$ at step $t$ of \cref{alg:gpc} can be written as follows:
    \begin{align}
x_t = \sum_{i=1}^t \alpha_{t,i} \cdot A^{i-1} \cdot \left(\lambda_{t-i,0} \bar \lambda_{t,i} Bp_{t-i} + B \sum_{j=1}^H M_{t-i} \^j \lambda_{t,i+j} w_{t-i-j} + \lambda_{t,i} w_{t-i} \right)\label{eq:xt-alg}.
    \end{align}
    By assumption that $\Ksim_\tau(\ML)$ is nonempty, there is some $K^\st \in \sstoch{[\alphalb,\alphaub]}$ satisfying $\tmix(\Cr{K^\st}) \leq \tau$. Let us write $\alpha^\st := \infonenorm{K^\st}$, so that $\Cr{K^\st} = (1-\alpha^\st) A + BK^\st$.  Moreover, recall we have written in \cref{alg:gpc} that $\tau_A := \tmix(A)$. 

 For $1 \leq i \leq t$, define
    \begin{align}
      v_i &:=  \lambda_{t-i,0} \bar \lambda_{t,i} Bp_{t-i} + B \sum_{j=1}^H M_{t-i}\^j \lambda_{t,i+j} w_{t-i-j} + \lambda_{t,i} w_{t-i},  \nonumber\\
      v_i' &:= \lambda_{t-i,0} \bar \lambda_{t,i} Bp + B \sum_{j=1}^H M\^j \lambda_{t,i+j} w_{t-i-j} + \lambda_{t,i} w_{t-i}\nonumber.
    \end{align}
    Note that
    \begin{align}
\max\{ \tnorm{v_i}, \tnorm{v_i'}, \tnorm{v_i - v_i'}  \} \leq \lambda_{t-i,0} \bar \lambda_{t,i} + \sum_{j=1}^H \lambda_{t, i+j} + \lambda_{t,i} \leq 1\label{eq:v-vp-weak-bounds}.
    \end{align}
    
    Next, using \cref{eq:xt-alg} and \cref{eq:xtpm}, we have
    \begin{align}
      x_t - x_t(p, M\^{1:H}) =& \sum_{i=1}^t \left( \alpha_{t,i} \cdot A^{i-1} \cdot v_i - \alpha_{t,i}^{p, M\^{1:H}} \cdot A^{i-1} \cdot v_i' \right)\label{eq:xt-xtp-diff}.
    \end{align}
    The condition \cref{eq:ptmt-change} together with the triangle inequality gives that $\tnorm{Bp_{t-i} - Bp} \leq i\beta$ and $\tnorm{BM_{t-i}\^j w_{t-i-j} - BM\^j w_{t-i-j} } \leq i\beta$ for all $i, j \geq 1$, as well as $|\alpha_{t-i} - \alpha_t| \leq i\beta$ for all $i \geq 1$. It follows that $\tnorm{v_i-v_i'} \leq i\beta$ and  $|\alpha_{t,i} - \alpha_{t,i}^{p, M\^{1:H}}| \leq i^2 \beta$ for all $i \geq 1$ and that for any $\ell \geq 1$, 
    \begin{align}
      \left| \sum_{i=1}^\ell \alpha_{t,i} \cdot \tnorm{v_i} - \sum_{i=1}^\ell \alpha_{t,i}^{p, M\^{1:H}} \cdot \tnorm{v_i'} \right| &\leq \left| \sum_{i=1}^\ell |\alpha_{t,i} - \alpha_{t,i}^{p, M\^{1:H}}| \cdot \tnorm{v_i} \right| + \sum_{i=1}^\ell \alpha_{t,i}^{p, M\^{1:H}} \cdot \tnorm{v_i - v_i'} \nonumber\\
      &\leq  \ell^3 \beta. \label{eq:vnorm-diff}
    \end{align}
Using \cref{eq:vnorm-diff} and the fact that $\sum_{i=1}^t \alpha_{t,i} \tnorm{v_{i}} = \sum_{i=1}^t \alpha_{t,i}^{p, M\^{1:H}} \tnorm{v_{i}'}= 1$, we see
    \begin{align}
\left| \sum_{i=\ell+1}^t \alpha_{t,i} \cdot \tnorm{v_i} - \sum_{i=\ell+1}^t \alpha_{t,i}^{p, M\^{1:H}} \cdot \tnorm{v_i'} \right| \leq  \ell^3 \beta\label{eq:vnorm-diff-2}.
    \end{align}
     We consider the following two cases:

     \paragraph{Case 1: $\tau_A \leq 4\tau$.} 
     Write $t_0 = \lfloor \tau_A \log_2(1/\beta) \rfloor$. Let the stationary distribution of $A$ be denoted $p^\st \in \dist{}$. By \cref{lemma:d-dbar}, we have that for all $i \geq 1$, $\tnorm{A^i \cdot p - p^\st} \leq D_A(i) \leq 1/2^{\lfloor i/\tau_A \rfloor}$. 
    Now, using \cref{eq:xt-xtp-diff}, we may compute
    \begin{align}
      & \tnorm{x_t - x_t(p, M\^{1:H})}\nonumber\\
      &\leq \tnorm{\sum_{i=1}^{t_0} \left( \alpha_{t,i} \cdot A^{i-1} \cdot v_i - \alpha_{t,i}^{p, M\^{1:H}} \cdot A^{i-1} \cdot v_i' \right)} \nonumber\\
      &\qquad+ \tnorm{\sum_{i=t_0+1}^t \alpha_{t,i} \cdot \left(A^{i-1} \cdot v_i - \tnorm{v_i} \cdot p^\st\right) - \alpha_{t,i}^{p, M\^{1:H}} \cdot \left( A^{i-1} \cdot v_i' - \tnorm{v_i'} \cdot p^\st\right) }\nonumber\\
      &\qquad+ \tnorm{\sum_{i=t_0+1}^t \alpha_{t,i} \cdot \tnorm{v_i} \cdot p^\st - \alpha_{t,i}^{p, M\^{1:H}} \cdot \tnorm{v_i'} \cdot p^\st}\nonumber\\
      &\leq \sum_{i=1}^{t_0} \left(    \alpha_{t,i}  \cdot  \tnorm{A^{i-1} \cdot (v_i - v_i')} + |\alpha_{t,i} - \alpha_{t,i}^{p, M\^{1:H}}| \cdot \tnorm{v_i'}\right) \nonumber\\
      &\qquad+ \sum_{i=t_0+1}^t \left(\alpha_{t,i} \cdot \tnorm{A^{i-1} \cdot v_i - \tnorm{v_i} \cdot p^\st} + \alpha_{t,i}^{p, M\^{1:H}} \cdot \tnorm{A^{i-1} \cdot v_i' - \tnorm{v_i'} \cdot p^\st }\right) + t_0^3 \beta\nonumber\\
      &\leq t_0^3 \beta +  \sum_{i=1}^{t_0} i^2 \beta + \sum_{i=1}^{t_0 }  \alpha_{t,i} \cdot i\beta + \sum_{i=1 + t_0}^t 2 \cdot 1/2^{\lfloor i/\tau_A \rfloor} \nonumber\\
      &\leq C t_0^3 \beta \\ 
      &\leq C' \tau^3 \log^3(1/\beta) \cdot \beta\nonumber,
    \end{align}
    for some universal constants $C,C'$. Above, the first inequality uses the triangle inequality, the second inequality uses \cref{eq:vnorm-diff-2}, and the third inequality uses that $\tnorm{v_i - v_i'} \leq i\beta,\ |\alpha_{t,i} - \alpha_{t,i}^{p, M\^{1:H}}| \leq i^2 \beta$, $\tnorm{v_i'} \leq 1$. The fourth inequality uses the bound $\sum_{i=1+t_0}^t 2^{-\lfloor i/\tau_A\rfloor} \leq O(\tau_A \beta) \leq O(t_0 \beta)$.

    \paragraph{Case 2: $\tau_A > 4\tau$.} In this case, we claim that $a_0 \geq 1/(96\tau)$. By choice of $a_0$ in \cref{line:choose-a0} of \cref{alg:gpc} and the fact that $\tau_A > 4\tau$, it suffices to show that $\alphaub \geq 1/(96\tau)$: to see this, note that $\tau_A = \tmix(A) > 4\tau \geq 4 \cdot \tmix(\Cr{K^\st})$, so \cref{lem:kstar-casework} gives that $\infonenorm{K^\st} > 1/(96 \cdot \tmix(\Cr{K^\st})) \geq 1/(96 \tau)$. But $\infonenorm{K^\st} \leq \alphaub$, and thus $\alphaub > 1/(96\tau)$. This proves that $a_0 \geq 1/(96\tau)$. Hence $\alpha_i \geq a_0 \geq 1/(96\tau)$, by definition of $\Xgpc$, for all $i \in [T]$.

Write $t_0 := \lfloor 200\tau \cdot \log(1/\beta ) \rfloor$. Then for any $i > t_0$,
\[\max\{\alpha_{t,i}, \alpha_{t,i}^{p, M\^{1:H}} \} \leq (1-a_0)^{i-1} \leq (1-1/(96\tau))^{\lfloor 200\tau \cdot \log(1/\beta) \rfloor} \leq O(\beta).\]
Again using \cref{eq:xt-xtp-diff},
\begin{align}
  \tnorm{x_t - x_t(p, M\^{1:H})} 
  &\leq \sum_{i=1}^{t_0} \left( |\alpha_{t,i} - \alpha_{t,i}^{p, M\^{1:H}}| + \alpha_{t,i} \cdot \tnorm{v_i - v_i'} \right) + \sum_{i=t_0+1}^t (\alpha_{t,i} + \alpha_{t,i}^{p, M\^{1:H}})\nonumber\\
  &\leq \sum_{i=1}^{t_0} \left(i^2 \beta + i\beta\right) + \sum_{i=t_0+1}^t O(\beta) \cdot (1 - a_0)^{i-t_0-1} \\
  &\leq C t_0^3 \beta + C\beta/a_0 \nonumber \\ 
  &\leq C' \tau^3 \log^3(1/\beta) \cdot \beta\nonumber,
\end{align}
for some constants $C,C'$. Above, the first inequality uses \cref{eq:v-vp-weak-bounds}; the second inequality uses the previously derived bounds $|\alpha_{t,i} - \alpha_{t,i}^{p, M\^{1:H}}| \leq i^2 \beta$ and $\norm{v_i-v_i'}_1 \leq i\beta$; and the final inequality uses that $a_0 \geq 1/(96\tau)$.

    In both cases, we have $\norm{x_t - x_t(p,M^{[1:H]})}_1 \leq C'\tau^3 \beta \log^3(1/\beta)$ for some universal constant $C'$. By definition, the control $u_t$ chosen by \cref{alg:gpc} at time step $t$ is exactly $u_t = u_t(p, M\^{1:H})$. Thus, using $L$-Lipschitzness of $c_t$, we have
    \begin{align}
      \left| \ell_t(p, M\^{1:H}) - c_t(x_t, u_t) \right| 
      &=  \left| c_t(x_t(p, M\^{1:H}), u_t(p, M\^{1:H})) - c_t(x_t, u_t) \right| \nonumber\\
      &\leq  L \cdot \tnorm{x_t - x_t(p, M\^{1:H})} \nonumber\\
      &\leq C'L \tau^3\beta \log^3(1/\beta) \nonumber.
    \end{align}
    as desired.
  \end{proof}

  \subsection{Proof of \cref{thm:stoch-lds-regret}}
  \label{sec:main-ub-proof}
Before proving \cref{thm:stoch-lds-regret}, we establish that the loss functions $\ell_t$ used in $\GPCS$ are Lipschitz. 

\begin{lemma}\label{lemma:perp-mixing}
Let $X \in \stoch$ with $\tau := \tmix(X) < \infty$. Then for any $i \in \BN$ and $v \in \BR^d$ with $\langle \mathbbm{1},v\rangle = 0$, it holds that $\norm{X^i v}_1 \leq 2^{-\lfloor i/\tau\rfloor} \norm{v}_1$. 
\end{lemma}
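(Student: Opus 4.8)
The plan is to reduce the claim to a bound on the quantity $\bar D_X(\cdot)$ from \cref{def:mixing-time} and then invoke \cref{lemma:d-dbar}. First, since $\tmix(X) = \tau < \infty$, the matrix $X$ has a unique stationary distribution, so $D_X(\cdot)$ and $\bar D_X(\cdot)$ are all finite. The key observation is that a mean-zero vector is a scaled difference of two distributions: writing $v = v_+ - v_-$ for the decomposition of $v$ into its nonnegative and nonpositive parts (so $v_+, v_- \in \BR^d_{\geq 0}$ with disjoint supports), the hypothesis $\langle \mathbbm{1},v\rangle = 0$ gives $\langle\mathbbm{1},v_+\rangle = \langle\mathbbm{1},v_-\rangle = \tnorm{v}/2 =: s$; assuming $s > 0$ (else $v = 0$ and the claim is trivial), set $p := v_+/s \in \Delta^d$ and $q := v_-/s \in \Delta^d$, so that $v = s(p-q)$.

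Next, write $c := \lfloor i/\tau \rfloor$, so $i = c\tau + r$ with $r \geq 0$. Since $X$ is column-stochastic we have $\oneonenorm{X} \leq 1$, hence $\tnorm{X^i v} = \tnorm{X^r(X^{c\tau} v)} \leq \tnorm{X^{c\tau}v} = s\,\tnorm{X^{c\tau}(p - q)} \leq s\,\bar D_X(c\tau)$, where the final step is the definition of $\bar D_X$. By the second item of \cref{lemma:d-dbar}, $\bar D_X(c\tau) \leq \bar D_X(\tau)^c$, and by the first item together with the definition $\tmix(X) = \tmix(X,1/4)$ (which forces $D_X(\tau) \leq 1/4$), we have $\bar D_X(\tau) \leq 2 D_X(\tau) \leq 1/2$. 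Combining these bounds gives $\tnorm{X^i v} \leq s\cdot 2^{-c} = \tfrac12 \tnorm{v}\cdot 2^{-\lfloor i/\tau\rfloor} \leq 2^{-\lfloor i/\tau\rfloor}\tnorm{v}$, as desired.

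I do not anticipate a genuine obstacle: the argument is short, and the only points requiring care are (i) the mean-zero hypothesis, which is exactly what permits the normalization $v = s(p-q)$ into distributions (the statement fails without it, e.g.\ for $v = \mathbbm{1}$, which satisfies $\tnorm{X\mathbbm{1}} = \tnorm{\mathbbm{1}}$), and (ii) indices $i$ that are not divisible by $\tau$, which are handled by the elementary fact that $\oneonenorm{X} \leq 1$ for a column-stochastic $X$ (equivalently, that $\bar D_X$ is non-increasing).
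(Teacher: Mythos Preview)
Your proof is correct and follows essentially the same approach as the paper: decompose the mean-zero vector as a scaled difference of two distributions, bound $\tnorm{X^i(p-q)}$ by $\bar D_X$, and then use \cref{lemma:d-dbar} together with $D_X(\tau)\le 1/4$ to obtain the geometric decay. The only difference is cosmetic: where the paper writes $\bar D_X(i)\le \bar D_X(\tau)^{\lfloor i/\tau\rfloor}$ directly (implicitly using that $\bar D_X$ is non-increasing), you make this step explicit by splitting $i=c\tau+r$ and invoking $\oneonenorm{X}\le 1$.
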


\begin{proof}
Fix $v\in\mathbb{R}^d$ with $\langle \mathbbm{1}, v\rangle=0$. We can write $v=v^{+}-v^{-}$, where $v^{+}, v^{-} \in \BR_{\geq 0}^d$ are the non-negative and negative components of $v$ respectively. We have $\|v^{+}\|_1=\|v^{-}\|_1=\frac{1}{2}\|v\|_1$ since $\langle \mathbbm{1}, v\rangle=0$. Let $u_1:=2v^{+}/\|v\|_1$ and $u_2:=2v^{-}/\|v\|_1$, so that $u_1,u_2\in\Delta^d$. By \cref{lemma:d-dbar} and the definition of $\tmix(X)$, we have 
\[\|X^{i}(u_1-u_2)\|_1\leq \bar D_X(i) \leq \bar D_X(\tau)^{\lfloor i/\tau\rfloor} \leq (2D_X(\tau))^{\lfloor i/\tau\rfloor} \leq 2^{-\lfloor i/\tau\rfloor}.\]
Thus, $\|X^{\tau}v\|_1\le 2^{-\lfloor i/\tau\rfloor}\|v\|_1$.
\end{proof}
\allowdisplaybreaks
  \begin{lemma}[Lipschitzness of $\ell_t$]
    \label{lem:lt-lip}
Let $\tau>0$, and suppose that $\Ksim_\tau(\ML)$ is nonempty. For each $t \in [T]$, the loss function $\ell_t(p, M\^{1:H}) = c_t(x_t(p, M\^{1:H}), u_t(p, M\^{1:H}))$ (as defined on \cref{line:choose-lt} of \cref{alg:gpc}) is $O(L\tau^2)$-Lipschitz with respect to the norm $\gpcnorm{\cdot}$ in $\Xgpc$.
  \end{lemma}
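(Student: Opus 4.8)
Since $c_t$ is $L$-Lipschitz (\cref{asm:convex-cost}),
$|\ell_t(p, M\^{1:H}) - \ell_t(\tilde p, \tilde M\^{1:H})| \le L\,(\tnorm{x_t(p, M\^{1:H}) - x_t(\tilde p, \tilde M\^{1:H})} + \tnorm{u_t(p, M\^{1:H}) - u_t(\tilde p, \tilde M\^{1:H})})$, so it suffices to show that $(p, M\^{1:H}) \mapsto x_t(p, M\^{1:H})$ is $O(\tau^2)$-Lipschitz and $(p, M\^{1:H}) \mapsto u_t(p, M\^{1:H})$ is $1$-Lipschitz from $(\Xgpc, \gpcnorm{\cdot})$ to $\tnorm{\cdot}$. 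Fix two tuples $(p, M\^{1:H}), (\tilde p, \tilde M\^{1:H}) \in \Xgpc$, write $\delta := \gpcnorm{(p, M\^{1:H}) - (\tilde p, \tilde M\^{1:H})}$, and note $|\tnorm{p} - \tnorm{\tilde p}| \le \delta$ and $\oneonenorm{M\^j - \tilde M\^j} \le \delta$ for all $j$. The control bound is immediate from \cref{eq:utpm}: the weights $\lambda_{t,0}, \dots, \lambda_{t,H}$ are nonnegative and sum to $1$, each $w_{t-j}$ lies in $\Delta^d$ or vanishes, and $\tnorm{(M\^j - \tilde M\^j)w_{t-j}} \le \oneonenorm{M\^j - \tilde M\^j} \le \delta$, so the triangle inequality gives $\tnorm{u_t(p, M\^{1:H}) - u_t(\tilde p, \tilde M\^{1:H})} \le \delta$.

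For the state, expand \cref{eq:xtpm} as $x_t(p, M\^{1:H}) = \sum_{i=1}^t \alpha_{t,i} A^{i-1} v_i$, where $\alpha_{t,i} := (1-\tnorm{p})^{i-1}$ and $v_i := \lambda_{t-i,0}\bar\lambda_{t,i}Bp + B\sum_{j=1}^H \lambda_{t,i+j} M\^j w_{t-i-j} + \lambda_{t,i}w_{t-i} \in \BR_{\ge 0}^d$, and define $\tilde\alpha_{t,i}, \tilde v_i$ analogously. Using that $A, B$ are column-stochastic, that $M\^j, \tilde M\^j$ equal $\tnorm{p},\tnorm{\tilde p}$ times column-stochastic matrices, and that $w_s \in \Delta^d$ for $s \ge 0$ while $w_s = 0$ for $s < 0$, one obtains the elementary facts $\tnorm{v_i} = O(1)$, $\tnorm{v_i - \tilde v_i} \le \delta$, $|\langle\One, v_i - \tilde v_i\rangle| = |\tnorm{v_i} - \tnorm{\tilde v_i}| \le \delta$, $\sum_i \alpha_{t,i}\tnorm{v_i} = \sum_i \tilde\alpha_{t,i}\tnorm{\tilde v_i} = 1$ (each equals $\tnorm{x_t(\cdot)}$ for a point of $\Delta^d$), and $|\alpha_{t,i} - \tilde\alpha_{t,i}| \le (i-1)\delta$ (with the $i=1$ term vanishing). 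Writing $x_t(p, M\^{1:H}) - x_t(\tilde p, \tilde M\^{1:H}) = \sum_i (\alpha_{t,i} - \tilde\alpha_{t,i}) A^{i-1}v_i + \sum_i \tilde\alpha_{t,i} A^{i-1}(v_i - \tilde v_i)$, the obstruction is that naively the first sum is bounded only by $\delta\sum_i (i-1)\tnorm{v_i}$, and $\sum_i\tnorm{v_i}$ may be of order $T$.

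I resolve this by casework on $\tau_A := \tmix(A)$, paralleling \cref{lem:mem-mismatch-gpc}. \emph{Case $\tau_A \le 4\tau$:} let $p^\st$ be the stationary distribution of $A$ and set $\xi_i := A^{i-1}v_i - \tnorm{v_i}p^\st$, $\tilde\xi_i := A^{i-1}\tilde v_i - \tnorm{\tilde v_i}p^\st$. Because $\sum_i \alpha_{t,i}\tnorm{v_i} = \sum_i \tilde\alpha_{t,i}\tnorm{\tilde v_i} = 1$, the $p^\st$-components cancel and the difference collapses to $\sum_i (\alpha_{t,i} - \tilde\alpha_{t,i})\xi_i + \sum_i \tilde\alpha_{t,i} A^{i-1}((v_i - \tilde v_i) - \langle\One, v_i - \tilde v_i\rangle p^\st)$. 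Here $\tnorm{\xi_i} \le \tnorm{v_i} D_A(i-1) = O(2^{-\lfloor(i-1)/\tau_A\rfloor})$ by \cref{lemma:d-dbar} (using that $D_A$ is non-increasing), while $(v_i - \tilde v_i) - \langle\One, v_i - \tilde v_i\rangle p^\st$ is mean-zero with $\ell_1$-norm $O(\delta)$, hence contracts by $2^{-\lfloor(i-1)/\tau_A\rfloor}$ under $A^{i-1}$ by \cref{lemma:perp-mixing}. Combining with $|\alpha_{t,i} - \tilde\alpha_{t,i}| \le (i-1)\delta$, $\tilde\alpha_{t,i} \le 1$, and the geometric estimates $\sum_{k\ge1}k\,2^{-\lfloor k/\tau_A\rfloor} = O(\tau_A^2)$ and $\sum_{k\ge0}2^{-\lfloor k/\tau_A\rfloor} = O(\tau_A)$, this yields $\tnorm{x_t(p, M\^{1:H}) - x_t(\tilde p, \tilde M\^{1:H})} = O(\tau_A^2)\delta = O(\tau^2)\delta$. \emph{Case $\tau_A > 4\tau$:} nonemptiness of $\Ksim_\tau(\ML)$ together with \cref{lem:kstar-casework} forces $a_0 \ge 1/(96\tau)$ (the computation is identical to Case~2 of \cref{lem:mem-mismatch-gpc}), so $\tnorm{p}, \tnorm{\tilde p} \ge a_0$ and hence $\alpha_{t,i}, \tilde\alpha_{t,i} \le (1-a_0)^{i-1}$. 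Using $\tnorm{A^{i-1}(v_i - \tilde v_i)} \le \tnorm{v_i - \tilde v_i} \le \delta$ and, by the mean value theorem, $|\alpha_{t,i} - \tilde\alpha_{t,i}| \le (i-1)(1-a_0)^{i-2}\delta$, we sum directly to get $\tnorm{x_t(p, M\^{1:H}) - x_t(\tilde p, \tilde M\^{1:H})} \le \delta\,(\sum_{k\ge1}k(1-a_0)^{k-1} + \sum_{i\ge1}(1-a_0)^{i-1}) = O(a_0^{-2})\delta = O(\tau^2)\delta$. Plugging both bounds back into the displayed $L$-Lipschitz inequality finishes the proof.

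\textbf{Main obstacle.} The delicate step is the case $\tau_A \le 4\tau$: one must verify that the $O(1)$-magnitude ``bulk'' contributions to $x_t$ — the parts that do not scale with $\delta$ — cancel exactly after subtracting the appropriate multiples of $p^\st$, so that every surviving term is simultaneously $O(\delta)$ and geometrically decaying in $i$. This cancellation is precisely what eliminates the logarithmic factor present in \cref{lem:mem-mismatch-gpc}, where the analogous perturbation accumulates across time steps and such a factor is unavoidable.
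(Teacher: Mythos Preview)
Your proposal is correct and follows essentially the same approach as the paper: the same reduction via $L$-Lipschitzness of $c_t$, the same two-case split on whether $\tmix(A) \le 4\tau$, the same use of the identity $\sum_i \alpha_{t,i}\tnorm{v_i} = 1$ to cancel the stationary-distribution component in Case~1, and the same appeal to \cref{lem:kstar-casework} to get $a_0 \ge 1/(96\tau)$ in Case~2. Your Case~1 decomposition (first splitting $\alpha$-difference from $v$-difference, then subtracting $p^\st$) and your Case~2 bound $|\alpha_{t,i}-\tilde\alpha_{t,i}| \le (i-1)(1-a_0)^{i-2}\delta$ via the mean value theorem are organized slightly differently from the paper's direct manipulations, but the substance is identical.
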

  \begin{proof}
 By $L$-Lipschitzness of $c_t$ with respect to $\tnorm{\cdot}$, it suffices to show that for any $(p_1, M_1\^{1:H}), (p_2, M_2\^{1:H}) \in \Xgpc$, we have 
    \begin{align}
      \tnorm{x_t(p_1, M_1\^{1:H}) - x_t(p_2, M_2\^{1:H})} &\leq  O(\tau^2) \gpcnorm{(p_1, M_1\^{1:H}) - (p_2, M_2\^{1:H})} \label{eq:xtpm-lip}\\
      \tnorm{u_t(p_1, M_1\^{1:H}) - u_t(p_2, M_2\^{1:H})} &\leq  O(\tau^2)\gpcnorm{(p_1, M_1\^{1:H}) - (p_2, M_2\^{1:H})} \label{eq:utpm-lip}.
    \end{align}
    Fix $(p_1, M_1\^{1:H}), (p_2, M_2\^{1:H}) \in \Xgpc$, and write
    \begin{align}
\vep := \max \left\{ \tnorm{p_1 - p_2}, \max_{j \in [H]} \infonenorm{M_1\^j - M_2\^j} \right\}\nonumber.
    \end{align}
    Since $\vep \leq \gpcnorm{(p_1, M_1\^{1:H}) - (p_2, M_2\^{1:H})}$, it suffices to show that \cref{eq:xtpm-lip,eq:utpm-lip} hold with $\vep$ on the right-hand sides. 
    
    To verify \cref{eq:xtpm-lip} in this manner, we define, for $b \in \{1,2\}$, 
    \begin{align}
v_{i,b} := & \lambda_{t-i, 0} \bar \lambda_{t,i} \cdot B \cdot p_b + B \sum_{j=1}^H \lambda_{t,i+j} \cdot M_b\^j \cdot w_{t-i-j} + \lambda_{t,i} \cdot w_{t-i}\nonumber.
    \end{align}
    Since $\lambda_{t-i,0} \bar \lambda_{t,i} + \lambda_{t,i} + \sum_{j=1}^H \lambda_{t,i+j} \leq 1$, we have $\tnorm{v_{i,b}} \leq 1$ for each $i \in [t], b \in \{1,2\}$. Moreover, $\tnorm{v_{i,1} - v_{i,2}} \leq (\lambda_{t-i,0}\bar\lambda_{t,i}+\sum_{j=1}^H\lambda_{t,i+j}) \cdot \vep \leq \vep$. Write $\sigma_1 := \tnorm{p_1}, \sigma_2 := \tnorm{p_2}$, so that $|\sigma_1 - \sigma_2| \leq \vep$ and $|(1-\sigma_1)^i - (1-\sigma_2)^i| \leq i\vep$ for all $i \geq 1$. 
    Also note that for each $b \in \{1,2\}$,
    \begin{align}
      \sum_{i=1}^t (1-\sigma_b)^{i-1} \cdot \tnorm{v_{i,b}} =& \sum_{i=1}^t (1-\sigma_b)^{i-1} \cdot \bar \lambda_{t,i-1} \cdot  ((1-\gamma_{t-i}) \cdot \sigma_b + \gamma_{t-i}) \nonumber\\
      =& \sum_{i=1}^t (1-\sigma_b)^{i-1} \cdot \bar\lambda_{t,i-1} \cdot (1-(1-\gamma_{t-i})(1-\sigma_b))=1\label{eq:sigma-v-sum1},
    \end{align}
    where the final equality follows since $\gamma_0 = 1$. 
    
    By \cref{eq:xtpm}, we have
    \begin{align}
x_t(p_1, M_1\^{1:H}) - x_t(p_2, M_2\^{1:H}) =& \sum_{i=1}^t \left( (1-\sigma_1)^{i-1} A^{i-1} \cdot v_{i,1} - (1-\sigma_2)^{i-1} A^{i-1} \cdot v_{i,2} \right)\nonumber.
    \end{align}
    
    We consider two cases, depending on the mixing time $\tau_A := \tmix(A)$ of $A$:
    \paragraph{Case 1: $\tau_A \leq 4\tau$.} Let the stationary distribution of $A$ be denoted $p^\st \in \Delta^d$. Then
\begin{align*}
& \tnorm{x_t(p_1, M_1\^{1:H}) - x_t(p_2, M_2\^{1:H})}\nonumber\\
&= \norm{\sum_{i=1}^t \left((1 - \sigma_1)^{i-1} A^{i-1} v_{i,1} - (1-\sigma_2)^{i-1} A^{i-1} v_{i,2}\right) }_1 \\ 
&\leq \norm{\sum_{i=1}^t \left((1 - \sigma_1)^{i-1} (A^{i-1} v_{i,1} - \norm{v_{i,1}}_1 p^\st) - (1-\sigma_2)^{i-1} (A^{i-1} v_{i,2} - \norm{v_{i,2}}_1 p^\st)\right) }_1 \\
&\qquad+ \norm{\sum_{i=1}^t \left((1 - \sigma_1)^{i-1} \norm{v_{i,1}}_1 - (1-\sigma_2)^{i-1} \norm{v_{i,2}}_1\right) p^\st}_1 \\ 
&= \norm{\sum_{i=1}^t A^{i-1} \left((1-\sigma_1)^{i-1}v_{i,1} - (1-\sigma_2)^{i-1}v_{i,2}\right) - \left((1-\sigma_1)^{i-1}\norm{v_{i,1}}_1 - (1-\sigma_2)^{i-1}\norm{v_{i,2}}_1\right) p^\st}_1 \\ 
&\leq \sum_{i=1}^t 2^{1-\lfloor (i-1)/\tau_A\rfloor} \norm{(1-\sigma_1)^{i-1}v_{i,1} - (1-\sigma_2)^{i-1} v_{i,2} - \left((1-\sigma_1)^{i-1}\norm{v_{i,1}}_1 - (1-\sigma_2)^{i-1}\norm{v_{i,2}}_1\right)p^\st}_1 \\ 
&\leq \sum_{i=1}^t 2^{2-\lfloor (i-1)/\tau_A\rfloor} (i\vep + \vep) \\ 
&\leq C\tau_A \varepsilon \sum_{i=0}^\infty \tau_A i2^{-i} \\
&\leq C'\tau^2 \varepsilon
\end{align*}
for some constants $C,C'$. Above, the second equality uses \cref{eq:sigma-v-sum1}, and the second inequality uses \cref{lemma:perp-mixing} together with the fact that $A^{i-1}p^\st = p^\st$ and 
\[\left\langle \mathbbm{1}, \left((1-\sigma_1)^{i-1}v_{i,1} - (1-\sigma_2)^{i-1}v_{i,2}\right) - \left((1-\sigma_1)^{i-1}\norm{v_{i,1}}_1 - (1-\sigma_2)^{i-1}\norm{v_{i,2}}_1\right)\right\rangle = 0.\]
The final inequality uses the assumption that $\tau_A \leq 4\tau$.

    \paragraph{Case 2: $\tau_A > 4\tau$.} In this case, the assumption that $\Ksim_\tau(\ML)$ is nonempty together with the choice of $a_0$ in \cref{line:choose-a0} of \cref{alg:gpc} and \cref{lem:kstar-casework} gives that $a_0 > 1/(96\tau)$. See Case 2 of the proof of \cref{lem:mem-mismatch-gpc} for more details of this argument, which uses the fact that $\tau_A > 96\tau$.

Since $(p_b, M_b\^{1:H}) \in \Xgpc$ for $b \in \{1,2\}$, we have $\sigma_1, \sigma_2 \geq a_0 > 1/(96\tau)$. 
    We may compute
    \begin{align*}
    &\tnorm{x_t(p_1, M_1\^{1:H}) - x_t(p_2, M_2\^{1:H})}\nonumber\\
    &= \norm{\sum_{i=1}^t \left((1 - \sigma_1)^{i-1} A^{i-1} v_{i,1} - (1-\sigma_2)^{i-1} A^{i-1} v_{i,2}\right) }_1 \\
    &\leq \sum_{i=1}^t |(1-\sigma_1)^{i-1} - (1-\sigma_2)^{i-1}| + \sum_{i=1}^t (1-\sigma_1)^{i-1} \norm{v_{i,1} - v_{i,2}}_1 \\ 
    &\leq \sum_{i=2}^t \sum_{j=1}^{i-1} |\sigma_1 - \sigma_2| (1-\sigma_1)^{j-1} (1-\sigma_2)^{i-1-j} + \vep\sum_{i=1}^t (1-\sigma_1)^{i-1} \\ 
    &\leq \sum_{i=2}^t (i-1)\vep (1-1/(96\tau))^{i-2} + \vep \sum_{i=1}^t (1-1/(96\tau))^{i-1} \\ 
    &\leq C\tau^2\vep
    \end{align*}
    for some constant $C$.

    Thus, in both cases above, we have $\tnorm{x_t(p_1, M_1\^{1:H}) - x_t(p_2, M_2\^{1:H})} \leq O(\tau\vep)$, which verifies \cref{eq:xtpm-lip}.

    The proof of \cref{eq:utpm-lip} is much simpler: we have
    \begin{align}
\tnorm{u_t(p_1, M_1\^{1:H}) - u_t(p_2, M_2\^{1:H})} &\leq \lambda_{t,0} \cdot \tnorm{p_1 - p_2} + \sum_{j=1}^H \lambda_{t,j} \cdot \infonenorm{M_1\^j - M_2\^j} \leq \vep\nonumber,
    \end{align}
    since $\lambda_{t,0} + \cdots + \lambda_{t,H} = 1$. 
  \end{proof}

  \begin{proof}[Proof of \cref{thm:stoch-lds-regret}]
    Set $\beta = \frac{ \sqrt{2dH \ln d}}{\sqrt{T}}$, $\ep = 1/T$, and $\Ksim_\tau := \Ksim_\tau(\ML)$. We will apply \cref{lem:md-gpc} to the sequence of iterates $(p_t, M_t\^{1:H})$ produced in \cref{alg:gpc}, for the domain $\Xgpc$ (i.e., $a = a_0, b = \alphaub$). Note that \cref{lem:lt-lip} gives that $\ell_t$ is $O(L\tau^2)$-Lipschitz, for each $t \in [T]$. Thus \cref{lem:md-gpc} guarantees a regret bound (with respect to $\Xgpc$) of $O(L\tau^2 \sqrt{dH \ln(d) T})$. Moreover, \cref{eq:pt-change} of \cref{lem:md-gpc} ensures that the precondition \cref{eq:ptmt-change} of \cref{lem:mem-mismatch-gpc} is satisfied. 
Thus, we may bound    
    \begin{align}
      & \quad \sum_{t=1}^T c_t(x_t, u_t) - \inf_{K \in \Ksim_\tau} \sum_{t=1}^T c_t(x_t(K), u_t(K))\nonumber\\
      &\leq \sum_{t=1}^T \ell_t(p_t, M_t\^{1:H}) - \inf_{K \in \Ksim_\tau} \sum_{t=1}^T c_t(x_t(K), u_t(K)) + O(T \cdot L\tau^3 \log^3(1/\beta) \beta)\nonumber\\
      &\leq \sum_{t=1}^T \ell_t(p_t, M_t\^{1:H}) - \inf_{(p, M\^{1:H}) \in  \Xgpc} \sum_{t=1}^T c_t(x_t(p, M\^{1:H}), u_t(p, M\^{1:H})) \\
      & \quad + O(T \cdot L\tau^3 \log^3(1/\beta) \beta) + \ep \nonumber\\
      &=  \sum_{t=1}^T \ell_t(p_t, M_t\^{1:H}) - \inf_{(p, M\^{1:H}) \in\Xgpc} \sum_{t=1}^T \ell_t(p, M\^{1:H}) \\
      & \quad + O(T \cdot L\tau^3 \log^3(1/\beta) \beta) + \ep \nonumber\\
      &\leq L\tau^2 \sqrt{dH \ln(d) T} +  O(T \cdot L\tau^3 \log^3(1/\beta) \beta) + \ep \nonumber,
    \end{align}
    where the first inequality uses \cref{lem:mem-mismatch-gpc} together with \cref{eq:pt-change} of \cref{lem:md-gpc}, and the second inequality uses \cref{lem:approx-gpc} with $\epsilon = 1/T$ (by the theorem assumption, the inequality $H \geq \tau\lceil \log_2(2LT^2/\epsilon)\rceil$ is indeed satisfied). Note that for the second inequality to hold, we also need that $\infonenorm{K} \geq a_0$ for all $K \in \Ksim_\tau$, which in particular requires (by \cref{line:choose-a0}) that $\infonenorm{K} \geq 1/(96\tau)$ if $\tmix(A) > 4\tau$. But if $\tmix(A) > 4\tau$, then for any $K \in \Ksim_\tau$ we have $\tmix(A) > 4 \cdot \tmix(\Cr{K})$ and hence $\infonenorm{K} \geq 1/(96\tau)$ by \cref{lem:kstar-casework}. Finally, the   equality above  uses the definition of $\ell_t$ in \cref{alg:gpc}, and the final inequality uses the regret bound of  \cref{lem:md-gpc}. By our choice of $\beta, \ep$, we see that the overall policy regret is $\tilde O(L\tau^{7/2}d^{1/2}\sqrt{T})$, as desired.
  \end{proof}

\section{Proof of Lower Bounds}
In this section, we formally state and prove the regret lower bounds \cref{thm:lds-lower-bound} and \cref{thm:lb-simplex-main}. The former states that the comparator class for online control of standard LDSs cannot be broadened to all marginally stable (time-invariant, linear) policies; the latter states that the mixing time assumption cannot be removed from the comparator class for online control of simplex LDSs. Both results hold even in constant dimension.

The basic idea is the same for both proofs: we construct two systems $\ML^0, \ML^1$ which are identical until time $T/2$, but then at time $T/2$ experience differing perturbations of constant magnitude. The costs are zero until time $T/2$, after which they penalize distance to a prescribed state (and can in fact be taken to be the same for both systems). The optimal strategy in the first $T/2$ time steps therefore depends on which system the controller is in, but the controller does not observe this until time $T/2$, and hence will necessarily incur regret with respect to the optimal policy.

Formalizing this intuition requires two additional pieces: first, for both systems there must be a near-optimal time-invariant linear policy. This can be achieved by careful design of the dynamics, perturbations, and costs. Second, if the controller finds itself in a high-cost state at time $T/2+1$, it must be unable to reach a low-cost state without incurring $\Omega(T)$ total cost along the way. In the standard LDS setting, we achieve this by setting the transition matrices $A,B$ so that $\norm{B} = O(1/T)$ (i.e. so constant-size controls have small effect on the state) and adding a penalty of $|u_t|$ to the cost for $t > T/2$. In the simplex LDS setting, we achieve this by our choice of the valid constraint set $\MI$ (which enforces that $\norm{u_t}_1 = O(1/T)$ for all $t$).

See \cref{fig:lb-illustration} for a pictorial explanation of the proof in the simplex LDS setting.

\begin{figure}[t]
    \centering
    \includegraphics[width=1.1\linewidth]{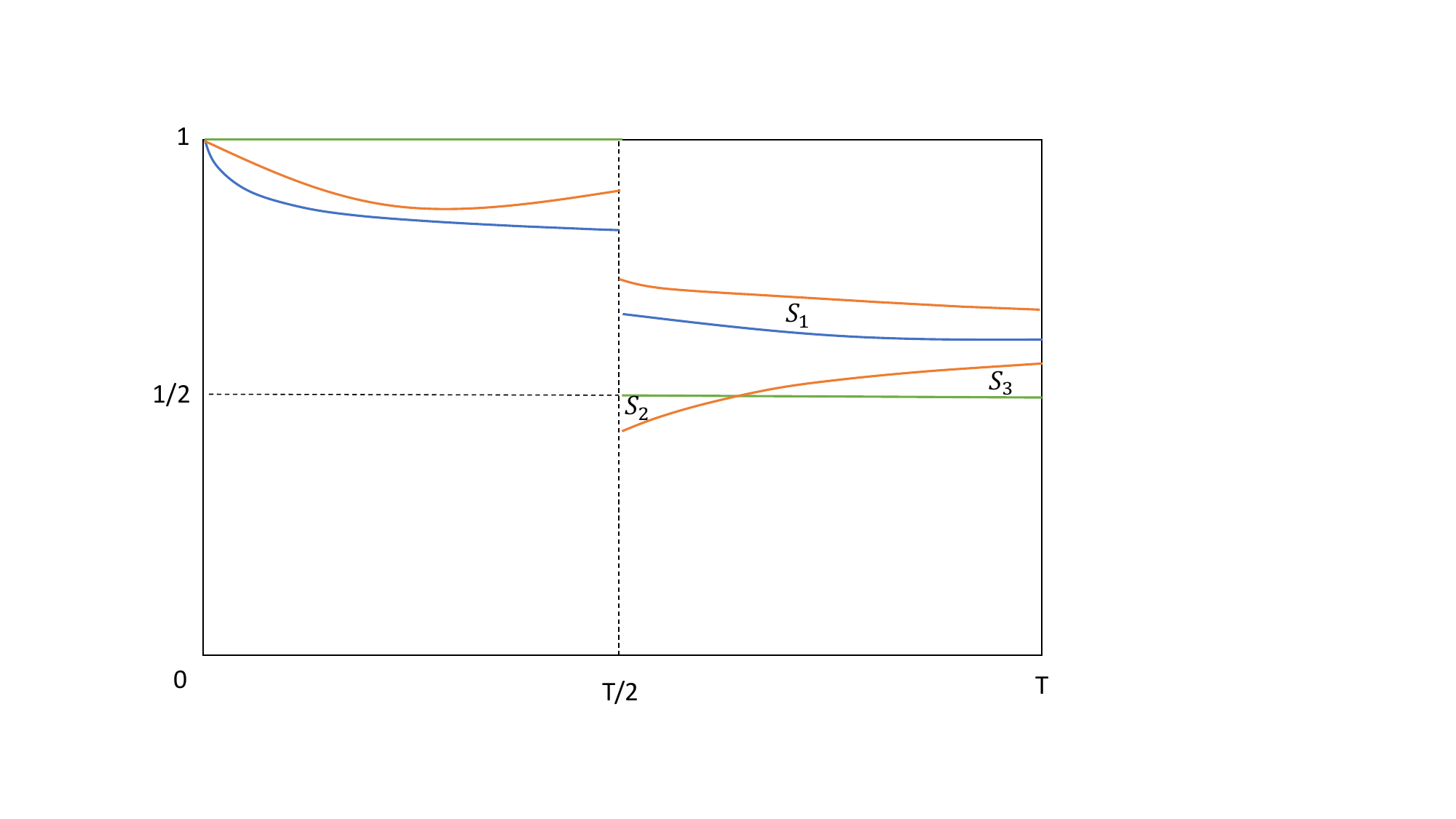}
    \caption{An intuitive illustration of $x_t(2)$ in the lower bound for simplex LDS (\cref{thm:lb-simplex}). The blue curve is the trajectory of $\pi^0$, the ``decreasing" comparator policy, in the system $\ML^0$, which has the smaller perturbation. The green curve is $\pi^1$, the ``lazy" comparator policy, in the system $\ML^1$, which has the larger perturbation. The orange curves correspond to the trajectories of an arbitrary policy $\pi$ under the two different perturbation sequences.
    The sum of regret under the two perturbation sequences is equal to the area $S_1+S_2+S_3$, which is shown to be $\Omega(T)$ for any $h$.}
    \label{fig:lb-illustration}
\end{figure}

\subsection{Proof of \cref{thm:lds-lower-bound}}

In this section we give a formal statement and proof of \cref{thm:lds-lower-bound}. Recall the definition of an LDS (\cref{def:lds-def}). We define the class $\MK_\kappa(\ML)$ of policies that \emph{$\kappa$-marginally stabilize} $\ML$ below; it is equivalent to the class $\MK_{\kappa,\rho}(\ML)$ of policies that $(\kappa,\rho)$-strongly stabilize $\ML$ (\cref{def:stable-lds}) with $\rho = 0$.

\begin{definition}[Marginal stabilization]\label{def:marginal-stabilization}
A matrix $M \in \BR^{d \times d}$ is $\kappa$-marginally stable if there is a matrix $H \in \BR^{d \times d}$ so that $\norm{H^{-1} M H} \leq 1$ and $\norm{M}, \norm{H},\norm{H^{-1}} \leq \kappa$. A matrix $K \in \BR^{d \times d}$ is said to \emph{$\kappa$-marginally stabilize} an LDS with transition matrices $A,B \in \BR^{d\times d}$ if $A+BK$ is $\kappa$-marginally stable. For $\kappa > 0$ and an LDS $\ML$ on $\BR^d$, we define $\MK_\kappa(\ML)$ to be the set of linear, time-invariant policies $x\mapsto Kx$ where $K \in \BR^{d\times d}$ $\kappa$-marginally stabilizes $\ML$.
\end{definition}

We also introduce a standard regularity assumption on cost functions:\footnote{Technically, in this setting of general LDSs where the state domain is unbounded, \cref{asm:convex-cost} is stronger than the assumption on cost functions made in prior work on non-stochastic control \citep{agarwal2019online}, because it enforces a uniform Lipschitzness bound on the entire domain. But we are proving a \emph{lower bound} in this section, so this strengthening only makes our result stronger.}

\begin{assumption}\label{asm:cost-for-lb}
Let $L>0$. We say that cost functions $(c_t)_t$, where $c_t: \BR^{d_x}\times \BR^{d_u} \to \BR$, are \emph{$L$-regular} if $c_t$ is convex and $L$-Lipschitz with respect to the Euclidean norm for all $t$.
\end{assumption}

\begin{theorem}[Formal statement of \cref{thm:lds-lower-bound}]\label{thm:lb-marginally-stable}
Let $\Alg$ be any randomized algorithm for online control with the following guarantee: 
\begin{addmargin}{1em}
Let $d,T \in \BN$ and $\kappa > 0$, and let $\ML = (A,B,x_1,(w_t)_t,(c_t)_t)$ be an LDS with state space and control space $\BR^{d}$; $L$-regular cost functions $(c_t)_t$ (\cref{asm:cost-for-lb}); and perturbations $(w_t)_t$ satisfying $\norm{w_t}_2 \leq L$ for all $t$. Then the iterates $(x_t,u_t)_{t=1}^T$ produced by $\Alg$ with input $(A,B,\kappa,T)$ on interaction with $\ML$ satisfy \begin{equation}\breg_{\MK_\kappa(\ML)} := \E\left[\sum_{t=1}^T c_t(x_t,u_t)\right] - \inf_{K \in \MK_\kappa(\ML)} \sum_{t=1}^T c_t(x_t^{\ML,K},u_t^{\ML,K}) \leq f(d,\kappa,L,T)\label{eq:alg-guarantee}\end{equation}
where $(x_t^{\ML,K},u_t^{\ML,K})_{t=1}^T$ are the iterates produced by following policy $x \mapsto Kx$ in system $\ML$ for all $t \in [T]$. 
\end{addmargin}
Then $f(1,1,1,T) = \Omega(T)$.
\end{theorem}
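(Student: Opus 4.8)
The plan is to instantiate the two-system template in dimension $d=1$. Take the scalar ``transition matrices'' $A=1$ and $B=1/T$, initial state $x_1=1$, and cost functions $c_t\equiv 0$ for $t\le T/2$ and $c_t(x,u)=\tfrac12(|x|+|u|)$ for $t>T/2$; the latter are convex and $\tfrac{1}{\sqrt2}$-Lipschitz in the Euclidean norm, so $(c_t)_t$ is $1$-regular (\cref{asm:cost-for-lb}). Let $\ML^0$ have perturbations $w_{T/2}=-1$ and $w_t=0$ otherwise, and let $\ML^1$ have $w_t=0$ for all $t$; in both cases $|w_t|\le 1$. Since $A+B\cdot 0=1$ is $1$-marginally stable, the zero policy lies in $\MK_1(\ML^b)$ for $b\in\{0,1\}$, so the comparator classes are nonempty. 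Each system admits a near-optimal time-invariant linear comparator: in $\ML^0$ the ``lazy'' policy $K_0=0$ holds the state at $1$ through step $T/2$, whereupon $w_{T/2}=-1$ carries it to $0$, where it stays, giving total cost exactly $0$; in $\ML^1$ the ``decaying'' policy $K_1=-\delta T$, with $\delta=\Theta(\log T/T)$ chosen so that $(1-\delta)^{T/2}\le T^{-4}$, has closed-loop scalar $A+BK_1=1-\delta\in(0,1)$, hence $x_t=(1-\delta)^{t-1}$ and the total cost over $t>T/2$ is $o(1)$. Both $K_0,K_1$ are $1$-marginally stabilizing, so $\inf_{K\in\MK_1(\ML^0)}(\cdots)=0$ and $\inf_{K\in\MK_1(\ML^1)}(\cdots)\le 1$.

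Next I would run $\Alg$, which receives the input $(A,B,\kappa,T)=(1,1/T,1,T)$, with its internal randomness coupled, on both systems in parallel. Because for $t\le T/2$ the cost functions are identically $0$, the perturbations agree (both zero for $t<T/2$), and $A,B$ are fixed, a one-line induction shows that the controls $u_1,\dots,u_{T/2}$ and the states $x_1,\dots,x_{T/2}$ coincide across the two runs. They diverge only at the next step: $x_{T/2+1}^{\ML^0}=x_{T/2}+\tfrac1T u_{T/2}-1$ while $x_{T/2+1}^{\ML^1}=x_{T/2}+\tfrac1T u_{T/2}$, so $|x_{T/2+1}^{\ML^0}-x_{T/2+1}^{\ML^1}|=1$ and hence $\max\{|x_{T/2+1}^{\ML^0}|,|x_{T/2+1}^{\ML^1}|\}\ge\tfrac12$ on every sample path.

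The heart of the proof, and the step I expect to be the main obstacle, is a \emph{no-escape} estimate: any controller that begins step $T/2+1$ at a state of magnitude $\ge\tfrac12$ must incur cost $\ge T/32$ over the remaining $T/2$ steps, regardless of how large its controls are. The point is that $B=1/T$: since no further perturbations occur, $x_t=x_{T/2+1}+\tfrac1T\sum_{s=T/2+1}^{t-1}u_s$, so moving the state a constant distance forces $\sum_s|u_s|=\Omega(T)$. Quantitatively, let $S:=\{t\in(T/2,T]:|x_t|<\tfrac14\}$. If $|S|\le T/4$, then $|x_t|\ge\tfrac14$ on at least $T/4$ of the remaining steps, contributing $\ge\tfrac12\cdot\tfrac14\cdot\tfrac T4=\tfrac{T}{32}$ to the cost; if $|S|>T/4$, then $S\ne\emptyset$ and at its first element $t^\star$ one has $\tfrac1T\sum_{s<t^\star}|u_s|\ge|x_{T/2+1}|-|x_{t^\star}|>\tfrac14$, so $\sum_s|u_s|>\tfrac T4$, contributing $\ge\tfrac12\cdot\tfrac T4=\tfrac T8$ to the cost. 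In either case the controller pays $\ge T/32$ over $(T/2,T]$ in whichever of $\ML^0,\ML^1$ has $|x_{T/2+1}|\ge\tfrac12$, while in the other it pays at least $0$; hence the sum of the two realized costs is $\ge T/32$ on every sample path.

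To finish, take expectations over the coupled randomness (the comparator values are deterministic): $\breg_{\MK_1(\ML^0)}+\breg_{\MK_1(\ML^1)}\ge \E[\text{sum of realized costs}]-0-1\ge T/32-1$. Since $\Alg$ satisfies its guarantee, with $d=\kappa=L=1$, on both $\ML^0$ and $\ML^1$, each term is at most $f(1,1,1,T)$, so $2f(1,1,1,T)\ge T/32-1$, i.e. $f(1,1,1,T)=\Omega(T)$, as claimed. (Equivalently, $\MD:=\Unif\{\ML^0,\ML^1\}$ is the distribution witnessing the informal statement.) Apart from the no-escape estimate, the only things needing care are the $1$-regularity of the costs and the membership $K_0,K_1\in\MK_1(\ML^b)$, both immediate in dimension one, and the choice of $\delta$; the argument is only asymptotic, so it suffices that $T$ be larger than an absolute constant.
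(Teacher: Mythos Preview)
Your proof is correct and follows essentially the same blueprint as the paper's: a scalar LDS with $A=1$ and $|B|=\Theta(1/T)$, two systems that agree through step $T/2$ and then diverge by a single unit perturbation, the cost $|x|+|u|$ (up to scaling) on the second half, and the ``no-escape'' observation that because $|B|=\Theta(1/T)$, moving the state a constant distance forces $\sum_t |u_t|=\Omega(T)$. Your case split on $|S|$ plays the same role as the paper's \cref{lemma:alg-triangle-ineq}, and your coupling is equivalent to the paper's averaging over a uniform bit.

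The one non-cosmetic difference is your choice of comparator in the perturbation-free system: you take $K_1=-\delta T$ with $\delta=\Theta(\log T/T)$, so $|K_1|=\Theta(\log T)$ is unbounded in $T$, whereas the paper uses only $K\in\{0,1\}$. Your choice is legitimate because \cref{def:marginal-stabilization} constrains only $A+BK$, not $\|K\|$; it buys you an $o(1)$ comparator cost rather than the paper's $\Theta(Te^{-\beta/2})$, which in turn lets you avoid tuning a free constant $\beta$. Conversely, the paper's bounded-$K$ construction would survive a stricter comparator class that also capped $\|K\|\le\kappa$, so it proves a marginally stronger statement.
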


\begin{remark}
In the above theorem statement, if $\MK_\kappa(\ML)$ were replaced with $\MK_{\kappa,\rho}(\ML)$, the class of linear time-invariant policies that \emph{$(\kappa,\rho)$-strongly stabilize} $\ML$, then the main result of \citep{agarwal2019online} would imply that in fact there is a (deterministic) algorithm $\GPC$ with regret at most $\poly(d,\kappa,L,\rho^{-1}) \cdot \sqrt{T} \log(T)$ on any LDS $\ML$ satisfying the above conditions. Thus, \cref{thm:lb-marginally-stable} indeed provides a converse to \citep{agarwal2019online}.
\end{remark}

We prove \cref{thm:lb-marginally-stable} by constructing a simple distribution over LDSs on which any algorithm must incur $\Omega(T)$ regret in expectation. Let $\beta \geq 2$ be a constant that we will determine later, and fix $T \geq \beta$. Recall that we denote an LDS on $\BR^d$ using the notation $\ML = (A, B, x_1, (w_t)_t, (c_t)_t)$, where $A,B \in \BR^{d \times d}$. We define two LDSs on $\BR$ as follows:
\begin{align*}
\ML^0 &:= (1, -\beta/T, x_1, (w^0_t)_t, (c_t)_t), \\ 
\ML^1 &:= (1, -\beta/T, x_1, (w^1_t)_t, (c_t)_t),
\end{align*}
where the (common) initial state is $x_1 = 1$, the (common) cost functions $(c_t)_t$ are defined as
\[c_t(x,u) := \begin{cases} |x| + |u| & \text{ if } t > T/2 \\ 0 & \text { otherwise } \end{cases},\]
the perturbations of $\ML^0$ are $w^0_t := 0$ for all $t$, and the perturbations of $\ML^1$ are
\[w^1_t := \begin{cases} -1 & \text{ if } t = T/2 \\ 0 & \text{ otherwise } \end{cases}.\]
For simplicity, we assume that $T/2$ is an integer. Thus, at all times $t \neq T/2$, the two systems have identical dynamics
\[x_{t+1} := x_t - \frac{\beta}{T} u_t,\]
but at time $t = T/2$, system $\ML^1$ experiences a negative perturbation of magnitude $1$, whereas $\ML^0$ does not. The following lemma characterizes the performance of two time-invariant linear policies $\pi^0,\pi^1$ for $\ML^0, \ML^1$ respectively:

\begin{lemma}\label{lemma:lb-comparator-perf}
Define $\pi^0,\pi^1: \BR \to \BR$ by $\pi^0(x) = x$ and $\pi^1(x) = 0$. Then:
\begin{itemize}
    \item Policy $\pi^0$ is an element of $\MK_1(\ML^0)$, and the iterates $(x_t^{\ML^0,\pi^0}, u_t^{\ML^0,\pi^0})_{t=1}^T$ produced by following $\pi^0$ in system $\ML^0$ satisfy
    \[\sum_{t=1}^T c_t(x_t^{\ML^0,\pi^0}, u_t^{\ML^0,\pi^0}) \leq \frac{2T}{\beta} e^{-\beta/2}.\]
    \item Policy $\pi^1$ is an element of $\MK_1(\ML^1)$, and the iterates $(x_t^{\ML^1,\pi^1},u_t^{\ML^1,\pi^1})_{t=1}^T$ produced by following $\pi^1$ in system $\ML^1$ satisfy
    \[\sum_{t=1}^T c_t(x_t^{\ML^1,\pi^1}, u_t^{\ML^1,\pi^1}) = 0.\]
\end{itemize}
\end{lemma}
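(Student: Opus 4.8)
The plan is a direct verification of the four assertions, exploiting that both policies are trivial (the identity map $x\mapsto x$ and the zero map $x \mapsto 0$) and both systems are scalar, so everything reduces to short closed-form computations.

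\textbf{Step 1 (membership in the comparator classes).} By \cref{def:marginal-stabilization}, a linear policy $x \mapsto Kx$ $\kappa$-marginally stabilizes an LDS with coefficients $(A,B)$ iff $A+BK$ is $\kappa$-marginally stable. For $\pi^0$ in $\ML^0$ the closed-loop coefficient is $A+BK = 1 - \beta/T$, and for $\pi^1$ in $\ML^1$ it is $A+BK = 1$. Since $2 \le \beta \le T$, both numbers lie in $[0,1]$, so taking the conjugating matrix $H = 1$ shows $1$-marginal stability in each case: $|H^{-1}(A+BK)H| = |A+BK| \le 1$ and $|A+BK|, |H|, |H^{-1}| \le 1 = \kappa$.

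\textbf{Step 2 (cost of $\pi^0$ in $\ML^0$).} Since $w^0_t \equiv 0$ and $u_t = x_t$, unrolling $x_{t+1} = (1-\beta/T)x_t$ gives $x_t = u_t = (1-\beta/T)^{t-1} \ge 0$ for all $t$. The costs vanish for $t \le T/2$, so $\sum_{t=1}^T c_t(x_t,u_t) = \sum_{t=T/2+1}^T 2(1-\beta/T)^{t-1}$. I would bound this by extending to an infinite geometric series, $\sum_{t=T/2+1}^T (1-\beta/T)^{t-1} \le (1-\beta/T)^{T/2}\cdot\frac{T}{\beta}$, and then applying $1-y \le e^{-y}$ to get $(1-\beta/T)^{T/2} \le e^{-\beta/2}$, which yields the claimed bound $\frac{2T}{\beta}e^{-\beta/2}$.

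\textbf{Step 3 (cost of $\pi^1$ in $\ML^1$).} With $u_t \equiv 0$ the dynamics reduce to $x_{t+1} = x_t + w^1_t$, so $x_t = 1$ for $t \le T/2$ and $x_t = 0$ for $t \ge T/2 + 1$, the single jump being caused by $w^1_{T/2} = -1$. As $c_t$ is nonzero only for $t > T/2$, and there both $x_t$ and $u_t$ equal $0$, the total cost is exactly $0$.

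I do not anticipate a genuine obstacle: the only points needing care are confirming that the trivial choice $H=1$ satisfies \cref{def:marginal-stabilization} (in particular $|A+BK| \le 1$, which is where $\beta \le T$ enters), and getting the geometric-sum-plus-exponential estimate in Step 2 right; the rest is just bookkeeping with the explicit trajectories.
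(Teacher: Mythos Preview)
Your proposal is correct and follows essentially the same approach as the paper: both verify $1$-marginal stability directly (the paper simply notes $|A+B|\le 1$ and $|A|\le 1$, while you spell out the choice $H=1$), and the cost computations in your Steps 2 and 3 match the paper's line-by-line, including the geometric sum bound and the $1-y\le e^{-y}$ estimate.
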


\begin{proof}
Note that $\pi^0,\pi^1$ are both time-invariant linear policies. The inclusion $\pi^0 \in \MK_1(\ML^0)$ is immediate from the fact that $\ML^0$ has transitions $A = 1, B = -\beta/T$, and $|A+B| \leq 1$. Similarly, $\pi^1 \in \MK_1(\ML^1)$ because $|A| \leq 1$. To bound the total cost of $\pi^0$ on $\ML^0$, note that $u_t^{\ML^0,\pi^0} = x_t^{\ML^0,\pi^0} = (1-\beta/T)^{t-1}$ for all $t \in [T]$. Hence,
\[\sum_{t=1}^T c_t(x_t^{\ML^0,\pi^0}, u_t^{\ML^0,\pi^0}) = 2\sum_{t=T/2+1}^T \left(1-\frac{\beta}{T}\right)^{t-1} \leq \frac{2T}{\beta}\left(1-\frac{\beta}{T}\right)^{T/2} \leq \frac{2T}{\beta} e^{-\beta/2}.\]
Moreover, we have $x_t^{\ML^1,\pi^1} = \mathbbm{1}[t \leq T/2]$ and $u_t^{\ML^1,\pi^1} = 0$ for all $t \in [T]$, from which it is clear that $\sum_{t=1}^T c_t(x_t^{\ML^1,\pi^1}, u_t^{\ML^1,\pi^1}) = 0$.
\end{proof}

Next, we show that the total cost of any trajectory $(x_t,u_t)_{t=1}^T$ can be lower bounded in terms of $|x_{T/2+1}|$ in both $\ML^0$ and $\ML^1$:

\begin{lemma}\label{lemma:alg-triangle-ineq}
Let $\Alg$ be any randomized algorithm for online control, and let $b \in \{0,1\}$. The (random) trajectory $(x_t,u_t)_{t=1}^T$ produced by $\Alg$ in system $\ML^b$ satisfies the inequality
\[\sum_{t=1}^T c_t(x_t,u_t) = \sum_{t=T/2+1}^T |x_t| + |u_t| \geq \frac{T}{2\beta} |x_{T/2+1}|\]
with probability $1$.
\end{lemma}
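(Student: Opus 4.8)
The idea is that the only way the state $x_t$ can change between consecutive timesteps (for $t \neq T/2$) is via the control, and the control has effect at most $\beta/T$ times $|u_t|$ on the state; moreover the lemma only concerns $t > T/2$, where the perturbations vanish in both systems. So I would argue that $|x_{T/2+1}|$ cannot be ``destroyed'' quickly: at each step $t \in \{T/2+1,\dots,T\}$ we have $|x_{t+1} - x_t| = \frac{\beta}{T}|u_t|$, hence by the triangle inequality, for each such $t$,
\[
|x_t| \;\geq\; |x_{T/2+1}| - \sum_{s=T/2+1}^{t-1} |x_{s+1}-x_s| \;=\; |x_{T/2+1}| - \frac{\beta}{T}\sum_{s=T/2+1}^{t-1} |u_s|.
\]

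**Main steps.** First, record that for $b \in \{0,1\}$ and all $t \geq T/2+1$ the dynamics of $\ML^b$ are exactly $x_{t+1} = x_t - \frac{\beta}{T}u_t$ (the perturbation $w_t^b$ is zero for $t > T/2$ in both systems; only $t = T/2$ differs, and that step is before the window we sum over), so $|x_{t+1}-x_t| = \frac{\beta}{T}|u_t|$. Second, sum the displayed lower bound on $|x_t|$ over $t = T/2+1,\dots,T$ and add $\sum_{t=T/2+1}^T |u_t|$ to both sides:
\[
\sum_{t=T/2+1}^T \big(|x_t| + |u_t|\big) \;\geq\; \frac{T}{2}\,|x_{T/2+1}| \;-\; \frac{\beta}{T}\sum_{t=T/2+1}^{T}\sum_{s=T/2+1}^{t-1}|u_s| \;+\; \sum_{t=T/2+1}^T |u_t|.
\]
Third, bound the double sum crudely: $\sum_{t=T/2+1}^{T}\sum_{s=T/2+1}^{t-1}|u_s| \leq \frac{T}{2}\sum_{s=T/2+1}^{T}|u_s|$, so the negative term is at least $-\frac{\beta}{2}\sum_{s}|u_s|$, which is dominated in magnitude by... well, it need not be dominated by the single $\sum |u_t|$ term if $\beta$ is large. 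So instead I would do the telescoping more carefully: note $\sum_{s=T/2+1}^{t-1}|u_s|$ is the relevant partial sum, and write $U := \sum_{s=T/2+1}^{T}|u_s|$; then the bound reads $\sum_t(|x_t|+|u_t|) \geq \frac{T}{2}|x_{T/2+1}| - \frac{\beta}{2} U + U$. To make this work I would instead bound $|x_t|$ from below only using controls \emph{before} time $t$, and observe that we may assume WLOG $U \leq |x_{T/2+1}|$ (otherwise $\sum_t |u_t| = U \geq |x_{T/2+1}| \geq \frac{T}{2\beta}|x_{T/2+1}|$ trivially since $T \geq \beta \geq 2$, well $T/(2\beta) \le$ need $T\ge 2\beta$; handle the boundary separately). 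Under $U \leq |x_{T/2+1}|$, each $|x_t| \geq |x_{T/2+1}| - \frac{\beta}{T}U \geq |x_{T/2+1}|(1 - \beta/T) \geq |x_{T/2+1}|/2$ (for $T \geq 2\beta$), and summing over the $T/2$ terms gives $\sum_t |x_t| \geq \frac{T}{4}|x_{T/2+1}| \geq \frac{T}{2\beta}|x_{T/2+1}|$ once $\beta \geq 2$.

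**Expected obstacle.** There is no deep obstacle here; the only thing to be careful about is the bookkeeping in the case split (whether the total control magnitude $U$ is large or small relative to $|x_{T/2+1}|$), and getting the constant in $\frac{T}{2\beta}$ to come out right with the freedom to pick $\beta$ large and $T \geq \beta$ (or $T \geq 2\beta$). Since $\beta$ is a free large constant chosen later in the proof of Theorem~\ref{thm:lb-marginally-stable}, I can absorb small constant losses into $\beta$, so I expect the clean statement ``$\sum_t c_t(x_t,u_t) \geq \frac{T}{2\beta}|x_{T/2+1}|$'' to follow with room to spare. The argument is deterministic given the trajectory, so it holds with probability $1$ regardless of $\Alg$'s internal randomness.
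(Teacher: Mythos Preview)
Your approach is the right one and mirrors the paper's, but your case split is placed at the wrong threshold. You split on whether $U := \sum_{t>T/2}|u_t|$ exceeds $|x_{T/2+1}|$, and in the ``large $U$'' case you write $U \geq |x_{T/2+1}| \geq \frac{T}{2\beta}|x_{T/2+1}|$. That last step requires $\frac{T}{2\beta} \leq 1$, i.e.\ $T \leq 2\beta$, which fails in the regime of interest ($T$ large, $\beta$ a fixed constant). Absorbing constants into $\beta$ does not help: enlarging $\beta$ only weakens the target bound $\frac{T}{2\beta}|x_{T/2+1}|$, but to make your inequality valid you would need $\beta \geq T/2$, which would kill the $\Omega(T)$ conclusion of Theorem~\ref{thm:lb-marginally-stable}.

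The fix is simply to move the threshold. Split instead at $U = \frac{T}{2\beta}|x_{T/2+1}|$. If $U \geq \frac{T}{2\beta}|x_{T/2+1}|$ you are done trivially from the $|u_t|$ sum. If $U < \frac{T}{2\beta}|x_{T/2+1}|$, then your own telescoping bound gives $|x_t| \geq |x_{T/2+1}| - \frac{\beta}{T}U > \frac{1}{2}|x_{T/2+1}|$ for every $t > T/2$, so $\sum_{t>T/2}|x_t| \geq \frac{T}{4}|x_{T/2+1}| \geq \frac{T}{2\beta}|x_{T/2+1}|$ using $\beta \geq 2$. This is exactly the paper's argument, which phrases the same dichotomy as ``either $\min_{t>T/2}|x_t| \geq \frac{1}{2}|x_{T/2+1}|$ or not'' and uses $|u_t| = \frac{T}{\beta}|x_{t+1}-x_t|$ plus a telescoping triangle inequality in the second case.
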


\begin{proof}
By definition of $\ML^0,\ML^1$, any valid trajectory in $\ML^b$ satisfies $|u_t| = \frac{T}{\beta}|x_{t+1} - x_t|$ for all $T/2 < t < T$. We consider two cases:
\begin{enumerate}
\item If $\min_{T/2 < t \leq T} |x_t| \geq \frac{1}{2} |x_{T/2+1}|$, then 
\[\sum_{t=T/2+1}^T |x_t| + |u_t| \geq \frac{T}{4} |x_{T/2+1}| \geq \frac{T}{2\beta} |x_{T/2+1}|\]
since $\beta \geq 2$.
\item If $\min_{T/2 < t \leq T} |x_t| < \frac{1}{2} |x_{T/2+1}|$, then
\[\sum_{t=T/2+1}^T |x_t| + |u_t| \geq \frac{T}{\beta} \sum_{t=T/2+1}^{T-1} |x_{t+1} - x_t| \geq \frac{T}{\beta}\left||x_{T/2+1}| - \min_{T/2 < t \leq T} |x_t|\right| \geq \frac{T}{2\beta} |x_{T/2+1}|\]
by the triangle inequality.
\end{enumerate}
In both cases the claimed inequality holds.
\end{proof}

We can now prove \cref{thm:lb-marginally-stable}.

\begin{proof}[Proof of \cref{thm:lb-marginally-stable}]
Let $b \sim \Unif(\{0,1\})$ be an unbiased random bit, and let $(x_t,u_t)_{t=1}^T$ be the (random) trajectory produced by executing $\Alg$ on $\ML^b$. On the one hand, by \cref{eq:alg-guarantee} applied to $\ML^0$ and $\ML^1$, we have
\begin{align}
&\E\left[\sum_{t=1}^T c_t(x_t,u_t)\right]\nonumber\\ 
&\qquad\leq f(1,1,1,T) + \frac{1}{2}\left(\inf_{K \in \MK_1(\ML^0)} \sum_{t=1}^T c_t(x_t^{\ML^0,K},u_t^{\ML^0,K}) + \inf_{K \in \MK_1(\ML^1)} \sum_{t=1}^T c_t(x_t^{\ML^1,K},u_t^{\ML^1,K})\right) \nonumber\\ 
&\qquad\leq f(1,1,1,T) + \frac{T}{\beta} e^{-\beta/2}\label{eq:alg-cost-ub}
\end{align}
where the first inequality uses the fact that the cost functions $(c_t)_t$ are convex and $1$-Lipschitz and that $|w^0_t|, |w^1_t| \leq 1$ for all $t \in [T]$; and the second inequality is by \cref{lemma:lb-comparator-perf}. On the other hand, by \cref{lemma:alg-triangle-ineq}, we have
\begin{align}
\E\left[\sum_{t=1}^T c_t(x_t,u_t) \right]
&\geq \frac{T}{2\beta}\E[|x_{T/2+1}|] \nonumber\\ 
&= \frac{T}{2\beta} \E\left[\left|x_{T/2} - \frac{\beta}{T}u_{T/2} - b\right|\right] \nonumber\\ 
&\overset{(\st)}{=} \frac{T}{2\beta} \left( \frac{1}{2} \E\left[\left|x_{T/2} - \frac{\beta}{T}u_{T/2}\right|\right] + \frac{1}{2}\E\left[\left|x_{T/2} - \frac{\beta}{T}u_{T/2} - 1\right|\right]\right) \nonumber\\ 
&\geq \frac{T}{4\beta}\left(\E\left[x_{T/2} - \frac{\beta}{T}u_{T/2}\right] + \left|\E\left[x_{T/2} - \frac{\beta}{T}u_{T/2}\right] - 1\right|\right) \geq \frac{T}{4\beta},\label{eq:alg-cost-lb}
\end{align}
where the key equality $(\st)$ uses the fact that $\ML^0,\ML^1$ are identical up until and including time $T/2$, and hence $(x_{T/2},u_{T/2})$ is independent of $b$. Comparing \cref{eq:alg-cost-lb} with \cref{eq:alg-cost-ub} yields that \[f(1,1,1,T) \geq \frac{T}{4\beta} - \frac{T}{\beta} e^{-\beta/2} = \Omega(T)\]
for any sufficiently large constant $\beta$.
\end{proof}

\subsection{Proof of \cref{thm:lb-simplex-main}}

\begin{definition}\label{def:mkmi}
Let $0 \leq \alphalb \leq \alphaub \leq 1$ and let $\MI := \bigcup_{\alpha\in[\alphalb,\alphaub]} \Delta^d_\alpha$. We define $\MK(\MI)$ to be the set of linear, time-invariant policies $x \mapsto Kx$ where $K \in \bigcup_{\alpha\in[\alphalb,\alphaub]} \sstoch{\alpha}$.
\end{definition}

\begin{theorem}[Formal statement of \cref{thm:lb-simplex-main}]\label{thm:lb-simplex}
Let $\Alg$ be any randomized algorithm for online control with the following guarantee: 
\begin{addmargin}{1em}
Let $d,T \in \BN$ and $\MI := \bigcup_{\alpha \in [0, \alphaub]} \Delta^d_\alpha$ for some $\alphaub \in (0,1)$. Let $\ML = (A,B,\MI,x_1,(\gamma_t)_t,(w_t)_t,(c_t)_t)$ be a simplex LDS with state space $\Delta^{d}$ and cost functions $(c_t)_t$ satisfying \cref{asm:convex-cost} with Lipschitz parameter $L>0$. Then the iterates $(x_t,u_t)_{t=1}^T$ produced by $\Alg$ with input $(A,B,\MI,T)$ on interaction with $\ML$ satisfy \begin{equation}\breg_{\MK(\MI)} := \E\left[\sum_{t=1}^T c_t(x_t,u_t)\right] - \inf_{K \in \MK(\MI)} \sum_{t=1}^T c_t(x_t^{\ML,K},u_t^{\ML,K}) \leq f(d,L,\alphaub,T)\label{eq:simplex-alg-guarantee}\end{equation}
where $(x_t^{\ML,K},u_t^{\ML,K})_{t=1}^T$ are the iterates produced by following policy $x \mapsto Kx$ in system $\ML$ for all $t \in [T]$. 
\end{addmargin}
For any sufficiently large constant $\beta$, if we define $\alphaub(T) := \beta/T$, then $f(1,1,\alphaub(T),T) = \Omega(T)$.
\end{theorem}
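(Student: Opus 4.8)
The construction mirrors that of \cref{thm:lb-marginally-stable}, transported to the one-dimensional simplex $\Delta^2$ (matching the informal \cref{thm:lb-simplex-main}, whose state space is $\Delta^2$). Fix a large constant $\beta$ and $T \ge \beta$ with $T/2 \in \BN$; put $\alphaub(T) := \beta/T$ and $\MI := \bigcup_{\alpha\in[0,\alphaub(T)]}\Delta^2_\alpha$. Every system in the support of $\MD$ has transition matrices $A = B = I_2$, initial state $x_1 := (0,1)$ (so $x_1(2) = 1$), and cost functions $c_t \equiv 0$ for $t \le T/2$ and $c_t(x,u) := |x(2) - \tfrac12|$ for $t > T/2$; these are convex and $1$-Lipschitz, so \cref{asm:convex-cost} holds with $L = 1$. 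Define simplex LDSs $\ML^0$ and $\ML^1$ that agree on everything above, take $\gamma_t = 0$ for all $t \ne T/2$, and differ only at time $T/2$: we set $\gamma_{T/2} = 0$ in $\ML^0$, whereas $\gamma_{T/2} = \tfrac12$ and $w_{T/2} = (1,0)$ in $\ML^1$. Thus the two systems are identical through step $T/2$, after which $\ML^1$ alone receives a single perturbation of strength $\tfrac12$ dragging the state halfway toward $(1,0)$; let $\MD := \Unif(\{\ML^0,\ML^1\})$.

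First I would establish the analogue of \cref{lemma:lb-comparator-perf}, producing one good time-invariant linear comparator per system. Let $\pi^1$ be the ``lazy'' policy $K^1 := 0$ and $\pi^0$ the ``equalizing'' policy $K^0 := \tfrac{\beta}{T}\left(\begin{smallmatrix}1/2 & 1/2\\ 1/2 & 1/2\end{smallmatrix}\right) \in \sstoch[2]{\beta/T}$; both lie in $\MK(\MI)$ (\cref{def:mkmi}), since $\infonenorm{K^0} = \beta/T = \alphaub(T)$ and $K^0$ always plays the control $\tfrac{\beta}{T}(1/2,1/2) \in \Delta^2_{\beta/T}\subseteq\MI$. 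Under $\pi^1$ in $\ML^1$ the state is $(0,1)$ for $t \le T/2$, jumps to $\tfrac12(0,1)+\tfrac12(1,0) = (1/2,1/2)$ at step $T/2+1$, and stays there, so its total cost is $0$. Under $\pi^0$ in $\ML^0$ the relevant operator is $\Cr{K^0} = (1-\tfrac{\beta}{T})I_2 + \tfrac{\beta}{T}\left(\begin{smallmatrix}1/2&1/2\\1/2&1/2\end{smallmatrix}\right)$, with stationary distribution $(1/2,1/2)$ and second eigenvalue $1-\tfrac{\beta}{T}$, so $x_t(2) - \tfrac12 = \tfrac12(1-\tfrac{\beta}{T})^{t-1}$ and the total cost is $\sum_{t>T/2}\tfrac12(1-\tfrac{\beta}{T})^{t-1} \le \tfrac{T}{2\beta}e^{-\beta/2}$. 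Applying the hypothesized guarantee \cref{eq:simplex-alg-guarantee} to $\ML^0$ and $\ML^1$ and averaging over $b \sim \Unif(\{0,1\})$, the trajectory $(x_t,u_t)_{t=1}^T$ produced by $\Alg$ on $\ML^b$ satisfies $\E\!\left[\sum_{t=1}^T c_t(x_t,u_t)\right] \le f(2,1,\beta/T,T) + \tfrac{T}{4\beta}e^{-\beta/2}$.

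Next I would lower-bound $\E[\sum_t c_t(x_t,u_t)]$ via two facts. (i) A slow-recovery estimate (the simplex substitute for \cref{lemma:alg-triangle-ineq}): for $t > T/2$ both systems have $\gamma_t = 0$ and $\|u_t\|_1 \le \beta/T$, hence $\|x_{t+1}-x_t\|_1 \le 2\|u_t\|_1$, i.e. $|x_{t+1}(2)-x_t(2)| \le \beta/T$; a two-case argument (either the deviation $|x_t(2)-\tfrac12|$ stays above half of $|x_{T/2+1}(2)-\tfrac12|$, so each of the $T/2$ terms is large; or the deviation is halved at some step, which by the per-step bound needs $\gtrsim T|x_{T/2+1}(2)-\tfrac12|/\beta$ steps during which the deviation is still at least half) yields $\sum_{t>T/2}|x_t(2)-\tfrac12| \ge \tfrac{T}{4\beta}\bigl(x_{T/2+1}(2)-\tfrac12\bigr)^2$. (ii) A coupling fact: since $\ML^0$ and $\ML^1$ agree through step $T/2$ (same transitions, initial state, zero costs, and zero perturbation strengths for $t<T/2$), the state $x_{T/2}$ is independent of $b$; writing $\bar g := x_{T/2}(2)$, a direct computation (using $\|u_{T/2}\|_1\le\beta/T$) shows $x_{T/2+1}(2)$ lies within $O(\beta/T)$ of $\bar g$ when $b=0$ and within $O(\beta/T)$ of $\bar g/2$ when $b=1$. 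Hence $\E_b[(x_{T/2+1}(2)-\tfrac12)^2] \ge \tfrac12(\bar g-\tfrac12)^2 + \tfrac18(\bar g-1)^2 - O(\beta/T)$, and the quadratic $g\mapsto \tfrac12(g-\tfrac12)^2+\tfrac18(g-1)^2$ attains its minimum $\tfrac1{40}$ at $g=\tfrac35$. Combining (i) and (ii), $\E[\sum_t c_t(x_t,u_t)] \ge \tfrac{T}{4\beta}\bigl(\tfrac1{40}-O(\beta/T)\bigr) = \tfrac{T}{160\beta}-O(1)$; comparing with the upper bound gives $f(2,1,\beta/T,T) \ge \tfrac{T}{\beta}\bigl(\tfrac1{160}-\tfrac{e^{-\beta/2}}{4}\bigr)-O(1) = \Omega(T)$ once $\beta$ is large enough that $e^{-\beta/2}<\tfrac1{80}$.

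The hard part is the joint design behind fact (ii) and the comparator lemma: one must choose $A,B$, the perturbation strength $\tfrac12$, the perturbation value $(1,0)$, and the target coordinate $\tfrac12$ so that simultaneously (a) the equalizing linear policy drives $\ML^0$'s state to the target geometrically fast, hence with only $O(e^{-\beta/2}T/\beta)$ cost; (b) the perturbation in $\ML^1$ transports the lazy policy's state exactly onto the target; and (c) the two near-optimal behaviours are incompatible before step $T/2$ — being competitive on $\ML^0$ requires $\bar g\approx\tfrac12$, while being competitive on $\ML^1$ (where the shock maps $\bar g\mapsto\bar g/2$) requires $\bar g\approx1$. Extracting a clean positive constant from this tension while controlling the $O(\beta/T)$ slack introduced by the nonzero (but $O(1/T)$-bounded) controls is the main technical work; the slow-recovery casework in (i) replaces the control-penalty term $|u_t|$ used in the standard-LDS proof, which here is unavailable but is substituted by the constraint $\|u_t\|_1 = O(1/T)$ built into $\MI$.
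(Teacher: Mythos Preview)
Your proposal is correct and follows essentially the same route as the paper: two indistinguishable-until-$T/2$ simplex LDSs on $\Delta^2$ with $A=B=I_2$, initial state $(0,1)$, cost $|x(2)-\tfrac12|$ after midpoint, comparators $\pi^0$ (equalizing, $K^0=\tfrac{\beta}{T}\bigl(\begin{smallmatrix}1/2&1/2\\1/2&1/2\end{smallmatrix}\bigr)$) and $\pi^1$ (lazy, $K^1=0$), a quadratic slow-recovery lower bound, and a coupling at $x_{T/2}$. The only structural difference is where you break the symmetry: you set $\gamma_{T/2}=0$ in $\ML^0$ and $\gamma_{T/2}=\tfrac12$ in $\ML^1$, whereas the paper keeps $\gamma_{T/2}=\tfrac12$ in \emph{both} systems and varies $w_{T/2}$ instead ($(1/2,1/2)$ vs.\ $(1,0)$). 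The paper's choice makes $(x_{T/2},u_{T/2})$ jointly independent of $b$ (since the controller observes $\gamma_{T/2}$ before choosing $u_{T/2}$), yielding an exact closed form for $x_{T/2+1}(2)$ in terms of $b$ and the $b$-independent pair $(x_{T/2},u_{T/2})$; your variant only gets independence of $x_{T/2}$ and must absorb the possible $b$-dependence of $u_{T/2}$ into an $O(\beta/T)$ slack, which you do correctly. Either way the final bound is $\Omega(T/\beta)$, and your constants and casework for the slow-recovery estimate are fine (indeed slightly sharper than the paper's $\tfrac{T}{8\beta}|\cdot|^2-1$).
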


We define two simplex LDSs on $\Delta^2$ as follows:
\begin{align*}
\ML^0 &:= (I_2, I_2, \MI, x_1, (\gamma_t)_t, (w^0_t)_t, (c_t)_t) \\ 
\ML^1 &:= (I_2, I_2, \MI, x_1, (\gamma_t)_t, (w^1_t)_t, (c_t)_t)
\end{align*} 
where $I_2 \in \BR^{2\times 2}$ is the identity matrix, the (common) valid control set is $\MI := \bigcup_{\alpha\in [0, \beta/T]} \Delta^d_\alpha$, the (common) initial state is $x_1 = (0,1)$, the (common) cost functions $(c_t)_t$ are defined as
\[c_t(x,u) := \begin{cases} |x(2) - 1/2| &\text{ if } t > T/2 \\ 0 & \text{ otherwise}\end{cases},\]
the (common) perturbation strengths are $\gamma_t := \frac{1}{2}\mathbbm{1}[t=T/2]$, and the perturbations of $\ML^0$ are $w_t^0 := (1/2,1/2)$ for all $t$ whereas the perturbations of $\ML^1$ are $w_t^1 := (1,0)$ for all $t$. Thus, for both systems, the dynamics are described by
\[x_{t+1} := (1-\norm{u_t}_1)x_t + u_t\]
for all $t \neq T/2$.

\begin{lemma}\label{lemma:simplex-lb-comparator-perf}
Define $\pi^0,\pi^1: \Delta^2 \to \bigcup_{\alpha \in [0,1]}\Delta^d$ by $\pi^0(x) := \frac{\beta}{T} (1/2,1/2)$ and $\pi^1(x) := (0,0)$. Then $\pi^0, \pi^1 \in \MK(\MI)$, and:
\begin{itemize}
    \item The iterates $(x_t^{\ML^0,\pi^0},u_t^{\ML^0,\pi^0})_{t=1}^T$ produced by following $\pi^0$ in system $\ML^0$ satisfy
    \[\sum_{t=1}^T c_t(x_t^{\ML^0,\pi^0},u_t^{\ML^0,\pi^0}) \leq \frac{T}{\beta}e^{-\beta/2}.\]
    \item The iterates $(x_t^{\ML^1,\pi^1},u_t^{\ML^1,\pi^1})_{t=1}^T$ produced by following $\pi^1$ in system $\ML^1$ satisfy
    \[\sum_{t=1}^T c_t(x_t^{\ML^1,\pi^1},u_t^{\ML^1,\pi^1}) = 0.\]
\end{itemize}
\end{lemma}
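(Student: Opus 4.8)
The plan is to solve the two counterfactual dynamics in closed form, since both $\pi^0$ and $\pi^1$ are constant maps and the systems have identity transition matrices.

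First I would verify membership in $\MK(\MI)$. On $\Delta^2$ we have $x(1)+x(2)=1$, so the constant map $\pi^0(x) = \frac{\beta}{T}(1/2,1/2)$ coincides with $x\mapsto Kx$ for the $2\times2$ matrix $K$ all of whose entries equal $\beta/(2T)$; this $K$ equals $\frac{\beta}{T}$ times a column-stochastic matrix, so $K\in\sstoch[2]{\beta/T}$. Likewise $\pi^1 = (x\mapsto 0\cdot x)$ with $0\in\sstoch[2]{0}$. Since $0,\beta/T\in[0,\alphaub]$ with $\alphaub = \beta/T$, both policies lie in $\MK(\MI)$; I would also note that they play controls in $\MI$ at every step (so \cref{eq:slds-update} is well-defined), using $\beta/T\le 1$ for $T\ge\beta$.

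Next I would analyze $\pi^1$ on $\ML^1$. With $u_t\equiv 0$ we get $\norm{u_t}_1 = 0$, so the update is $x_{t+1} = x_t$ for $t\neq T/2$ and $x_{T/2+1} = \tfrac12 x_{T/2} + \tfrac12 w^1_{T/2}$. Thus $x_t = x_1 = (0,1)$ for $t\le T/2$ and $x_{T/2+1} = \tfrac12(0,1)+\tfrac12(1,0) = (1/2,1/2)$, hence $x_t = (1/2,1/2)$ for all $t>T/2$ (all updates are convex combinations of simplex points, so the trajectory stays in $\Delta^2$). Since $c_t\equiv 0$ for $t\le T/2$ and $c_t(x,u) = |x(2)-1/2|$ vanishes at $(1/2,1/2)$, the total cost is $0$.

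For $\pi^0$ on $\ML^0$ I would track the scalar $y_t := x_t(2)$. The invariant $\norm{u_t}_1 = \beta/T$ gives, for $t\neq T/2$, $y_{t+1} - \tfrac12 = (1-\beta/T)(y_t - \tfrac12)$; substituting $\gamma_{T/2} = 1/2$ and $w^0_{T/2} = (1/2,1/2)$ into the noisy update and simplifying yields the one-step anomaly $y_{T/2+1} - \tfrac12 = \tfrac12(1-\beta/T)(y_{T/2}-\tfrac12)$. Starting from $y_1 = 1$, this gives $|y_t - \tfrac12| = \tfrac12(1-\beta/T)^{t-1}$ for $t\le T/2$ and $|y_t - \tfrac12| = \tfrac14(1-\beta/T)^{t-1}$ for $t>T/2$, so
\[\sum_{t=1}^T c_t(x_t^{\ML^0,\pi^0},u_t^{\ML^0,\pi^0}) = \sum_{t=T/2+1}^T |y_t - \tfrac12| \le \tfrac14\sum_{t=T/2+1}^\infty (1-\beta/T)^{t-1} = \frac{T}{4\beta}(1-\beta/T)^{T/2} \le \frac{T}{\beta}e^{-\beta/2},\]
where the last step uses $(1-\beta/T)^{T/2}\le e^{-\beta/2}$. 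There is no real obstacle here; the only point needing care is handling the single perturbed step $t = T/2$ correctly (getting the extra factor $\tfrac12$ in the $\pi^0$ recursion) and checking admissibility of the controls, which requires the mild condition $\beta\le T$.
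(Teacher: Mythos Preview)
Your proposal is correct and follows essentially the same approach as the paper: both arguments track the recursion for $x_t(2)-\tfrac12$ under $\pi^0$ (noting the extra factor $\tfrac12$ at the perturbed step $t=T/2$) and sum the geometric series, and both compute the $\pi^1$ trajectory directly. Your membership verification is more explicit than the paper's one-line appeal to the definition, and your bound for the $\pi^0$ cost is in fact tighter by a factor of $4$ since you keep the $\tfrac14$ constant rather than the paper's cruder $|y_t-\tfrac12|\le(1-\beta/T)^{t-1}$.
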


\begin{proof}
The fact that $\pi^0,\pi^1 \in \MK(\MI)$ is immediate from \cref{def:mkmi} and the choice of $\MI$. To bound the total cost of $\pi^0$ on $\ML^0$, note that $x_{t+1}^{\ML^0,\pi^0}(2) - 1/2 = (1-\beta/T)(x_t(2) - 1/2)$ for all $t \neq T/2$, and $x_{t+1}^{\ML^0,\pi^0}(2) - 1/2 = (1/2)(1-\beta/T)(x_t(2) - 1/2)$ for $t = T/2$. Thus,
\[\sum_{t=1}^T c_t(x_t^{\ML^0,\pi^0},u_t^{\ML^0,\pi^0}) = \sum_{t=T/2+1}^T |x_t^{\ML^0,\pi^0}(2) - 1/2| \leq \sum_{t=T/2+1}^T (1-\beta/T)^{t-1} \leq \frac{T}{\beta} e^{-\beta/2}.\]
Moreover, we have $x_t^{\ML^1,\pi^1} = (0,1)$ for all $t \leq T/2$ and $x_t^{\ML^1,\pi^1} = (1/2,1/2)$ for all $t > T/2$, so indeed $\sum_{t=1}^T c_t(x_t^{\ML^1,\pi^1},u_t^{\ML^1,\pi^1}) = 0$ as claimed.
\end{proof}

\begin{lemma}\label{lemma:simplex-alg-triangle-ineq}
Let $\Alg$ be any randomized algorithm for online control, and let $b \in \{0,1\}$. The (random) trajectory $(x_t,u_t)_{t=1}^T$ produced by $\Alg$ in system $\ML^b$ satisfies the inequality
\[\sum_{t=1}^T c_t(x_t,u_t) = \sum_{t=T/2+1}^T |x_t(2) - 1/2| \geq \frac{T}{8\beta} |x_{T/2+1}(2) - 1/2|^2 - 1.\]
\end{lemma}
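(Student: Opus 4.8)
The plan is to exploit the severe constraint $\norm{u_t}_1 \le \beta/T$ built into the control set $\MI = \bigcup_{\alpha \in [0,\beta/T]}\Delta^2_\alpha$: it forces the second coordinate of the state to move by at most $\beta/T$ at each step after time $T/2$, so that if $x_{T/2+1}(2)$ is at distance $h$ from the target value $1/2$, the state stays at distance $\ge h/2$ for roughly $hT/(2\beta)$ steps, each contributing $\ge h/2$ to the cost. Summing this ``triangular'' region gives a total cost of order $h^2 T/\beta$, which is the claimed bound (up to the harmless additive $-1$).

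Concretely, I would first record that for every $t$, $u_t \in \MI$ implies $\norm{u_t}_1 \le \beta/T$. For $T/2 < t < T$ the dynamics specialize to $x_{t+1}(2) = (1-\norm{u_t}_1) x_t(2) + u_t(2)$, whence $|x_{t+1}(2) - x_t(2)| = |u_t(2) - \norm{u_t}_1 x_t(2)| \le \norm{u_t}_1 \le \beta/T$, since both $u_t(2)$ and $\norm{u_t}_1 x_t(2)$ lie in $[0,\norm{u_t}_1]$ (using $x_t \in \Delta^2$, so $x_t(2) \in [0,1]$).

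Next, I would set $h := |x_{T/2+1}(2) - 1/2|$ and note $h \le 1/2$. By the previous bound and the triangle inequality, $|x_{T/2+1+k}(2) - 1/2| \ge h - k\beta/T$ for all $0 \le k \le T/2-1$. Letting $K := \lfloor hT/(2\beta)\rfloor$, for $0 \le k \le K$ we have $k\beta/T \le h/2$, so $|x_{T/2+1+k}(2) - 1/2| \ge h/2$. Since $\sum_{t=1}^T c_t(x_t,u_t) = \sum_{t=T/2+1}^T |x_t(2) - 1/2|$ by the definition of the cost functions, summing the last bound over $k = 0, \ldots, K$ yields $\sum_{t=1}^T c_t(x_t,u_t) \ge (K+1)\cdot \tfrac{h}{2} \ge \tfrac{hT}{2\beta}\cdot\tfrac{h}{2} = \tfrac{h^2 T}{4\beta} \ge \tfrac{h^2 T}{8\beta} - 1$, as desired.

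The only subtlety — and the one place where the additive ``$-1$'' in the statement earns its keep — is making sure the indices $T/2+1+k$ used above never exceed $T$, i.e.\ that $K \le T/2 - 1$. Since $h \le 1/2$ we have $K \le T/(4\beta)$, so this holds once $T \ge 8\beta$ (and $\beta$ is taken to be a large constant). When $T < 8\beta$ there is nothing to prove, since the right-hand side satisfies $\tfrac{h^2 T}{8\beta} - 1 < h^2 - 1 \le -\tfrac{3}{4} < 0 \le \sum_{t=1}^T c_t(x_t,u_t)$. So I would split into these two cases at the very start, after which no further estimates are needed. I do not anticipate any genuine obstacle here; the argument is essentially just the control-magnitude bound plus a triangular-sum estimate.
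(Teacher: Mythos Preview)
Your proposal is correct and follows essentially the same approach as the paper: bound the one-step drift of $x_t(2)$ using the constraint $\norm{u_t}_1 \le \beta/T$, then sum the resulting triangular region of guaranteed cost. Your per-step bound $|x_{t+1}(2)-x_t(2)|\le \beta/T$ is in fact sharper than the paper's $2\beta/T$ (the paper bounds $\norm{x_{t+1}-x_t}_1$ instead), and your explicit case split on $T \lessgtr 8\beta$ plays the same role as the paper's use of $\max(0,\cdot)$ and the floor to absorb the boundary issue into the additive $-1$.
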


\begin{proof}
By definition of $\ML^0,\ML^1$ and the valid control set $\MI$, any valid trajectory in $\ML^b$ satisfies $\norm{x_t - x_{t+1}}_1 \leq 2\norm{u_t}_1 \leq 2\beta/T$ for all $T/2 < t < T$. It follows that $|x_{t+1}(2) - 1/2| \geq |x_t(2) - 1/2| - 2\beta/T$ for all such $t$, and hence
\begin{align*}
\sum_{t=T/2+1}^T |x_t(2) - 1/2| 
&\geq \sum_{n=1}^{T/2} \max\left(0, |x_{T/2+1}(2) - 1/2| - \frac{2\beta n}{T}\right) \\ 
&\geq \frac{|x_{T/2+1}(2) - 1/2|}{2} \cdot \left\lfloor \frac{T}{4\beta} |x_{T/2+1}(2)-1/2|\right\rfloor \\ 
&\geq \frac{T}{8\beta} |x_{T/2+1}(2) - 1/2|^2 - 1
\end{align*}
as claimed.
\end{proof}

\begin{proof}[Proof of \cref{thm:lb-simplex}]
Let $b \sim \Unif(\{0,1\})$ be an unbiased random bit, and let $(x_t,u_t)_{t=1}^T$ be the (random) trajectory produced by executing $\Alg$ on $\ML^b$. On the one hand, by \cref{eq:simplex-alg-guarantee} applied to $\ML^0$ and $\ML^1$, we have
\begin{align}
&\E\left[\sum_{t=1}^T c_t(x_t,u_t)\right]\nonumber\\ 
&\qquad\leq f(1,1,\beta/T,T) + \frac{1}{2}\left(\inf_{K \in \MK(\MI)} \sum_{t=1}^T c_t(x_t^{\ML^0,K},u_t^{\ML^0,K}) + \inf_{K \in \MK(\MI)} \sum_{t=1}^T c_t(x_t^{\ML^1,K},u_t^{\ML^1,K})\right) \nonumber\\ 
&\qquad\leq f(1,1,\beta/T,T) + \frac{T}{2\beta} e^{-\beta/2}\label{eq:simplex-alg-cost-ub}
\end{align}
where the first inequality uses the definition of $\MI$ and the fact that the cost functions $(c_t)_t$ are convex and $1$-Lipschitz per \cref{asm:convex-cost}; and the second inequality is by \cref{lemma:simplex-lb-comparator-perf}. On the other hand, by \cref{lemma:simplex-alg-triangle-ineq}, we have 
\begin{align}
&1+\E\left[\sum_{t=1}^T c_t(x_t,u_t) \right] \nonumber\\ 
&\geq \frac{T}{8\beta}\E[(x_{T/2+1}(2) - 1/2)^2] \nonumber\\ 
&= \frac{T}{8\beta} \E\left[\left(\frac{(1-\norm{u_{T/2}}_1)x_{T/2}(2) + u_{T/2}(2)}{2} - \frac{1+b}{4}\right)^2\right] \nonumber\\ 
&\overset{(\st)}{=} \frac{T}{8\beta} \left( \frac{1}{2} \E\left[\left(\frac{(1-\norm{u_{T/2}}_1)x_{T/2}(2) + u_{T/2}(2)}{2} - \frac{1}{4}\right)^2\right] + \frac{1}{2}\E\left[\left(\frac{(1-\norm{u_{T/2}}_1)x_{T/2}(2) + u_{T/2}(2)}{2} - \frac{1}{2}\right)^2\right]\right) \nonumber\\ 
&\geq \frac{T}{16\beta}\left(
\left(\E\left[\frac{(1-\norm{u_{T/2}}_1)x_{T/2}(2) + u_{T/2}(2)}{2}\right] - \frac{1}{4}\right)^2 + \left(\E\left[\frac{(1-\norm{u_{T/2}}_1)x_{T/2}(2) + u_{T/2}(2)}{2}\right] - \frac{1}{2}\right)^2
\right) \nonumber\\ 
&\geq \frac{T}{1024\beta}.\label{eq:simplex-alg-cost-lb}
\end{align}
where the key equality $(\st)$ uses the fact that $\ML^0,\ML^1$ are identical up until and including time $T/2$, and hence $(x_{T/2},u_{T/2})$ is independent of $b$. Comparing \cref{eq:simplex-alg-cost-lb} with \cref{eq:simplex-alg-cost-ub} yields that \[f(1,1,\beta/T,T) \geq \frac{T}{1024\beta} - 1 - \frac{T}{2\beta} e^{-\beta/2} = \Omega(T)\]
for any sufficiently large constant $\beta$.
\end{proof}

\section{Implementation details}
\label{sec:experimental-details}

In this section we describe the version of $\GPCS$ (\cref{alg:gpc}) implemented for our experiments. First, the dynamical systems in our experiments are non-linear. The $\GPCS$ algorithm is still practical and applicable in such settings \--- concretely, any setting with update rule \cref{eq:general-dynamics} \--- but of course several modifications/generalizations must be made: 

\begin{enumerate}
    \item The algorithm takes as input the function $f$ describing the dynamics in \cref{eq:general-dynamics}, rather than transition matrices $A,B$. Accordingly, in \cref{line:compute-wt}, the expression $(1-\norm{u_t}_1)Ax_t+Bu_t$ (which exactly corresponds to the noiseless update rule in a simplex LDS) is replaced by $f(x_t,u_t)$. Moreover, in \cref{line:choose-lt}, the hypothetical iterates $x_t(p,M^{[1:H]}), u_t(p,M^{[1:H]})$ under policy $\pi^{p,M^{[1:H]}}$ are computed using the update rule $f$.
    \item The algorithm directly takes as input a learning rate $\eta$ for the mirror descent subroutine, rather than the mixing time bound $\tau$. In our experiments, we always set $\eta := \sqrt{dH\ln(H)}/(2\sqrt{T})$.
    \item We always parametrize our systems so that the valid control set is the space of distributions $\Delta^d$. Hence, the domain used for mirror descent is $\Xgpc[d,H,1,1]$. Mirror descent is implemented by exponential weights updates with learning rate $\eta$ and uniform initialization.

\end{enumerate}
We remark that the above (natural) modifications to $\GPCS$ are analogous to the modifications to $\GPC$ made by \citep{agarwal2021regret} to perform online control for nonlinear systems.

\section{Experiments: Controlled SIR model}
\label{sec:sir-appendix}

In this section, we provide additional experiments in the controlled SIR model. Specifically, in \cref{sec:sir-perturbations} we provide experimental evaluations when there are perturbations to the system (i.e. $\gamma_t$ is not always $0$ in \cref{eq:general-dynamics}). In \cref{sec:sir-costs} we vary the parameters of the SIR model.

\subsection{Control in presence of perturbations} \label{sec:sir-perturbations}
We experiment with the SIR system \cref{eq:sir-discrete} with the following parameters:
\begin{align*}
\beta=0.5, \ \ \ \theta=0.03, \ \ \ \xi=0.005,
\end{align*}
and cost function given by:
\[
c_t(x_t,u_t)=c_3 \cdot x_t(2)^2 + c_2 \cdot x_t(1) u_t(1).
\]

We test the performance of our algorithm on $(c_2,c_3)=(1,5)$. In addition, we add a perturbation sequence $w_t=[0, 1, 0], \forall 1\le t\le 200$. $\gamma_t\sim 0.01\cdot \mathrm{Ber}(0.2), \forall 1\le t\le 200$.

\cref{fig:sir-perturbations} shows comparison of the costs over $T=200$ time steps incurred by \texttt{GPC-simplex} to that of always executing $u_t=[1,0]$ (full prevention) and that of always executing $u_t=[0,1]$ (no prevention). In addition to cost, we plot the value of $u_t(2)$ over time, representing how relaxed prevention measure evolves over time according to \texttt{GPC-simplex}. 

\begin{figure}[th]
\centering
\includegraphics[width=1.0\linewidth]{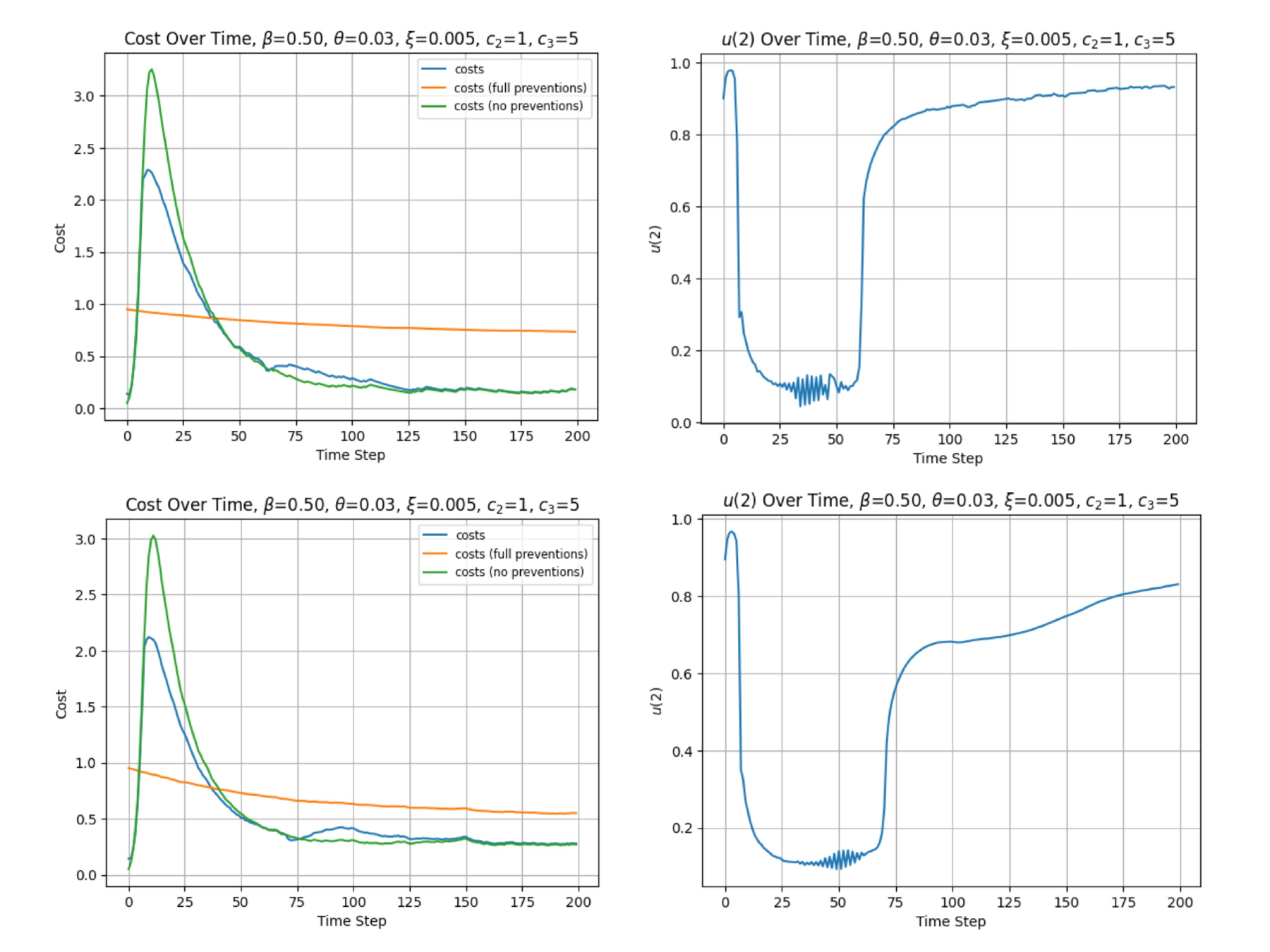}
\caption{SIR with perturbations. $T=200$. Initial state $x_1=[0.9, 0.1,0]$. $\GPCS$ parameter $H=5$. \textbf{Top}: Perturbation sequence: $w_t=[0, 1, 0], \forall 1\le t\le 200$. $\gamma_t\sim 0.01\cdot \mathrm{Ber}(0.2), \forall 1\le t\le 200$. \textbf{Bottom}: Perturbation sequence: $\forall t$, $w_t$ is a normalized uniform random vector. $\gamma_t=0.01$, $\forall 1\le t\le 200$. }\label{fig:sir-perturbations}
\end{figure}

\subsection{Alternative parameter settings}\label{sec:sir-costs}

We experiment with two SIR systems with different set of parameters. The first uses the following parameters:
\begin{align*}
\beta=0.5, \ \ \ \theta=0.03, \ \ \ \xi=0.005,
\end{align*}
whereas the second uses the following parameters:
\begin{align*}
\beta=0.3, \ \ \ \theta=0.05, \ \ \ \xi=0.001.
\end{align*}
In both cases, the cost function is:
\[
c_t(x_t,u_t)=c_3 \cdot x_t(2)^2 + c_2 \cdot x_t(1) u_t(1).
\]

For both experiments, we test the performance of our algorithm on different choices of parameters for the cost function. In particular, we test the parameter tuples: \[(c_2,c_3) \in \{(1,20),(1,10),(1,5),(1,1)\}.\]
\cref{fig:sir-full-costs,fig:sir-full-params} show comparison of the costs over $T=200$ time steps incurred by $\GPCS$ to that of always executing $u_t=[1,0]$ (full prevention) and that of always executing $u_t=[0,1]$ (no prevention). Specifically, \cref{fig:sir-full-costs} uses the first set of parameters above, and \cref{fig:sir-full-params} uses the second set. In addition to cost, we plot the value of $u_t(2)$ over time, representing how the effective transmission rate evolves over time according to $\GPCS$. 

We notice that our algorithm consistently outperforms the two baselines. No matter how we set the parameters, our algorithm will outperform the full-intervention baseline since its cumulative cost grows linearly with time.
As $c_3$ gets larger, the gap between our algorithm and the no-intervention baseline becomes smaller, since the optimal policy with a high cost on control is basically playing no control.

\begin{figure}[t]
\centering
\includegraphics[width=1.0\linewidth]{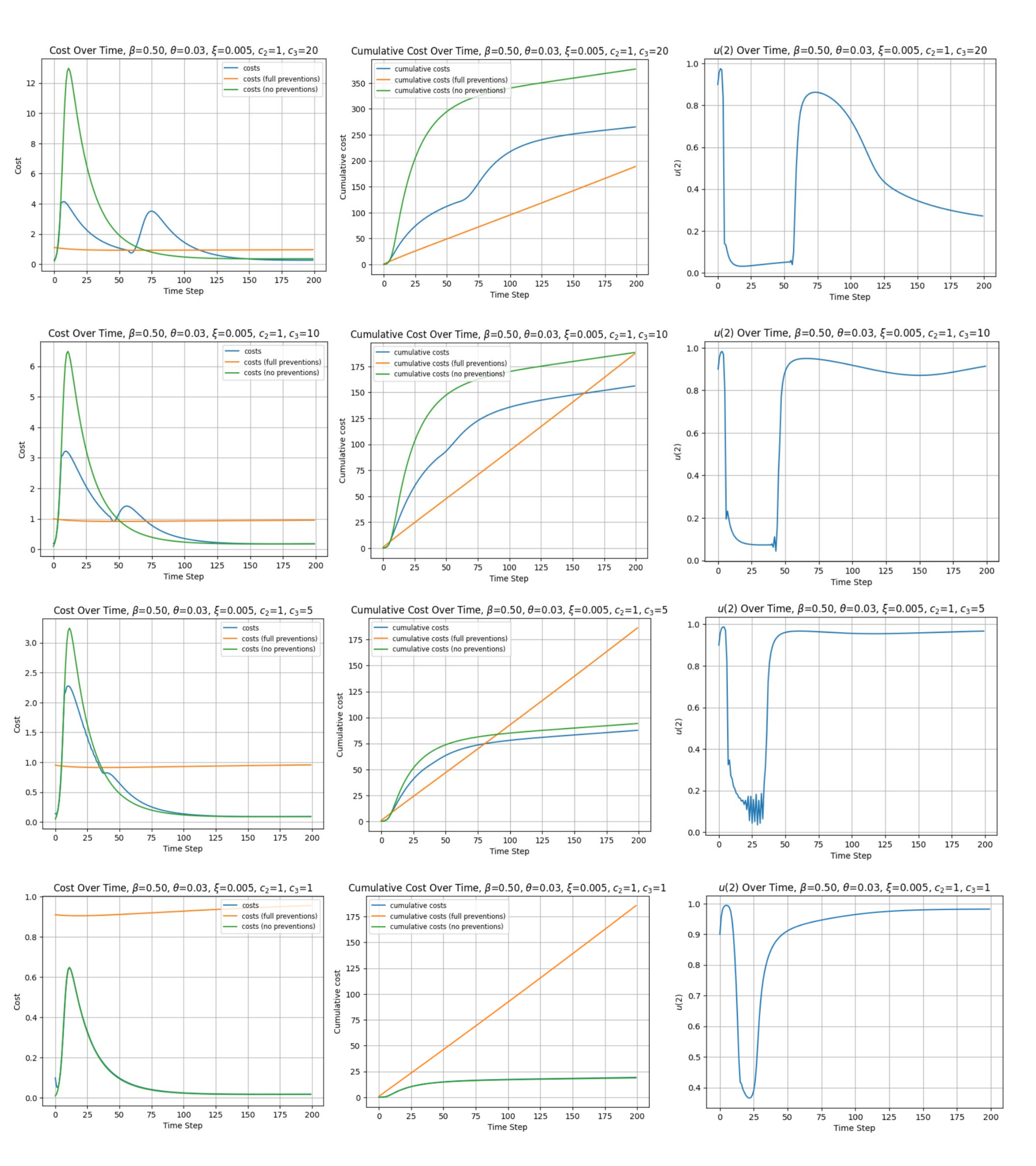}
\caption{Control with costs: control over $T=200$ steps. $\gamma_t=0$, $\forall t$. SIR parameters: $\beta=0.5, \theta=0.03, \xi=0.005$.  Initial state $x_1=[0.9, 0.1, 0]$. $\GPCS$ parameters: $H=5$. \textbf{Left}: instantaneous cost over time, compared with that of no control (green) and full control (orange). \textbf{Middle}: cumulative cost over time. \textbf{Right}: $u_t(2)$ output by $\GPCS$ over time. $(c_2,c_3)$ values (from top to bottom rows): $(1, 20), (1,10), (1,5), (1,1)$.}
\label{fig:sir-full-costs}
\end{figure}

\begin{figure}[t]
\centering
\includegraphics[width=1.0\linewidth]{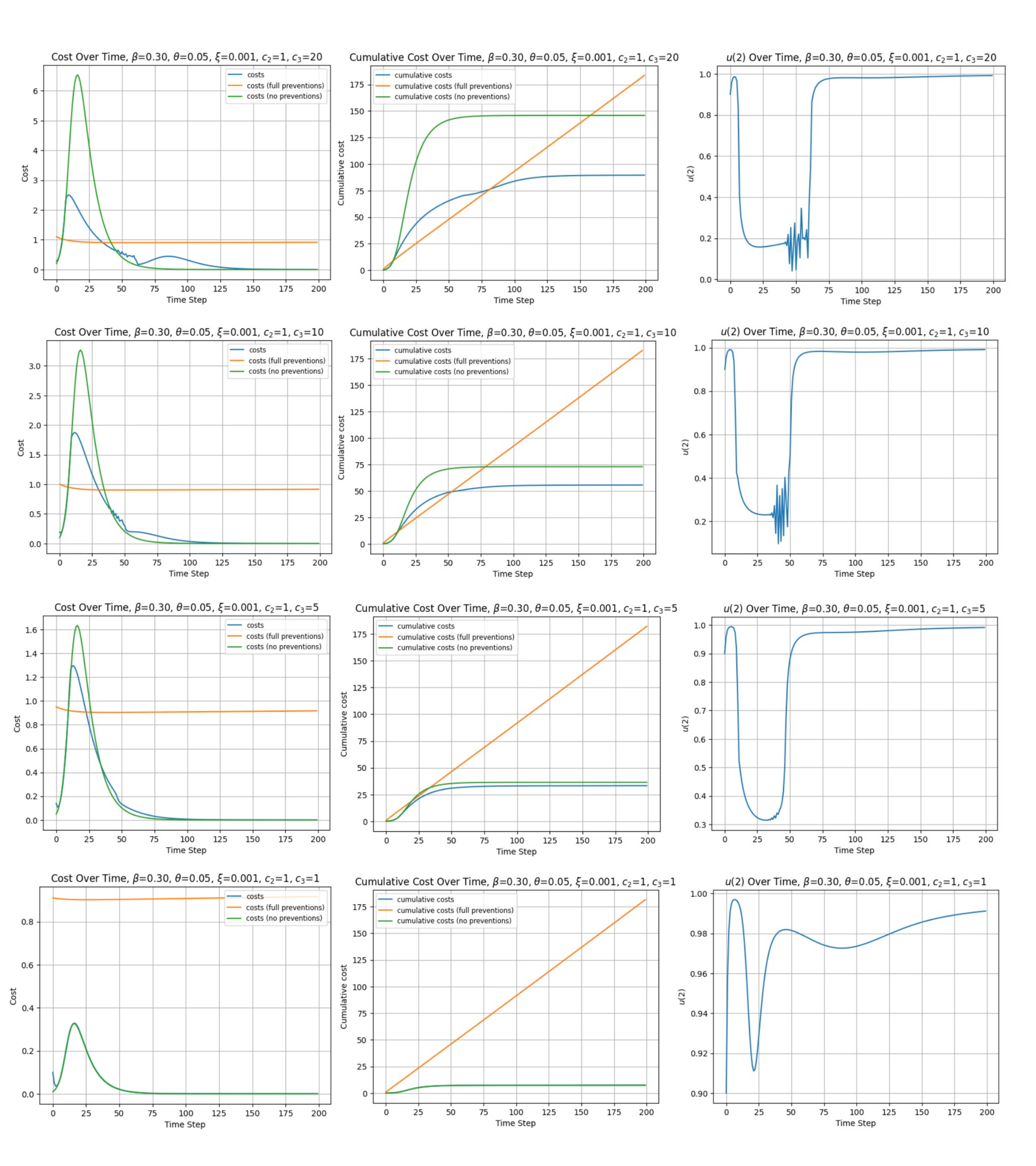}
\caption{Control with costs: control over $T=200$ steps. $\gamma_t=0$, $\forall t$. SIR parameters: $\beta=0.3, \theta=0.05, \xi=0.001$.  Initial state $x_1=[0.9, 0.1, 0]$. $\GPCS$ parameters: $H=5$. \textbf{Left}: instantaneous cost over time, compared with that of no control (green) and full control (orange). \textbf{Middle}: cumulative cost over time. \textbf{Right}: $u_t(2)$ output by $\GPCS$ over time. $(c_2,c_3)$ values (from top to bottom rows): $(1, 20), (1,10), (1,5), (1,1)$.}
\label{fig:sir-full-params}
\end{figure}

\section{Experiments: Controlling hospital flows}
\label{sec:hospital-appendix}

In this section, we provide more details regarding the setup and experiments in \cref{sec:hospital}.

The continuous time dynamical system considered by \citep{ketcheson2021optimal} is the following: let $S(t),I(t)$ denote the susceptible and infected fraction of the population at time $t$, and let $\sigma(t)$ denote the control at time $t$. The system has some initial state $(S(0),I(0))$ in the set
\begin{align*}
\mathcal{D} &:=\{(x_0,y_0):x_0>0,y_0>0,x_0+y_0\le 1\},
\end{align*}
reflecting the constraint that $S(0), I(0)$ represent disjoint proportions of a population, and the system evolves according to the differential equation
\begin{align}
S'(t)&=-\gamma \sigma(t)I(t)S(t),\\
I'(t)&=\gamma \sigma(t)I(t)S(t) - \gamma I(t).
\label{eq:ketcheson}
\end{align}
where $\gamma>0$ is some fixed model parameter, and the control $\sigma(t)$ models a non-pharmaceutical intervention (NPI) inducing a time-dependent reproduction number $\sigma(t)\in[0,\sigma_0]$, where $\sigma_0$ is the base reproduction number in the absence of interventions. In most examples in \citep{ketcheson2021optimal}, including the example of controlling hospital flows, the parameter settings $\sigma_0=3$ and $\gamma=0.1$ are used. This means that the natural discretization of \cref{eq:ketcheson} is in fact equivalent to \cref{eq:sir-discrete} with transmission rate $\beta := \gamma \sigma_0 = 0.3$, recovery rate $\theta := \gamma = 0.1$, loss-of-immunity rate $\xi := 0$, no perturbations (i.e. $\gamma_t = 0$ for all $t$), and control
\begin{align*}
u_t := \left(1-\frac{\sigma(t)}{\sigma_0}, \frac{\sigma(t)}{\sigma_0}\right),
\end{align*}
at each time $t$. 

The goal in \citep{ketcheson2021optimal} is the following: given an initial state $(S(0), I(0))$ along with a horizon length $T > 0$ and the parameters listed above, choose an admissible control function $\sigma:[0,T]\rightarrow[0,\sigma_0]$ to minimize the loss
\begin{align*}
J:= -S_{\infty}(S(T),I(T),\sigma_0)+\int_0^T L(S(t), I(t), \sigma(t))dt,
\end{align*}
where $L(S(t),I(t),\sigma(t))$ is the instantaneous cost at time $t$, and the extra term $S_{\infty}(S(T),I(T),\sigma_0)$ incentivizes the state of the system at time $T$ to lead to a favorable long-term trajectory (in the absence of any interventions after time $T$). In \citep{ketcheson2021optimal}, the following formula for $S_\infty$ is given; see that paper for further discussion:
\begin{align*}
S_{\infty}(S,I,\sigma_0)=\frac{W_0(-\sigma_0Ie^{-\sigma_0(S+I)})}{\sigma_0}.
\end{align*}
The instantaneous cost is modeled by \citep{ketcheson2021optimal} as follows:
\begin{align*}
L(S(t),I(t),\sigma(t))=c_2\cdot \left(1-\frac{\sigma(t)}{\sigma_0}\right)^2 + \frac{c_3\cdot(I(t)-y_{\max})}{1+e^{-100(I(t)-y_{\max})}},
\end{align*}
where $c_2,c_3$ are some parameters determining the  cost of preventing disease transmission and the cost of a medical surge (i.e. when the proportion of infected individuals exceeds $y_{\max}$). Notice that the second term above will indeed be very small in magnitude unless $I(t)$ exceeds $y_{\max}$. 

Note that $\GPCS$ cannot directly handle end-of-trajectory losses such as the term $S_\infty(S(T),I(T),\sigma_0)$. Thus, in our evaluation of $\GPCS$ on this system, we instead incorporate $S_\infty$ into the instantaneous cost functions. Concretely, we use the following cost function at time $t$:
\begin{align*}
c_t(x_t, u_t)=-S_{\infty}(x_t(1),x_t(2),\sigma_0)+c_2\cdot u_t(1)^2 + \frac{c_3(x_t(2)-y_{\max})}{1+e^{-100(x_t(2)-y_{\max})}}.
\end{align*}
Recall that we write $x_t=(S_t, I_t, R_t)$ and $u_t(1) = 1-\sigma(t)/\sigma_0$, so modulo the addition of $S_\infty$ to all times $t<T$ and the conversion from continuous time to discrete time, our loss is analogous to that of \citep{ketcheson2021optimal}. 

\section{Experiments: Controlled replicator dynamics}
\label{sec:replicator}

The \emph{replicator equation} is a basic model in evolutionary game theory that describes how individuals in a population will update their strategies over time based on their payoffs from repeatedly playing a game with random opponents from the population \citep{cressman2014replicator}. The basic principle is that strategies (or traits) that perform better than average in a given environment will, over time, increase in frequency within the population, whereas strategies that perform worse than average will become less common. 

Formally, consider a normal-form two-player game with $d$ possible strategies and payoff matrix $M \in \BR^{d \times d}$. A population at time $t$ is modelled by the proportion of individuals that currently favor each strategy, and thus can be summarized by a distribution $x(t) \in \BR^d$. The \emph{fitness} of an individual playing strategy $i\in [d]$ in a population with strategy distribution $x \in \Delta^d$ is defined to be \[\text{fitness}_{M,x}(i) := e_i^\tr M x,\] where $e_i \in \BR^d$ is the indicator vector for strategy $i$. That is, $\text{fitness}_{M,x}(i)$ is simply the expected payoff of playing strategy $i$ against a random individual from the population. The replicator dynamics posit that the population's distribution over strategies $x(t)$ will evolve according to the following differential equation:
\begin{equation}
\frac{dx_i(t)}{dt} := x_i(t) \cdot \left(\text{fitness}_{M,x(t)}(i) - \E_{j\sim x} \text{fitness}_{M,x(t)}(j)\right) = x_i(t) \cdot (e_i^\tr M x(t) - x(t)^\tr M x(t)).\label{eq:replicator-ct}
\end{equation}
It is straightforward to check that this differential equation preserves the invariant that $x(t)$ is a distribution. This equation can induce various types of dynamics depending on the initialization and payoff matrix $M$: the distribution may converge to an equilibrium, or it may cycle, or it may even exhibit chaotic behavior \citep{cressman2014replicator}. In this study we focus on a simple (time-discretized) replicator equation \--- namely, the equation induced by a generalized Rock-Paper-Scissors game \--- when the payoffs may be \emph{controlled}.

\paragraph{Controlled Rock-Paper-Scissors.} The standard Rock-Paper-Scissors game has $d=3$ and payoff matrix
\[
M:=\begin{bmatrix}
0 & 1 & -1 \\
-1 & 0 & 1\\
1 & -1 &0
\end{bmatrix}.
\]
Consider a setting where the game is run by an external agent that is allowed to set the payoffs. For simplicity, we assume that the game remains zero-sum and the rewards sum to $1$, so the payoff matrix is now
\[
M(u):=\begin{bmatrix}
0 & u_1 & -u_3 \\
-u_1 & 0 & u_2\\
u_3 & -u_2 &0
\end{bmatrix}
\]
for a control vector $u \in \Delta^3$. The discrete-time analogue of the replicator equation with this controlled payoff matrix $M(u)$ is
\begin{equation}
x_{t+1} = f(x_t, u_t) := x_t + \eta \begin{bmatrix} x_{t1} \cdot e_1^\tr M(u_t) x_t \\ x_{t2} \cdot e_2^\tr M(u_t) x_t \\ x_{t3} \cdot e_3^\tr M(u_t) x_t \end{bmatrix}\label{eq:rps-dynamics}
\end{equation}
where $x_t,u_t \in \Delta^3$ are the population distribution and control at time $t$ respectively, and $\eta \in (0,1)$ is the rate of evolution. Note that the term $x_t M(u_t) x_t$ does not need to appear in \cref{eq:rps-dynamics} because $M(u_t)$ is always zero-sum. Also, since $\eta \leq 1$ and all entries of $M(u_t)$ are at most $1$ in magnitude, if $x_t$ is a distribution then $x_{t+1}$ will remain a distribution. We omit noise in this study, so \cref{eq:rps-dynamics} is a special case of \cref{eq:general-dynamics} with $\gamma_t = 0$ for all $t$.

\paragraph{Parameters and cost function.} We define a (nonlinear) dynamical system with uniform initial state $x_1 = (1/3, 1/3, 1/3)$, update rule \cref{eq:rps-dynamics} with $\eta = 1/4$, and $T=100$ timesteps. We consider the fixed cost function
\[c(x_t, u_t) := x_{t1}^2,\]
which can be thought of as penalizing the strategy ``rock''.

\paragraph{Results.} We compare $\GPCS$ (implemented as described in \cref{sec:experimental-details}) with a baseline control that simply uses the standard Rock-Paper-Scissors payoff matrix (up to scaling) induced by $u = (1/3, 1/3, 1/3) \in \Delta^3$. As shown in \cref{fig:rps-losses}, $\GPCS$ (shown in blue) significantly outperforms this baseline (shown in green), learning to alter the payoff in such a way that the population tends to avoid the ``rock'' strategy. The evolution of the dsitribution over time under $\GPCS$ is shown in \cref{fig:rps-pop}.

For completeness, we also compare $\GPCS$ against the ``Best Response'' strategy (shown in dashed orange) that essentially performs $1$-step optimal control, using the fact that the cost function for this example is time-invariant. While both controllers eventually learn a good policy, \cref{fig:rps-losses} clearly shows that Best Response learns faster. However, it is strongly exploiting the time-invariance of the cost function, since in general, this algorithm computes the best response with respect to the \emph{previous} cost function rather than the \emph{current} cost function, which it does not observe until after playing a control. In \cref{fig:rps-random-cost}, we consider a slightly modified system where the cost function includes a cost on the control with probability $1/2$. In this setting, we see that $\GPCS$ still eventually learns a good policy, whereas Best Response and the default control incur large costs through the trajectory. Best Response in particular suffers greatly due to the time-varying nature of the costs. 

\begin{figure*}[t!]
    \centering
    \begin{subfigure}[t]{0.5\textwidth}
        \centering
        \includegraphics[width=\textwidth]{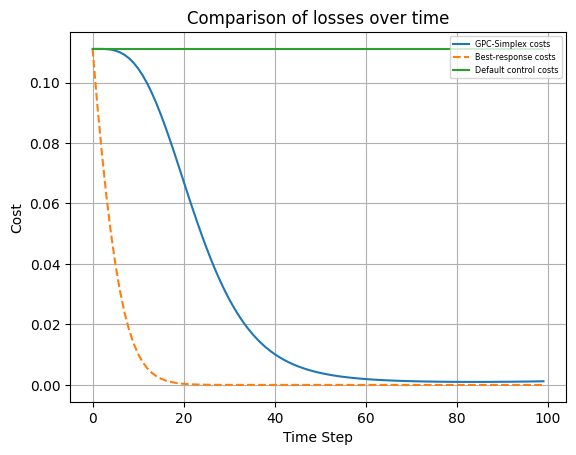}
        \caption{Instantaneous cost achieved by $\GPCS$ over time, compared to default Rock-Paper-Scissors control and Best Response control (dashed orange).}\label{fig:rps-losses}
    \end{subfigure}%
    ~ 
    \begin{subfigure}[t]{0.5\textwidth}
        \centering
        \includegraphics[width=\textwidth]{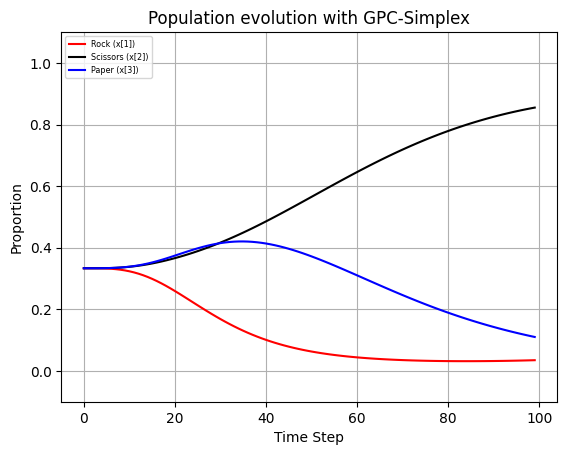}
        \caption{Proportions of the population playing strategies ``rock'', ``paper'', and ``scissors'' over time under control by $\GPCS$.}\label{fig:rps-pop}
    \end{subfigure}
    \caption{Experimental results for dynamical system with horizon $T=100$, uniform initial state, update rule \cref{eq:rps-dynamics} with $\eta=1/4$, no perturbations, and time-invariant cost function $c_t(x_t,u_t) = x_{t1}^2$ for all times $t$. $\GPCS$ was implemented as described in \cref{sec:experimental-details}. The Best Response controller at each time $t$ picks the control $u$ that minimizes $c_{t-1}(f(x_t,u),u)$. The default controller picks the uniform control $u = (1/3,1/3,1/3)$.}
\end{figure*}

\begin{figure}
    \centering
    \includegraphics[width=0.5\textwidth]{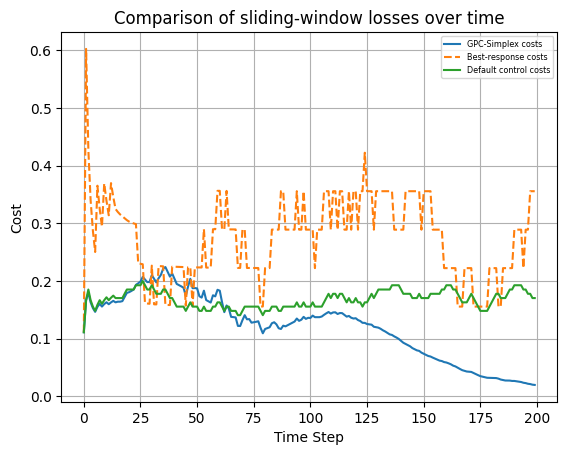}
    \caption{Experimental results for dynamical system with horizon $T = 200$, uniform initial state, update rule \cref{eq:rps-dynamics} with $\eta=1/4$, no perturbations, and random cost function which is either $c_t(x_t,u_t) = x_{t1}^2$ or $c_t(x_t,u_t) = x_{t1}^2 + u_{t3}^2$ with equal probability. $\GPCS$ was implemented as described in \cref{sec:experimental-details}. The Best Response controller at each time $t$ picks the control $u$ that minimizes $c_{t-1}(f(x_t,u),u)$. The default controller picks the uniform control $u = (1/3,1/3,1/3)$. The plot shows the cost achieved by $\GPCS$ over time, compared to default Rock-Paper-Scissors control and Best Response control (dashed orange). Due to the non-continuity induced by the random cost functions, the loss plotted at time $t$ is the average loss of the controller across the last $\min(t,15)$ time steps.}
    \label{fig:rps-random-cost}
\end{figure}

\section{Discussions}
\subsection{Broader impacts}
Our work provides a robust algorithm with theoretical justifications for practical control problems that might be applicable to problems such as disease control. The experiments performed are preliminary. More careful empirical verification is necessary before our algorithm can be responsibly implemented in high-impact scenarios. Excluding the scenario of ill intention, we do not anticipate any negative social impact. 

\subsection{Computational Resources for Experiments}
The experiments in this work are simulations and relatively small-scaled. They were run on Google Colab with default compute resources. For each experiment, the time required to roll-out one trajectory using $\GPCS$ was less than $10$ minutes.

\end{document}